\newcommand{\op}[1]{\Large #1}
\crefname{section}{Sec.}{Sec.}
\crefname{appendix}{App.}{App.}
\crefname{equation}{Eq.}{Eqs.}
\crefname{definition}{Def.}{Defs.}
\crefname{figure}{Fig.}{Figs.}
\crefname{tabular}{Tab.}{Tabs.}
\crefname{table}{Tab.}{Tabs.}
\crefname{algorithm}{Alg.}{Algs.}
\crefname{theorem}{Theorem}{Theorems}
\crefname{lemma}{Lemma}{Lemmas}
\crefname{proposition}{Proposition}{Propositions}
\crefname{corollary}{Corollary}{Corollaries}
\crefname{assumption}{Assumption}{Assumptions}
\newtheorem{definition}{Definition}
\newtheorem{lemma}{Lemma}
\newtheorem{theorem}{Theorem}
\newtheorem{assumption}{Assumption}
\theoremstyle{remark}
\newenvironment{reptheorem}[1]{%
  \addtocounter{theorem}{-1}%
  \begin{theorem}%
}{%
  \end{theorem}%
}
\newenvironment{replemma}[1]{%
  \addtocounter{lemma}{-1}%
  \begin{lemma}%
}{%
  \end{lemma}%
}
\icmltitlerunning{Dynamics Reveals Structure: Challenging the Linear Propagation Assumption}
\begin{document}

\twocolumn[
\icmltitle{Dynamics Reveals Structure: Challenging the Linear Propagation Assumption}



\icmlsetsymbol{equal}{*}

\begin{icmlauthorlist}
\icmlauthor{Hoyeon Chang}{kaist}
\icmlauthor{Bálint Mucsányi}{tuebingen}
\icmlauthor{Seong Joon Oh}{kaist}
\end{icmlauthorlist}

\icmlaffiliation{kaist}{KAIST}
\icmlaffiliation{tuebingen}{University of Tübingen}

\icmlcorrespondingauthor{Hoyeon Chang}{retapurayo@kaist.ac.kr}
\icmlcorrespondingauthor{Seong Joon Oh}{coallaoh@gmail.com}

\icmlkeywords{Machine Learning, ICML}

\vskip 0.3in
]



\printAffiliationsAndNotice{}  

\begin{abstract}
Neural networks adapt through first-order parameter updates, yet it remains unclear whether such updates preserve logical coherence.
We investigate the geometric limits of the Linear Propagation Assumption (LPA), the premise that local updates coherently propagate to logical consequences.
To formalize this, we adopt relation algebra and study three core operations on relations: negation flips truth values, converse swaps argument order, and composition chains relations.
For negation and converse, we prove that guaranteeing direction-agnostic first-order propagation necessitates a tensor factorization separating entity-pair context from relation content.
However, for composition, we identify a fundamental obstruction.
We show that composition reduces to conjunction, and prove that any conjunction well-defined on linear features must be bilinear.
Since bilinearity is incompatible with negation, this forces the feature map to collapse.
These results suggest that failures in knowledge editing, the reversal curse, and multi-hop reasoning may stem from common structural limitations inherent to the LPA.
\end{abstract}

\section{Introduction}
\label{sec:introduction}

Modern machine learning systems evolve through a long trajectory, spanning pretraining, continual learning~\citep{wu2024continual}, and unlearning~\citep{jang-etal-2023-knowledge}.
Throughout this lifecycle, the central operation for adapting to new information is the first-order parameter update.
Ideally, this adaptation should be rational: when a model revises its belief about a fact, its logically related beliefs should update accordingly to maintain coherence.
However, maximizing the likelihood of a target fact is fundamentally an optimization process, distinct from rational belief revision~\citep{hase2024fundamental}.
Consequently, it remains unclear whether the local geometry of gradient-based updates can inherently guarantee such logical consistency without inducing contradictions.

Encouraged by the impressive capabilities of current Large Language Models (LLMs) in logical reasoning during inference~\citep{kojima2022large, achiam2023gpt}, it is tempting to assume that such coherence is preserved under local, first-order parameter updates.
This premise, which we term the \textbf{Linear Propagation Assumption (LPA)}, often serves as a foundational design principle in current techniques.
For instance, prominent knowledge editing methods explicitly formulate the update as a constrained linear least-squares problem, treating network layers as linear associative memories~\citep{bau2020rewriting, meng2022locating, meng2022mass}.
Beyond editing, this implicit assumption also appears in continual learning strategies that aim to add tasks without forgetting~\citep{lopez2017gradient} and unlearning techniques designed to erase specific knowledge~\citep{jang-etal-2023-knowledge}.

However, the validity of the LPA is questionable, as empirical failures of LLMs on logical coherence show.
For instance, LLMs exhibit the ``reversal curse," failing to generalize to reverse relationship~\citep{berglund2023reversal}, and struggle with compositional reasoning tasks~\citep{dziri2023faith}.
Since these representations are constructed through first-order updates, such persistent failures suggest that the update mechanism may not reliably imprint the necessary logical structure.
This limitation becomes explicitly visible in knowledge editing, where even carefully targeted updates consistently fail to propagate to logical consequences such as negations or implications~\citep{zhong-etal-2023-mquake, cohen2024evaluating, liu2025modeleditingbuiltsand}.
These phenomena suggest a shared structural issue: the geometry of first-order updates imposes structural constraints that are inherently ill-suited for systematic logical operations.

In this work, we rigorously investigate this hypothesis by asking: \textbf{What structural constraints are imposed on a model's representation if we demand that local linear updates respect the logical structure of relational knowledge?}
To answer this, we formalize relational knowledge using relation algebra~\citep{givant2006calculus}, which builds relational knowledge via three fundamental operations: \textbf{negation} (flipping truth values), \textbf{converse} (swapping argument order), and \textbf{composition} (chaining relations).
Following Tarski’s invariance criterion for logical notions~\citep{tarski1986logical, sher2008tarski}, we treat entity renamings as symmetries of the query space and ask what constraints such symmetries impose on the linearized geometry.
This provides a principled way to test whether first-order parameter updates can support systematic propagation of logical operations.

\begin{figure}[t!]
\centering
{\footnotesize

\providecommand{\op}[1]{\texttt{#1}}

\tikzset{
    data/.style={rectangle, rounded corners, draw, thick, minimum height=2.1cm, text width=2.0cm, align=center, inner sep=2pt},
    arrow/.style={-Latex, thick}
}

\begin{subfigure}[b]{0.48\linewidth}
    \centering
    \begin{tikzpicture}[scale=0.65, >=Latex]
        \shade[bottom color=gray!30, top color=white, opacity=0.6] 
            (-2.6,-0.5) to[out=30, in=150] (2.6,-0.5) 
            to[out=90, in=-30] (2.2, 2.5) 
            to[out=180, in=0] (-2.2, 2.5) 
            to[out=210, in=90] (-2.6, -0.5);
        \draw[gray!50, thin] (-2.2, 2.5) to[out=210, in=90] (-2.6, -0.5) to[out=30, in=150] (2.6,-0.5); 
        
        \draw[gray!40, very thin] (-1.8, 0.2) to[out=20, in=160] (1.8, 0.2);
        \draw[gray!40, very thin] (-2.0, 1) to[out=20, in=160] (2.0, 1);

        \node[gray!90, font=\scriptsize] at (0, -0.5) {Parameter Manifold};

        \node[
            draw=gray!50,        
            fill=white,          
            rounded corners=2pt, 
            inner sep=3pt,       
            font=\scriptsize
        ] at (0, 4) {After update: $\Delta s(\neg q) = -\Delta s(q)$};

        \fill[blue!5, opacity=0.7] (-2.1, 0.2) -- (1.1, 0.2) -- (1.9, 2.3) -- (-1.3, 2.3) -- cycle;
        \draw[blue!20, thin] (-2.1, 0.2) -- (1.1, 0.2) -- (1.9, 2.3) -- (-1.3, 2.3) -- cycle;
        
        \node[blue!80, font=\scriptsize] at (0.2, 2.6) {Tangent Space};

        \coordinate (theta) at (0, 1.2);
        \fill[black] (theta) circle (2pt) node[below=3pt] at (-0.2,2) {\small{$\theta$}};

        
        \draw[->, thick, blue!80!black] (theta) -- ++(1.5, 0.4) 
            node[right, font=\scriptsize, align=left, inner sep=1pt] at (-0.7,2) {$\phi_q$ {\tiny $=\nabla s_\theta(q)$}};
            
        \draw[->, thick, red!80!black] (theta) -- ++(-1.5, -0.4) 
            node[left, font=\scriptsize, inner sep=1pt] at (0.3,0.5) {$\phi_{\neg q} \approx -\phi_q$};

        \draw[->, dashed, thick, black!100] (theta) -- ++(1.2, -0.25) 
            node[below right=-2pt, font=\scriptsize] {\tiny{$\theta + \Delta \theta$}};

        \coordinate (theta) at (1.2, 0.95);
        \fill[black] (theta) circle (2pt);
        
    \end{tikzpicture}
\end{subfigure}
\hfill
\begin{subfigure}[b]{0.48\linewidth}
    \centering
    \begin{tikzpicture}[scale=0.65, transform shape, node distance=1.5cm and 1.5cm]
        \node[data] (query) {
            Query \\ ($q$) \\[5pt]
            \textit{``T-Rex exists"}
        };

        \node[data, right=of query] (negation) {
            Negated Query \\ ($\neg q$) \\[5pt]
            \textit{``T-Rex does not exist"}
        };

        \node[data, below=of query] (feature) {
            Gradient \\ ($\phi_q$) \\[5pt]
            \textbf{[} $\rightarrow$ \textbf{]}
        };

        \node[data] (neg_feature) at (negation |- feature) {
            Inverted Gradient \\ ($-\phi_q$) \\[5pt]
            \textbf{[} $\leftarrow$ \textbf{]}
        };


        \draw[arrow] (query) -- node[above] {Log. Not} node[below] {\op{$\neg$}} (negation);
        \draw[arrow] (query) -- node[left, align=center] {Linearize} node[right] {\op{$\phi$}} (feature);
        \draw[arrow] (negation) -- node[left, align=center] {Linearize} node[right] {\op{$\phi$}} (neg_feature);
        \draw[arrow] (feature) -- node[above] {Sign Flip} node[below] {\op{$-I$}} (neg_feature);
    \end{tikzpicture}
\end{subfigure}

\caption{\textbf{Geometric interpretation of logical equivariance.}
\textbf{(Left)} A query $q$ is associated with a score $s_\theta(q)$ (e.g., log probability), and its gradient $\phi_q = \nabla s_\theta(q)$ as a feature.
Under the LPA, a local parameter update $\Delta\theta$ induces a score change approximated by the inner product: $\Delta s(q) = s_{\theta+\Delta\theta}(q)-s_\theta(q)\approx \langle \phi_q, \Delta\theta \rangle$.
Logical consistency under direction-agnostic first-order propagation requires that any local parameter change enhancing $q$ suppresses $\neg q$ (i.e., $\Delta s(\neg q) = -\Delta s(q)$), which necessitates that the gradient vectors be anti-aligned ($\phi_{\neg q} \approx -\phi_q$).
\textbf{(Right)} This geometric requirement induces a commutative diagram where symbolic negation $\neg$ in the query space corresponds to a linear inversion $-I$ in the gradient feature space.}
\label{fig:equivariance-analogy}
}
\end{figure}

Geometrically, satisfying these criteria requires navigating a fundamental trade-off between total superposition (uncontrollable interference) and perfect decoupling (lookup tables).
Instead, we seek a \emph{structured coupling}:
for example, an update to $p$ must automatically adjust $\neg p$ (\cref{fig:equivariance-analogy}), while remaining linearly independent from unrelated facts.
We translate this requirement into the geometric structure of gradients to formalize \textbf{Systematic Linear Propagation (SLP)}.
SLP imposes strict logical equivariance on coupled facts, ensuring that linear updates systematically track unary logical transformations (negation and converse).

Our theoretical analysis reveals several necessary conditions on the required geometry of first-order updates.
For negation, we prove that guaranteeing direction-agnostic first-order propagation necessitates a tensor factorization separating entity-pair context from relation content (\cref{thm:ctx-rel-factorization}), reminiscent of~\citet{smolensky1990tensor}.
Moreover, converse requires its further decomposition into symmetric and antisymmetric components, constraining how argument order is represented (\cref{thm:sym-alt-alignment}).
In contrast, we identify a fundamental obstruction when extending this systematicity to composition (\cref{thm:impossibility}).
We show that a minimal form of systematic composition reduces to conjunction, where a conjunction well-defined on linear features necessitates a bilinear structure.
However, we demonstrate that this bilinearity is incompatible with the geometry enforced by negation equivariance.
This conflict forces the feature map to collapse, suggesting that the failure to propagate updates to compositional consequences in the first-order regime may stem from a fundamental geometric mismatch.

More broadly, our results support the view that \emph{dynamics reveals structure}: analyzing how representations transform under updates exposes structural necessities that are invisible to static function approximation.
This perspective contributes to the classical systematicity debate~\citep{fodor1988connectionism} by deriving binding-compatible block structure not as an architectural choice, but as a geometric necessity for logical coherence under LPA.
Ultimately, this opens a path toward \textbf{logical geometric deep learning}~\citep{bronstein2021geometric}, treating logical operations not merely as static rules, but as dynamic symmetries to be preserved throughout learning.
\section{Problem Formulation}
\label{sec:formulation}

This section formalizes the problem.
We first show empirically that current LLMs violate a basic requirement for negation consistency under LPA (\cref{subsec:assumption}), then introduce relation algebra as our formalism (\cref{subsec:relation-algebra}) and linearized features as our geometric tool (\cref{subsec:setup}).
These lead to our central definition: Systematic Linear Propagation (SLP) (\cref{sec:SLP}), which grounds the theorems in~\cref{sec:logical-structure}.

\subsection{Motivation: Gradient Misalignment in LLMs}
\label{subsec:assumption}

\begin{figure}[t]
  \centering
  \includegraphics[width=\columnwidth]{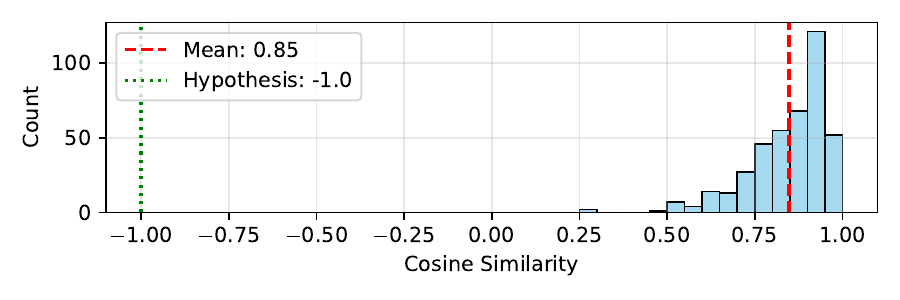}
  \caption{\textbf{Gradient alignment hinders negation consistency.}
  Cosine similarities between gradients of facts and their negations.
  Contrary to the theoretical requirement for anti-alignment ($=-1$), empirical gradients are strongly positively aligned ($\approx0.85$).
  The gradients are computed with respect to the parameters of the last Transformer block and LM head.
  See~\cref{app:experimental-setup} for detailed setup.}
  \label{fig:qwen-4b}
\end{figure}

The Linear Propagation Assumption (LPA) posits that logical consistency can be maintained via local, first-order parameter updates.
For such updates to be rational, the model's geometry must support the logical relations between facts.
Specifically, as illustrated in \cref{fig:equivariance-analogy}, increasing the score of a query $q$ should automatically suppress that of the negated query $\neg q$.
Geometrically, this implies that the update vector $\Delta \theta$ enhancing $q$ should naturally have a negative projection onto the gradient of $\neg q$.
In other words, the gradients $\nabla_\theta (q)$ and $\nabla_\theta (\neg q)$ should be \textbf{anti-aligned}.

We empirically test this condition on Qwen3-4B and 30B~\citep{yang2025qwen3}, and Olmo3-7B~\citep{olmo2025olmo3} using the \textsc{TREx} subset of \textsc{Negated LAMA}~\citep{kassner-schutze-2020-negated}.
Contrary to the ideal, the results in \cref{fig:qwen-4b,fig:scale_arch_analysis} reveal a misalignment: the gradients of facts and their negations are strongly positively aligned.
This indicates that a basic geometric premise for negation-consistent linear score propagation is violated in current LLMs.
Consequently, a typical local update improving $q$ will often induce a similar increase in $\neg q$, creating an obstruction for LPA-based methods.
We hypothesize this gradient misalignment not merely as a training artifact, but as a symptom of a deeper structural mismatch.
Standard representations, shaped by first-order dynamics, seemingly lack the geometric capacity to distinguish a fact from its negation in the tangent space.
This motivates our central question: \textbf{What representation geometry is required to guarantee systematic logical propagation?}

\subsection{Preliminary: Relation Algebra}
\label{subsec:relation-algebra}

To study the question, we first formalize the logical structure of relational knowledge using relation algebra.
The NLP community has predominantly formalized factual knowledge as a collection of relational triples $(h, r, t)$, where a head entity $h$ and a tail entity $t$ are connected by a binary relation $r$ (e.g., \texttt{(T-Rex, HasA, FourLegs)})~\citep{petroni-etal-2019-language}.
This has become a standard protocol for evaluating factual knowledge in LLMs~\citep{cohen2024evaluating}, interpreting LLM as an implicit knowledge graph.
To analyze this setup mathematically, we adopt the formalism of relation algebra~\citep{givant2006calculus}.
Relation algebra provides a rigorous language for manipulating binary relations, which we use to organize the logical operations we study over relational knowledge modeled as a triplet.

Let $E$ be a finite set of entities, and write $U := E \times E$ for the universe of ordered pairs.
A relation on $E$ is a subset $r \subseteq U$. Intuitively, $(h,t) \in r$ means that $h$ stands in relation $r$ to $t$.\footnote{Throughout, we conflate a relation name with the set of entity pairs for which it holds in the underlying knowledge base.}
Now, we write the universe of relations $\mathsf{Rel} := 2^U$ for the set of all binary relations on $E$.
Relation algebra consists of Boolean operations alongside relation-specific operations such as converse and composition.
In this work, we study two regimes: we first focus on the unary operations negation ($\neg$) and converse ($(\cdot)^{\smile}$), and later turn to the binary operation of composition ($(\cdot ; \cdot)$), which underlies multi-hop reasoning.
For any relations $r, s \in \mathsf{Rel}$, these operations are defined set-theoretically as:
\begin{align}
  \neg r &:= U \setminus r, \\
  r^{\smile} &:= \{(t,h) : (h,t) \in r\}, \\
  \label{eq:composition}
  r;s &:= \{(h,t) : \exists b \in E, (h,b) \in r \wedge (b,t) \in s\}.
\end{align}
We focus on these three operations since recent studies identify them as recurring failure modes of LLMs: negation robustness~\citep{kassner-schutze-2020-negated, liu2025modeleditingbuiltsand}, reversed queries (the ``reversal curse'')~\citep{berglund2023reversal}, and multi-hop propagation~\citep{cohen2024evaluating}.

For our later analysis, let $\mathcal{R}$ be the closure of a base set of relations $\mathcal{R}_0 \subseteq \mathsf{Rel}$ under the two unary operations of negation and converse, i.e., the smallest set containing $\mathcal{R}_0$ and satisfying $r\in\mathcal{R}\Rightarrow \neg r\in\mathcal{R}$ and $r^{\smile}\in\mathcal{R}$.
We call $(\mathcal{R},\neg,(\cdot)^{\smile})$ our \emph{unary relation algebra}.\footnote{We later return to consider composition in~\cref{sec:conjunction}.}
Now, given $h,t\in E$ and $r\in \mathcal R$, we write $r(h,t)$ for the atomic formula asserting that $(h,t) \in r$.
Negation and converse act on these atomic formulas as $\neg r(h,t)$ and $r^{\smile}(t,h)$, respectively.
In later sections, we will study how such operations should be represented and propagated
in a linear feature space.

\subsection{Queries, Scores, and Linearized Features}
\label{subsec:setup}
Having established the symbolic structure of relational knowledge, we now bridge the gap to the continuous geometry of neural networks.
Our goal is to analyze how logical operations on symbols map to geometric transformations on features.
To this end, we operate in the linearized regime of parameter updates, treating the gradient of a query as its feature vector.
Note that this approach aligns with the Neural Tangent Kernel (NTK) perspective~\citep{jacot2018neural}, effectively treating the model as a linear function of parameters defined by these gradient features.

\paragraph{Queries.}
We model each atomic formula as a triplet of a head entity, a relation, and a tail entity.
Let $E$ be a finite set of entities and let $\mathcal R$ be our unary relation algebra.
A query is a triple $q=(h,r,t)\ \in\ Q:=E\times \mathcal R \times E$, where a query $q = (h,r,t)$ can be read as an atomic formula $r(h,t)$ as discussed above.

\paragraph{Scores and Linearization.}
Fix a finite-dimensional real inner-product space $(\Theta, \langle \cdot, \cdot \rangle)$ representing the parameters of a differentiable model, and let $\theta_0 \in \Theta$ be a reference model state.
We associate each query $q \in Q$ with a differentiable scalar score $s_\theta(q) \in \mathbb{R}$ (e.g., the logit of the correct tail entity).
We assume that the model’s decision about $q$ (e.g., predicted truth or preference) is determined by a rule strictly monotone in $s_\theta(q)$.
As we are primarily interested in how these scores change under the first-order regime, we define the feature of a query as follows:

\begin{definition}[Linearized feature]
\label{def:feature}
For each query $q \in Q$, we define its \emph{linearized feature} at $\theta_0$ as the gradient of the score:
\begin{align*}
\phi_q \ := \ \nabla_\theta s_\theta(q)\big|_{\theta=\theta_0}\ \in \ \Theta .
\end{align*}
Consequently, the score change under a local parameter update $\Delta\theta$ is given by the first-order Taylor approximation:
\begin{align*}
s_{\theta_0+\Delta\theta}(q) \ \approx \ s_{\theta_0}(q) \ + \ \langle \phi_q, \Delta\theta \rangle .
\end{align*}
\end{definition}

The features $\phi_q$ typically span only a subspace of the full parameter space.
Henceforth, we restrict our analysis to the subspace $W := \mathrm{span}\{ \phi_q : q \in Q \} \subseteq \Theta$, since components of an update $\Delta\theta$ orthogonal to $W$ do not affect first-order score changes on the queries we study.
We empirically validate the practical relevance of this first-order approximation for local LLM updates in \cref{app:linearized-update}.

\subsection{Systematic Linear Propagation}
\label{sec:SLP}

We now unify the algebraic structure of knowledge (\cref{subsec:relation-algebra}) with the geometric view of features (\cref{subsec:setup}).
Our goal is to formalize when a linearized model supports the \emph{systematic} propagation with respect to the unary relation algebra we consider.
Qualitatively, systematicity imposes structured coupling in the feature space: an update to a query $q$ should propagate to its converse and inversely to its negation, while logically unrelated facts remain independently controllable.

A key choice in our formulation is that we do \emph{not} define systematicity as the mere existence of a carefully selected update direction that satisfies these constraints.
Instead, we ask for \textbf{automaticity}: logical coupling must be an intrinsic geometric property of the representation.
This ensures that once an update is applied to enforce $q$, the induced effects on logically related queries follow \emph{automatically}, without requiring the optimizer to solve a separate constraint-satisfaction problem.
Formally, we require the logical constraints to hold for all $\Delta\theta \in W$ in the first-order regime.

This direction-agnostic requirement is motivated by the potential fragility of relying on specific ``safe" update directions in high-capacity models.
In practical regimes where the feature space is highly superposed~\citep{elhage2022toy, hu2025knowledge}, reliably identifying an update direction that satisfies logical constraints while avoiding interference with unrelated facts is often prohibitively difficult.
Furthermore, in lifelong learning settings, any such transient ``safe subspace'' itself might be prone to drift as the representation evolves.
Thus, by enforcing systematicity for all update directions, we isolate a notion in which logic is an intrinsic property of the local geometry rather than an artifact of a specific optimization trajectory or a transient subspace.

To formalize this, we first identify the closed sets of queries that must be logically coupled under any update.
Recall that our unary relation algebra $\mathcal{R}$ is closed under negation $\neg$ and converse $(\cdot)^{\smile}$.
These relational operations induce a corresponding action on the space of queries $Q$ as follows:
\begin{align}
  \neg(h,r,t) &:= (h,\ \neg r,\ t), \\
  \mathrm{rev}(h,r,t) &:= (t,\ r^{\smile},\ h).
\end{align}
Intuitively, $\neg$ flips the truth value by acting locally on the relation slot (e.g., $\neg\texttt{ChildOf} \to \texttt{NotChildOf}$ where $\texttt{ChildOf}\subseteq U$ and $\texttt{NotChildOf}:=U\setminus \texttt{ChildOf}$), while keeping entities fixed.
In contrast, $\mathrm{rev}$ swaps the head and tail entities while moving to the converse relation (e.g., $\texttt{ChildOf}^{\smile} \to \texttt{ParentOf}$), thereby preserving the truth value (i.e., $r(h,t)\Leftrightarrow r^{\smile}(t,h)$).

Observe that both operations are involutions and commute:
\[
  \neg(\neg q) = q,\quad
  \mathrm{rev}(\mathrm{rev}(q)) = q,\quad
  \mathrm{rev}(\neg q) = \neg(\mathrm{rev}(q)).
\]
Hence, these two logical operations generate a group $G := \{\,\mathrm{id},\ \neg,\ \mathrm{rev},\ \neg\circ\mathrm{rev}\,\}$ acting on $Q$.
We define the orbit of a query $q$ under this group as its \textbf{logical family}:
\[
  G\cdot q := \{\, g(q) : g \in G \,\}.
\]
In plain words, two queries lie in the same family iff one can be transformed into the other via the group operations.

We now translate the semantic requirements of these families into constraints on the feature space $W$.
As discussed above, we require that the score changes are coordinated for \emph{all} update directions $\Delta\theta$.
For negation, the condition $\Delta s(\neg q) = -\Delta s(q)$ implies that the feature vectors are strictly anti-aligned: $\phi_{\neg q} = -\phi_q$.
Applying the same logic to the converse operation yields $\phi_{\mathrm{rev}(q)} = \phi_q$.
Therefore, we formalize this requirement as follows:

\begin{definition}[Logical equivariance]
\label{def:logical-equivariance}
A feature map $\phi : Q \to W$ is \emph{logically equivariant} w.r.t. 
$\neg$ and $\mathrm{rev}$ if
\[
  \phi_{\neg q} = -\,\phi_q,
  \qquad
  \phi_{\mathrm{rev}(q)} = \phi_q
  \quad
  \text{for all } q \in Q.
\]
\end{definition}

While~\cref{def:logical-equivariance} ensures coupling \emph{within} families, we must also ensure that logically unrelated facts do not interfere with each other.
If feature vectors from disjoint logical families are linearly dependent, it becomes impossible to modify one family without inadvertently affecting another.
Therefore, to enable selective update, distinct logical families must be linearly independent in the feature space.
We formulate these two requirements, intra-family coupling and inter-family decoupling, as \textbf{Systematic Linear Propagation (SLP)}.

\begin{definition}[Systematic Linear Propagation (SLP)]
\label{def:slp}
A linearized model $\{\phi_q\}_{q\in Q}$ is said to satisfy \emph{Systematic Linear Propagation} with respect to the unary relation algebra if:
\begin{enumerate}
    \item It is logically equivariant in the sense of \cref{def:logical-equivariance}
    \item There exists a choice of one representative query from each family s.t. the corresponding feature vectors are linearly independent in $W$.
\end{enumerate}
\end{definition}

We emphasize that SLP is a deliberately strong formalization of the informal Linear Propagation Assumption: since only the projection of an update onto $W$ affects first-order score changes, we require the logical constraints to hold for all $\Delta\theta\in W$.
Thus, failing SLP does not imply that propagation is impossible, but rather that it cannot be guaranteed by direction-agnostic first-order geometry alone and must rely on additional structure (e.g., restricted update directions, nonlinearity, or higher-order effects).
In the following section, we derive the geometric structure theoretically required for a model to support SLP.

\section{SLP Induces Tensor-Factorized Features}
\label{sec:logical-structure}

Having formalized SLP, we derive the structural implications of these constraints.
In~\cref{subsec:negation}, we prove that demanding systematic propagation with respect to negation requires the decomposition of feature space into a tensor product structure separating \emph{context} (i.e., entity pair) from \emph{relation}.
Next, we prove in~\cref{subsec:converse} that requiring systematic propagation with respect to converse further decomposes the entity pair component into symmetric and antisymmetric parts, encoding the directionality of relations.

\subsection{Negation Equivariance Forces Tensor Factorization}
\label{subsec:negation}
To rigorously derive the geometric structure, we adopt Tarski's criterion~\citep{tarski1986logical, sher2008tarski} as our guiding principle.
It posits that logical notions are characterized by their invariance under all permutations of the domain's objects.
In our context, this implies that logical operations must be invariant to the specific identities of the entities involved. For instance, the logic of negation should apply equally to \texttt{T-Rex} and \texttt{Chicken}.
We formalize this requirement by identifying the symmetry group acting on the query space $Q$.
Let $G_E := \mathrm{Sym}(E)$ be the permutation group acting on the set of entities $E$.
Following Tarski's criterion, the system must remain consistent under any entity renaming $\sigma \in G_E$, which acts on queries by renaming entities uniformly: $\sigma \cdot (h, r, t) := (\sigma(h), r, \sigma(t))$.
Simultaneously, the negation operation ($\neg$) is an involution ($\neg(\neg q)=q$), generating the cyclic group of order 2, denoted as $\mathbb{Z}_2 = \{1, -1\}$.

Crucially, \textbf{entity renaming commutes with logical operations:} renaming entities does not alter the logical relationship, and logical transformations do not affect entity identities, i.e., $\sigma \cdot (\neg q) = \neg (\sigma \cdot q)$.
Consequently, we can define a product group $H := G_E \times \mathbb{Z}_2$, which acts on the set of queries $Q$ via the simultaneous action of renaming and optional negation.
We translate these symbolic symmetries into the geometry of the feature space $W$.
As formally verified in \cref{lem:H-equivariant}, under SLP, it is guaranteed that the action of $H$ on the query space induces a well-defined linear group representation of $H$ on the feature space $W$.%
\footnote{For readers unfamiliar with these concepts, we provide a brief primer on group representation theory in \cref{app:primer}.}%
\footnote{In this section, we use `representation' exclusively in the sense of group representation theory, whereas we refer to the ML concept of embeddings strictly as `features' to prevent ambiguity.}
Since $W$ is a finite-dimensional representation of the product group $H = G_E \times \mathbb{Z}_2$, we can utilize standard results in representation theory to prove the following factorization theorem.

\begin{theorem}[Context-Relation Factorization (Proof in \cref{app:proof-thm2})]
\label{thm:ctx-rel-factorization}
Let $\phi:Q\to W$ be the feature map defined by $q\mapsto \phi_q$.
If $\phi$ satisfies SLP, then there exist real vector spaces $\{C_i\}_i$, $\{R_i\}_i$ and an isomorphism
$W \cong \bigoplus_i (C_i \otimes R_i)$ such that
\[
  \phi(h,r,t)
  \;=\;
  \bigoplus_i \left( \sum_{k=1}^{m_i} u_{i,k}(h,t) \otimes v_{i,k}(r) \right),
\]
where $u_{i,k}: E \times E \to C_i$ and $v_{i,k}: \mathcal{R} \to R_i$.
Moreover, negation acts locally as a sign flip on each relation component, i.e., $v_{i,k}(\neg r) = -v_{i,k}(r)$.%
\footnote{We use $\bigoplus_i W_i$ for a direct-sum decomposition, meaning that a feature vector decomposes into separate components across the blocks $W_i$. We use $C \otimes R$ for a tensor product, which formalizes binding a context (entity-pair) factor in $C$ with a relation factor in $R$.}
\end{theorem}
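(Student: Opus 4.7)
The approach is to translate the problem into group-representation-theoretic language and apply the standard isotypic decomposition. First, invoke \cref{lem:H-equivariant} to obtain a linear action $\rho$ of $H = G_E \times \mathbb{Z}_2$ on $W$ satisfying $\rho(g)\phi_q = \phi_{g\cdot q}$. Logical equivariance then gives $\rho(\neg)\phi_q = \phi_{\neg q} = -\phi_q$, and since the features span $W$ by construction, the negation generator acts globally as $-I_W$. Consequently $W$ sits entirely in the sign-isotype of $\mathbb{Z}_2$, so the $H$-structure on $W$ is fully determined by its $G_E$-structure together with a uniform sign flip.

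Next, I would decompose both $W$ and the permutation module $\mathbb{R}[E\times E]$ into $G_E$-isotypic components:
\[
  W \;\cong\; \bigoplus_i C_i \otimes R_i,
  \qquad
  \mathbb{R}[E\times E] \;\cong\; \bigoplus_i C_i \otimes M_i,
\]
where the $C_i$ run over the real irreducibles of $G_E = \mathrm{Sym}(E)$ appearing in both modules, and $R_i$, $M_i$ are the respective multiplicity spaces. The feature map extends linearly to an $H$-equivariant map $\tilde\phi : \mathbb{R}[E\times E]\otimes\mathbb{R}[\mathcal{R}] \to W$, and for each fixed $r\in\mathcal R$ the restriction $\tilde\phi(\,\cdot\,,r): \mathbb{R}[E\times E] \to W$ is $G_E$-equivariant. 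By Schur's lemma, such a map is zero between inequivalent isotypes and acts as $\mathrm{id}_{C_i}\otimes \alpha_i(r)$ on the $C_i$-block, for some linear $\alpha_i(r): M_i \to R_i$.

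The asserted factorization is then obtained by (i) fixing a basis $\{m_{i,k}\}_{k=1}^{m_i}$ of each $M_i$ and defining $v_{i,k}(r) := \alpha_i(r)(m_{i,k}) \in R_i$, and (ii) expanding the isotypic projection of $\delta_{(h,t)}\in\mathbb R[E\times E]$ in the same basis to read off coefficient functions $u_{i,k}: E\times E \to C_i$. Substitution into $\tilde\phi$ yields $\phi(h,r,t) = \bigoplus_i \sum_{k=1}^{m_i} u_{i,k}(h,t)\otimes v_{i,k}(r)$. The sign-flip property is then immediate from $\mathbb{Z}_2$-equivariance: for every $x$, $\tilde\phi(x\otimes\delta_{\neg r}) = \rho(\neg)\tilde\phi(x\otimes\delta_r) = -\tilde\phi(x\otimes\delta_r)$, which forces $\alpha_i(\neg r) = -\alpha_i(r)$ and hence $v_{i,k}(\neg r) = -v_{i,k}(r)$.

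The step I expect to be the main obstacle is the clean application of Schur's lemma over $\mathbb{R}$, since in general an isotypic block $\mathrm{Hom}_{G_E}(C_i\otimes M_i, C_i\otimes R_i)$ is a module over the Schur division ring $\mathrm{End}_{G_E}(C_i)$, which can be $\mathbb{C}$ or $\mathbb{H}$ and would complicate the $\mathrm{id}_{C_i}\otimes\alpha_i(r)$ form. Fortunately, every real irreducible of the symmetric group is absolutely irreducible (the real representations of $S_n$ are all of real type), so $\mathrm{End}_{G_E}(C_i) = \mathbb{R}$ and the factorization goes through as stated; the remaining bookkeeping is to verify $H$-equivariance of the resulting decomposition, which is automatic because $\mathbb{Z}_2$ acts trivially on each $M_i$ and as $-I$ on each $R_i$.
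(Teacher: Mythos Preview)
Your proposal is correct and follows the same high-level strategy as the paper: establish $H$-equivariance via \cref{lem:H-equivariant}, then extract the factorization from the representation theory of $H = G_E \times \mathbb{Z}_2$. The technical route differs in two respects worth noting. First, the paper decomposes $W$ into irreducible $H$-blocks $C_i\otimes R_i$ (with each $R_i$ a one-dimensional $\mathbb{Z}_2$-irrep) and then invokes a Hom-tensor isomorphism $\mathrm{Hom}_{G\times K}(U\otimes V,\,A\otimes B)\cong \mathrm{Hom}_G(U,A)\otimes\mathrm{Hom}_K(V,B)$ to write each projected component $\Phi_i$ as a finite sum of pure tensors $U_{i,k}\otimes S_{i,k}$; you instead pass to the isotypic decomposition with genuine multiplicity spaces $M_i,R_i$ and apply Schur's lemma pointwise in $r$ to obtain the block-diagonal form $\mathrm{id}_{C_i}\otimes\alpha_i(r)$. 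Second, your upfront observation that $\rho(\neg)=-I_W$ (since the $\phi_q$ span $W$ and each satisfies $\phi_{\neg q}=-\phi_q$) is cleaner than the paper's treatment, which carries both the trivial and sign characters of $\mathbb{Z}_2$ through the decomposition and only at the end argues that every nontrivially contributing block must have $\eta_i=-1$. Your route is somewhat more elementary, but it purchases the clean $\mathrm{id}_{C_i}\otimes\alpha_i(r)$ form by invoking the absolute irreducibility of real $\mathrm{Sym}(E)$-representations, a fact specific to the symmetric group; the paper's Hom-tensor argument avoids this and would go through unchanged for an arbitrary finite $G_E$.
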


Theorem~\ref{thm:ctx-rel-factorization} indicates that, to support systematic linear propagation with respect to negation, the feature geometry must separate entity-pair context information from relation information.
Specifically, logical negation is realized via local negation on the relation factors $v_{i,k}(r)$.
In~\cref{sec:discussion}, we discuss how this slot-local action relates to the problem of variable binding~\citep{greff2020binding}.

\subsection{Converse Equivariance Forces Positional Alignment}
\label{subsec:converse}
Having established that logical equivariance under negation forces a blockwise tensor factorization, we now investigate the structural implications of the converse operation.
Recall that the symbolic converse operation swaps the head and tail entities and replaces $r$ with its converse $r^{\smile}$, i.e., $\mathrm{rev}(h,r,t) = (t, r^{\smile}, h)$.
SLP requires the feature map to be invariant under this operation, i.e., $\phi_{\mathrm{rev}(q)} = \phi_q$.
This constraint forces a parity alignment between the context and relation components, as stated in the following theorem.

\begin{theorem}[Symmetric-Antisymmetric Alignment (Proof in \cref{app:proof-thm3})]
\label{thm:sym-alt-alignment}
Assume the context-relation factorization from \cref{thm:ctx-rel-factorization} and suppose $\phi$ is
\emph{converse-invariant}, i.e.
\[
\phi(h,r,t)=\phi(t,r^\smile,h)\qquad\text{for all }(h,r,t)\in Q.
\]
Then for each block $i$, the corresponding feature component $\phi_i$ admits a decomposition with matched-parity:
\[
  \phi_i(h,r,t) = \phi_i^{+}(h,r,t) + \phi_i^{-}(h,r,t),
\]
where $\phi_i^{\pm}$ can be chosen as a sum of context-relation tensor terms
$u(h,t)\otimes v(r)$ satisfying
\[
  u(t,h)=\pm u(h,t)\quad\text{and}\quad v(r^\smile)=\pm v(r).
\]
\end{theorem}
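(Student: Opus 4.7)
The plan is to exploit the block factorization from \cref{thm:ctx-rel-factorization} and recast the converse invariance as an eigenvalue condition for a single involution acting on a tensor product of two $\mathbb{Z}_2$-representations. Standard semisimplicity over $\R$ will then split each factor into $\pm 1$ eigenspaces, and the converse-invariance condition will pin $\phi_i$ to the matched-parity sub-tensor.

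Concretely, fix a block $i$ and set $A := \{f : E \times E \to C_i\}$ and $B := \{g : \mathcal R \to R_i\}$. The swap $(h,t) \mapsto (t,h)$ and the relational converse $r \mapsto r^\smile$ induce involutions $S : A \to A$ and $T : B \to B$ via $(Sf)(h,t) = f(t,h)$ and $(Tg)(r) = g(r^\smile)$. Since $S^2 = T^2 = \mathrm{id}$ over $\R$, averaging produces direct-sum decompositions $A = A^+ \oplus A^-$ and $B = B^+ \oplus B^-$ into symmetric / antisymmetric (resp. converse-invariant / converse-reversing) parts. Because $E$ is finite and $\mathcal R$ is finite (it is the closure of the finite base $\mathcal R_0$ under two involutions), the canonical map $A \otimes B \to \{E \times E \times \mathcal R \to C_i \otimes R_i\}$ is an isomorphism, so the expression $\phi_i(h,r,t) = \sum_k u_{i,k}(h,t) \otimes v_{i,k}(r)$ supplied by \cref{thm:ctx-rel-factorization} really does present $\phi_i$ as a bona fide element of $A \otimes B$.

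Next I would translate converse invariance into the statement that $\phi_i$ lies in a single eigenspace. By construction, $(S \otimes T)(f \otimes g)$ corresponds to the function $(h,r,t) \mapsto f(t,h) \otimes g(r^\smile)$, which is exactly the $\mathrm{rev}$-pullback of $f \otimes g$. Hence the hypothesis $\phi_i(h,r,t) = \phi_i(t, r^\smile, h)$ is equivalent to $(S \otimes T)\phi_i = \phi_i$. Decomposing $A \otimes B = \bigoplus_{\epsilon,\delta \in \{\pm\}} A^\epsilon \otimes B^\delta$, the operator $S \otimes T$ acts on the $(\epsilon,\delta)$-summand by scalar $\epsilon \delta$, so its $+1$ eigenspace is precisely $(A^+ \otimes B^+) \oplus (A^- \otimes B^-)$. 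The resulting unique decomposition $\phi_i = \phi_i^+ + \phi_i^-$ into these two pieces is the one claimed: any element of $A^\pm \otimes B^\pm$ is, by definition of the tensor product, a finite sum $\sum_k u_k \otimes v_k$ with $u_k(t,h) = \pm u_k(h,t)$ and $v_k(r^\smile) = \pm v_k(r)$.

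The only delicate point, and really the place I would be careful, is the embedding $\phi_i \in A \otimes B$ in a form that interacts cleanly with the two involutions; everything downstream is routine $\mathbb{Z}_2 \times \mathbb{Z}_2$ representation theory. Finiteness of $\mathcal R$ is what lets us identify the functional tensor product with $A \otimes B$ outright and avoid any completion subtleties; without it, one would need a projectivity or finite-rank argument to make the same step. No genuine obstacle arises: once the tensor identification and the two involutions are in place, the theorem follows from the elementary splitting of a representation of $\mathbb{Z}_2$ over a field of characteristic zero.
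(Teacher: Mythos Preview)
Your proof is correct and follows essentially the same route as the paper: identify $\phi_i$ with an element of a tensor product, translate converse-invariance into invariance under the tensor of the swap and converse involutions, and read off the matched-parity decomposition from the $\pm 1$-eigenspaces. The only cosmetic difference is that the paper carries out this argument in the equivariant Hom-spaces $\mathrm{Hom}_{G_E}(V_{\mathrm{ctx}},C_i)\otimes\mathrm{Hom}_{\mathbb{Z}_2}(V_{\mathrm{rel}},R_i)$ inherited from the proof of \cref{thm:ctx-rel-factorization} rather than in your raw function spaces, but since the theorem as stated does not require the factors $u,v$ to be equivariant, your slightly larger ambient space is harmless.
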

This result reveals a geometric account of how directionality can be represented under converse invariance.
The first term represents symmetric components, where neither the relation nor the entity pair cares about order.
In contrast, the second term $u^- \otimes v^-$ encodes directionality through a mechanism of \textbf{sign cancellation}: swapping the entities introduces a sign flip ($-1$) in the context factor $u(h,t)$, which is exactly compensated by the sign flip in the relation factor $v(r)$ induced by the converse operation.

Note that the constraints from negation and converse are compatible, as the corresponding symbolic operations commute.
In conclusion, while SLP is theoretically realizable through this specific symmetry, such a structure is likely absent in the unconstrained feature spaces of practical LLMs, as demonstrated in~\cref{subsec:assumption}.
\section{The Collapse of Linear Conjunction}
\label{sec:conjunction}

In this section, we analyze systematic propagation for relational composition (i.e., multi-hop reasoning).
We first show that systematic relational composition must handle conjunction in a minimal subclass.
We then identify an obstruction to conjunction under our linearized systematicity requirement.
In particular, if conjunction must be well-defined on the linearized feature space (\cref{ass:kernel-stability-conj}), then conjunction-faithful propagation is incompatible with negation equivariance, unless the feature map is a zero map.

We seek systematic propagation to compositional consequences:
a targeted update to an atomic formula on a relation $r \in \mathcal R$ must automatically adjust composed queries involving a composition $r;s$ for some $s \in \mathcal R$.
That is, by systematicity, we require that the feature of $r;s$ be modeled \emph{constructively} from its constituents to ensure propagation under arbitrary parameter updates that change $r$.
Recall from~\cref{eq:composition} that relational composition, $(r;s)(h,t) \iff \exists b:\, r(h,b)\wedge s(b,t)$, links two queries via an intermediate entity.
While the general case involves existential aggregation over possibly many witnesses, we isolate a minimal subclass that removes aggregation altogether.
Specifically, in the \emph{unique-witness} case where there exists a unique $b^*$ satisfying the link, composition reduces to the conjunction $r(h,b^*)\wedge s(b^*,t)$.
Thus, any mechanism that supports systematic composition must, at a minimum, support systematic conjunction.
We therefore investigate whether conjunction can be realized in the linearized feature space while remaining compatible with negation equivariance.

We first characterize the constraints LPA imposes on feature geometry to ensure that conjunction is supported systematically.
Central to this relationship is compositionality: the representation of a compound statement should be systematically determined by its constituents.
That is, given a compound query $p \wedge q$ with $p, q \in Q$, its feature $\phi_{p \wedge q}$ should be determined solely by $\phi_p$ and $\phi_q$.
Along with the logical properties of sentential conjunction (commutativity and idempotence), we formalize this intuition as follows:

\begin{definition}[Conjunction-faithful features under LPA]
\label{def:conj-faithful}
Let $Q^\wedge$ be the closure of $Q$ under negation and conjunction, i.e., the smallest set of queries containing $Q$ and closed under both operations.
Let $W^\wedge := \mathrm{span}\{\phi_p : p\in Q^\wedge\}$.
We say that $\phi$ is conjunction-faithful (under LPA) if there exists a binary operator $F: W^\wedge \times W^\wedge \to W^\wedge$ that governs the conjunction of features,
satisfying the following properties for all $p,q \in Q^\wedge$ and $u,v \in W^\wedge$:

\textbf{(i) Consistency:} $\phi_{p \wedge q} \;=\; F(\phi_p, \phi_q)$.

\textbf{(ii) Symmetry:} $F(u, v) \;=\; F(v, u)$.

\textbf{(iii) Idempotence:} $F(u, u) \;=\; u$.
\end{definition}

The consistency condition states that the feature of a conjunction depends solely on the features of its conjuncts.
Consequently, two formulas with identical features must behave identically under conjunction with any fixed context:

\begin{lemma}[Substitution]
\label{lem:substitution}
If $\phi$ is conjunction-faithful, then for all $p,p',q\in Q^\wedge$, 
$\phi_p=\phi_{p'}$ implies
$\phi_{p\wedge q}=F(\phi_p,\phi_q)=F(\phi_{p'},\phi_q)=\phi_{p'\wedge q}$.
\end{lemma}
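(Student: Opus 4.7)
The plan is to derive the claimed chain of equalities by applying the consistency property (i) of \cref{def:conj-faithful} twice, connected by the hypothesis $\phi_p = \phi_{p'}$. The essential observation is that $F$ is specified as a function $W^\wedge \times W^\wedge \to W^\wedge$ on feature vectors, not on syntactic expressions, and is therefore automatically well-defined in the set-theoretic sense: equal vector inputs force equal outputs. The substitution claim is really just a rewriting of this well-definedness in terms of queries that happen to share a feature.

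Concretely, I would proceed in three short steps. First, apply consistency to the pair $(p, q)$ to obtain $\phi_{p \wedge q} = F(\phi_p, \phi_q)$. Second, use the hypothesis $\phi_p = \phi_{p'}$ to substitute $\phi_{p'}$ for $\phi_p$ inside $F$, giving $F(\phi_p, \phi_q) = F(\phi_{p'}, \phi_q)$; this step uses nothing beyond $F$ being a function. Third, apply consistency again, now to $(p', q)$, to obtain $F(\phi_{p'}, \phi_q) = \phi_{p' \wedge q}$. Concatenating these three equalities yields exactly the stated chain.

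There is no real obstacle to the proof itself; the lemma is essentially a restatement of what it means for conjunction of features to be governed by a binary operator on $W^\wedge$ rather than by a syntactic rule on $Q^\wedge$. Its force instead lies downstream: the substitution principle converts syntactic facts about queries (e.g., that two logically distinct formulas yield identical features) into vector identities on $W^\wedge$, which is precisely the lever needed to pit conjunction against negation equivariance in the subsequent bilinearity and collapse arguments.
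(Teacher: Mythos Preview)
Your proposal is correct and matches the paper's own treatment: the lemma is immediate from consistency (i) of \cref{def:conj-faithful} and the fact that $F$ is a function on $W^\wedge$, exactly as you outline. The paper does not even separate out a proof, since the chain of equalities in the statement already \emph{is} the proof.
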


While Lemma~\ref{lem:substitution} guarantees that conjunction is a well-defined function of features, the linearity of LPA imposes a stronger constraint.
Under the first-order regime, the editing dynamics are governed entirely by linear projections ($\Delta s(p)=\langle \phi_p,\Delta\theta\rangle$).
This implies that the editor cannot distinguish any linear dependence: if a weighted sum of features is zero ($\sum_i a_i\phi_{p_i}=0$), the collective score change is identically zero for any parameter update.

Geometrically, such a zero-sum combination constitutes information that is operationally non-existent to the model's update mechanism.
If a conjunction operator were to map a null signal to a non-zero feature, it would generate distinctions based on information invisible to the editor, thereby decoupling the logic of propagation from the physics of editing.
To ensure that propagation remains predictable from first-order geometry alone, we adopt a strong notion of systematicity:
logical operations must be consistent with the linear geometry of the editor, meaning they must preserve linear dependencies.
Formally, this requires that if a linear combination vanishes ($\sum_i a_i\phi_{p_i}=0$), its conjunction with any context $q$ must also vanish ($\sum_i a_i\phi_{p_i \wedge q}=0$).
This condition is formalized as the following assumption:

\begin{assumption}[Kernel stability for conjunction]
\label{ass:kernel-stability-conj}
Let $V^\wedge:=\mathrm{span}\{e_p : p\in Q^\wedge\}$ be the free real vector space on $Q^\wedge$, and let
$\Phi:V^\wedge \to W^\wedge$ be the linear extension defined by $\Phi(e_p):=\phi_p$.
For each fixed $q\in Q^\wedge$, define the linear map $T_q:V^\wedge\to V^\wedge$ by $T_q(e_p):=e_{p\wedge q}$.
Assume that
\[
  T_q(\ker\Phi)\subseteq \ker\Phi
  \qquad \text{for all } q\in Q^\wedge.
\]
\end{assumption}

Mathematically, this condition ensures that the conjunction operation is well-defined on $W^\wedge$.
If~\cref{ass:kernel-stability-conj} fails, then conjunction cannot be systematically treated as a well-defined operation compatible with the linearized feature geometry, as it may distinguish linear combinations of formulas whose difference is invisible to all first-order updates.
In that case, it becomes difficult to maintain the premise that compositional reasoning can emerge through linear propagation alone.

Under this assumption, we can prove that a feature-level conjunction of two queries is fully characterized by a bilinear operator.

\begin{lemma}[Kernel Stability Yields Bilinearity (Proof in \cref{app:bilinearity})]
\label{lem:bilinearity}
Assume \cref{ass:kernel-stability-conj} and the symmetry of conjunction (\cref{def:conj-faithful}(ii)).
Then, there exists a unique symmetric bilinear operator $\tilde F: W^\wedge \times W^\wedge \to W^\wedge$ such that $\tilde F(\phi_p,\phi_q) = \phi_{p\wedge q}$ for all $p,q\in Q^\wedge$.
\end{lemma}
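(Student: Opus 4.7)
The plan is to lift the set-theoretic conjunction $(p,q)\mapsto \phi_{p\wedge q}$ to a bilinear map on the free space $V^\wedge$, and then descend it to $W^\wedge$ via the first isomorphism theorem, which gives $W^\wedge \cong V^\wedge / \ker\Phi$. Concretely, I would define $B : V^\wedge \times V^\wedge \to W^\wedge$ as the unique bilinear extension of $(e_p, e_q) \mapsto \phi_{p\wedge q}$. The consistency and symmetry clauses of \cref{def:conj-faithful} together force $\phi_{p\wedge q} = \phi_{q\wedge p}$ in $W^\wedge$, so $B$ is symmetric on basis pairs and hence symmetric everywhere.

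Next, I would check that $B$ vanishes whenever either argument lies in $\ker\Phi$; this is exactly the condition needed for $B$ to factor through the quotient. Fix $q \in Q^\wedge$. On basis elements, $B(e_p, e_q) = \phi_{p\wedge q} = \Phi(e_{p\wedge q}) = \Phi(T_q(e_p))$, so by linearity in the first slot $B(u, e_q) = \Phi(T_q(u))$ for every $u \in V^\wedge$. \cref{ass:kernel-stability-conj} then yields $B(u, e_q) = 0$ whenever $u \in \ker\Phi$, and extending by linearity in the second slot gives $B(\ker\Phi, V^\wedge) = 0$. Symmetry of $B$ transfers this to the other slot, so $B(V^\wedge, \ker\Phi) = 0$ as well.

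With both slots controlled, $B$ descends to a well-defined symmetric bilinear operator $\tilde F : W^\wedge \times W^\wedge \to W^\wedge$ satisfying $\tilde F(\phi_p, \phi_q) = \phi_{p\wedge q}$ for all $p, q \in Q^\wedge$. Uniqueness is immediate: $\{\phi_p : p \in Q^\wedge\}$ spans $W^\wedge$, and bilinearity pins down $\tilde F$ on all of $W^\wedge \times W^\wedge$ from its values on these spanning pairs.

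The main obstacle, though a mild one, is that \cref{ass:kernel-stability-conj} as stated asserts invariance of $\ker\Phi$ only under right-substitution by a fixed $q$. Without invoking the symmetry clause (\cref{def:conj-faithful}(ii)) to transfer kernel stability to the other slot, one would obtain only right-linearity, producing a one-sided ``module action'' rather than a genuine symmetric bilinear form. Making this dependence explicit is the subtle point of the proof, and it shows that both structural ingredients -- kernel stability and symmetry -- are jointly needed, not merely the kernel condition alone.
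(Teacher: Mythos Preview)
Your proof is correct and follows essentially the same route as the paper: lift conjunction to a bilinear map on the free space $V^\wedge$, use \cref{ass:kernel-stability-conj} to kill $\ker\Phi$ in one slot, then invoke symmetry (\cref{def:conj-faithful}(ii)) to handle the other slot, and descend to $W^\wedge$. The only difference is packaging---the paper first builds one-sided operators $L_\wedge(q)$ and then extends in the second argument, whereas you quotient both slots at once---but the substance and the key observation (that symmetry is what transfers kernel stability across slots) are identical.
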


\Cref{lem:bilinearity} implies that once conjunction is required to behave systematically under LPA, its behavior on features is completely governed by the induced bilinear operator $\tilde F$.
However, the theorem below demonstrates that no non-trivial bilinear feature map can satisfy idempotence.

\begin{figure}[t]
\centering
\begin{tikzpicture}[
  scale=0.85, 
  transform shape,
  >=Latex,
  sym_node/.style={rectangle, rounded corners=3pt, draw=black!60, thick, inner sep=3pt, align=center, fill=gray!5, font=\footnotesize, text width=1.6cm, minimum height=1.0cm},
  feat_node/.style={rectangle, rounded corners=3pt, draw=blue!80, thick, inner sep=3pt, align=center, fill=blue!5, font=\footnotesize, text width=1.6cm, minimum height=1.0cm},
  res_node/.style={circle, draw=black!80, thick, minimum size=1.0cm, align=center, font=\bfseries\small},
  logic_arrow/.style={->, red!80!black, very thick},   
  geo_arrow/.style={->, blue!80!black, very thick},    
  trans_arrow/.style={->, black!60, thick, dashed}, 
  clash/.style={<->, very thick, orange!90!black},
  step_label/.style={font=\tiny\bfseries, align=center, inner sep=1pt}
]

\node[sym_node] (start) {
    Symbolic Input \\ 
    $\neg p \wedge \neg p$
};

\coordinate (mid_x) at ($(start.east) + (3.5, 0)$); 
\coordinate (end_x) at ($(mid_x) + (3.5, 0)$);


\node[sym_node] (top_mid) at (mid_x |- 0, 1.1) {
    Symbolic Result \\
    $\neg p$
};

\node[res_node, fill=red!5, draw=red!80] (top_end) at (end_x |- 0, 1.1) {
    $-\phi_p$
};


\node[feat_node] (bot_mid) at (mid_x |- 0, -1.1) {
    Feature Inputs \\
    $(-\phi_p, -\phi_p)$
};

\node[res_node, fill=blue!5, draw=blue!80] (bot_end) at (end_x |- 0, -1.1) {
    $+\phi_p$
};

\draw[logic_arrow] (start) -- ($(top_mid) + (-0.95, 0.)$);
\draw[trans_arrow] (top_mid) -- (top_end);

\draw[trans_arrow] (start) -- ($(bot_mid) + (-0.95, 0.)$);
\draw[geo_arrow] (bot_mid) -- (bot_end);


\node[step_label, text=red!80!black] at ($(start)!0.5!(top_mid) + (-0.2, 0.6)$) {1. Logic\\(Idempotence)};

\node[step_label] at ($(top_mid)!0.5!(top_end) + (0.1, 0.3)$) {2. Linearize\\(Neg. Equiv.)};

\node[step_label] at ($(start)!0.5!(bot_mid) + (-0.2, -0.6)$) {1. Linearize\\(Neg. Equiv.)};

\node[step_label, text=blue!80!black] at ($(bot_mid)!0.5!(bot_end) + (0.1, -0.3)$) {2. Geometry\\(Bilinearity)};

\draw[clash] (top_end) -- node[midway, fill=white, inner sep=0.5pt] (cross) {} (bot_end);

\node[text=orange!90!black, font=\huge, scale=0.8] at (cross) {$\times$};
\node[right=-1.2cm of cross, align=left, font=\scriptsize\bfseries, text=orange!90!black, inner sep=0pt] {Collapse};

\end{tikzpicture}
\caption{\textbf{The incompatibility of logical conjunction and LPA.} 
\textbf{Top Path:} Logical idempotence maps $(\neg p, \neg p) \to \neg p$, expecting feature $-\phi_p$.
\textbf{Bottom Path:} Linearization gives $(-\phi_p, -\phi_p)$, and the bilinearity of $\tilde F$ yields $+\phi_p$.
The only possible way to commute the two paths is setting $\phi_p=0$, leading to a collapse.}
\label{fig:collapse_diagram}
\end{figure}
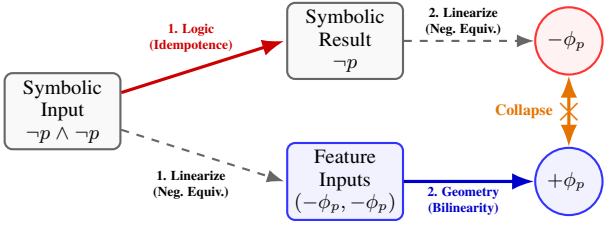

\begin{theorem}[Structural Collapse of Linear Conjunction]
\label{thm:impossibility}
Let $\tilde F$ be the bilinear map from \cref{lem:bilinearity}.
If $\phi$ satisfies idempotence (\cref{def:conj-faithful}) and negation equivariance ($\phi_{\neg p}=-\phi_p$),
then $\phi_q=0$ for all $q\in Q^\wedge$.
\end{theorem}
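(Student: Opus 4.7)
The plan is to derive a contradiction from evaluating the feature of a single strategically chosen formula, $\neg p \wedge \neg p$, along two routes: a symbolic route that exploits logical idempotence before linearizing, and a geometric route that linearizes first and then applies the bilinear conjunction operator. This mirrors the commutativity failure depicted in \cref{fig:collapse_diagram}.

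First I would fix an arbitrary $p \in Q^\wedge$ and note that $\neg p \in Q^\wedge$ as well, since $Q^\wedge$ is closed under negation by construction. Along the symbolic route, logical idempotence gives $\neg p \wedge \neg p = \neg p$ as a formula, hence $\phi_{\neg p \wedge \neg p} = \phi_{\neg p}$, and negation equivariance then yields $\phi_{\neg p \wedge \neg p} = -\phi_p$. Along the geometric route, conjunction-faithfulness (Consistency) together with \cref{lem:bilinearity} gives $\phi_{\neg p \wedge \neg p} = \tilde F(\phi_{\neg p}, \phi_{\neg p})$; applying negation equivariance inside each slot and then using bilinearity of $\tilde F$,
\[
  \tilde F(\phi_{\neg p}, \phi_{\neg p}) = \tilde F(-\phi_p, -\phi_p) = \tilde F(\phi_p, \phi_p).
\]
Finally, applying idempotence to $p$ itself, $\tilde F(\phi_p, \phi_p) = \phi_{p \wedge p} = \phi_p$. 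Equating the two routes yields $-\phi_p = \phi_p$, and since $W^\wedge$ is a real vector space, $2\phi_p = 0$ forces $\phi_p = 0$.

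Since $p \in Q^\wedge$ was arbitrary, this gives $\phi_q = 0$ for every $q \in Q^\wedge$, which is the collapse asserted in the theorem. There is no serious obstacle: the hard work has already been done by \cref{lem:bilinearity}, which upgrades the mere well-definedness of conjunction on features to full bilinearity; once bilinearity is in hand, the negation sign flip in each slot cancels multiplicatively while the symbolic route preserves a single negation, and the mismatch is exactly the factor $-1$ that collapses the map. The only point worth checking carefully is that all objects invoked genuinely lie in $W^\wedge$ (so that \cref{lem:bilinearity} applies), which is immediate from the closure of $Q^\wedge$ under the operations used.
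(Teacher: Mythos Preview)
Your proof is correct and uses essentially the same argument as the paper: compute $\tilde F(-\phi_p,-\phi_p)$ two ways (idempotence applied to $\neg p$ versus bilinearity followed by idempotence on $p$) to force $\phi_p=0$. The only cosmetic difference is that the paper first treats atomic $p\in Q$ and then extends to all of $Q^\wedge$ by a short induction using $\tilde F(0,\cdot)=0$, whereas you quantify over $p\in Q^\wedge$ directly and thereby skip that step.
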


\begin{proof}
Let $p \in Q$ be an atomic query and let $u = \phi_p$.
From Idempotence, we have $\tilde F(u,u) = \phi_{p \wedge p} = \phi_p = u$.
By negation equivariance, $\phi_{\neg p}=-u$.
Since $\neg p\in Q^\wedge$, we may apply idempotence to $\neg p$ as well:
\[
    \tilde F(-u, -u) = \tilde F(\phi_{\neg p}, \phi_{\neg p}) = \phi_{\neg p \wedge \neg p} = \phi_{\neg p} = -u.
\]
On the other hand, since $\tilde F$ is bilinear,
\[
    \tilde F(-u, -u) = (-1)(-1) \tilde F(u, u) = \tilde F(u, u) = u.
\]
Comparing the two results, we have $-u = u$, which implies $u = 0$. Thus, $\phi_p = 0$ for all atomic queries.
Since any compound query $q \in Q^\wedge$ is formed by finite conjunctions of atomic queries (i.e., $q = p_1 \wedge \dots \wedge p_k$ where $p_i \in Q$) and $\tilde F(0, \cdot) = 0$, by induction, $\phi_q = 0$ for all $q \in Q^\wedge$.
\end{proof}

Intuitively, the collapse stems from a fundamental conflict between the symbolic rules of logic and the algebraic rules of bilinearity, as illustrated in \cref{fig:collapse_diagram}.
Specifically, the diagram illustrating the correspondence between logic and geometry fails to commute.
Following the logical path, the rules of negation and idempotence dictate that the result must flip sign ($u \to -u$), preserving the negation.
In contrast, following the geometric path, the bilinearity of $\tilde F$ forces the signs to cancel out ($(-u, -u) \to u$), as the product of two negatives is positive.
The only vector satisfying both requirements is the zero vector, leading to the collapse.

One might wonder whether this obstruction is merely an artifact of exact negation equivariance.
However, as shown in \cref{app:approx-negation}, the conflict persists even when negation is only approximately represented as a sign flip, unless the induced bilinear conjunction becomes increasingly ill-conditioned.
Thus, the structural collapse of linear conjunction does not vanish gracefully under small relaxations of exact logical equivariance.

This result implies that, in the first-order regime, the geometries required for negation and conjunction are incompatible.
Under our systematicity conditions that render conjunction as a bilinear operation on features, enforcing negation equivariance collapses the representation.
Importantly, this obstruction is structural rather than an optimization failure.
It therefore casts doubt on the feasibility of systematic multi-hop propagation under local linear updates, consistent with empirical breakdowns, as discussed in~\cref{sec:discussion}.
\section{Related Work}
\label{sec:related-work}
\paragraph{Linearized Dynamics and Model Adaptation.}
A common lens in modern deep learning is to approximate adaptation via local updates in a linearized feature space, a perspective theoretically grounded in Neural Tangent Kernel (NTK) analyses~\citep{jacot2018neural, lee2019wide}.
This viewpoint arises across settings that rely on gradient-based updates, including pretraining trajectories that accumulate factual associations~\citep{chang2024how}, as well as domain adaptation and continual learning where updates interact with prior knowledge~\citep{gururangan-etal-2020-dont, wu2024continual}.
Most explicitly, knowledge editing methods such as ROME and MEMIT treat Transformer MLPs as key-value memories~\citep{geva-etal-2021-transformer}, implementing edits as constrained least-squares updates~\citep{meng2022locating, meng2022mass}.
Related first-order schemes also appear in unlearning approaches aimed at reducing the influence of specific data~\citep{jang-etal-2023-knowledge, Barez2025OpenPI}.
A recurring empirical theme is that local updates do not reliably generalize to logically related or compositional variants of the target behavior~\citep{cohen2024evaluating, liu2025modeleditingbuiltsand},
echoing broader concerns about the gap between optimization and belief revision~\citep{hase2024fundamental}.

\paragraph{Systematicity and Variable Binding.}
The systematicity debate argues that connectionist models may lack the structural machinery for compositional generalization and variable binding~\citep{fodor1988connectionism, lake2018generalization}.
While \citet{smolensky1990tensor} proposed Tensor Product Representations as a mechanism for variable binding, the binding problem remains an active challenge in modern deep learning~\citep{greff2020binding}.
Recent interpretability work studies linear representations in neural activations~\citep{park2023linear}, yet empirical studies report persistent failures of compositional generalization in LLMs~\citep{dziri2023faith, wang2024grokking, chang2025characterizing}.
In particular, \citet{wang2025reversal} hypothesized that failures of variable binding underlie the reversal curse.
These threads motivate analyzing what structures are required for systematic behavior.

\paragraph{Logical Structure as Geometric Invariance.}
To formalize logical structure in vector spaces, we adopt invariance-based views of logical notions~\citep{tarski1986logical, sher2008tarski}.
This aligns with geometric deep learning, which characterizes representations by invariance and equivariance~\citep{bronstein2021geometric, pmlr-v48-cohenc16}.
This perspective motivates treating logical operations as transformations on queries and studies the constraints they induce in a linearized feature geometry.
\section{Discussion}
\label{sec:discussion}

Our investigation establishes a bridge between the algebraic structure of logic and the linear geometry of knowledge representations.
\textbf{A central insight of our work is that the structure of knowledge is best observed through its \emph{dynamics}, i.e., how representations coordinate under first-order updates.}
While expressive networks may statically realize a logically coherent representation, our results (\cref{sec:logical-structure}, \cref{sec:conjunction}) show that preserving such coherence under linearized updates imposes strict geometric constraints that are invisible to static analyses.
This distinction between expressivity and systematic propagation provides a geometric explanation for when logical structure can be realized and when it can be preserved under first-order updates.

\paragraph{Implications for Systematicity.}
As discussed in \cref{sec:related-work}, tensor products were proposed as a sufficient mechanism for variable binding.
Our results sharpen this connection in the linearized update regime: requiring systematic linear propagation forces a blockwise tensor factorization of the representation (\cref{sec:logical-structure}).
In this sense, our analysis suggests that such a mechanism is not only an architectural choice, but can be a geometric necessity for preserving logical structure under first-order dynamics.
Moreover, our converse analysis (\cref{subsec:converse}) shows that systematic propagation constrains how argument order is encoded, requiring sufficient positional structure to remain consistent under reversal.
Relatedly, even when a concept is linearly decodable at a fixed parameter state, static linearity alone does not guarantee systematic interaction under local updates.

\paragraph{Toward Logical Geometric Deep Learning.}
Our results align with geometric deep learning by casting systematic propagation as an equivariance constraint of logical operations.
This motivates a direction toward \emph{logical} geometric deep learning, where update-time behavior is constrained by these symmetries beyond static function approximation.
One way to view this is that learning dynamics should remain near a constrained manifold defined by logical equivariance, instead of drifting into regions where logically related queries become entangled.
Architecturally, this points to parameterizations and objectives that enforce the structures required by systematic propagation, for example, by regularizing gradient features or by building equivariant modules that preserve these symmetries across updates.

\paragraph{Implications for Model Adaptation and Editing.}
This geometric perspective has direct implications for practical model adaptation, most notably knowledge editing.
Locate-and-edit methods assume that knowledge can be localized and manipulated through a single local update, but our results emphasize that the relevant structure is defined by how representations transform under that update.
This shifts attention from \emph{where} a fact is stored to \emph{whether} the representation supports stable, structure-preserving update directions, i.e., \emph{editability} as a geometric property~\citep{Sinitsin2020Editable}.
At the same time, our conjunction result suggests that approaches relying on local linearity may face fundamental limits when asked to propagate edits to compositional consequences.
This points to mechanisms beyond single-step locality, such as iterative nonlinear updates or memory-based updates that bypass single-update bottlenecks~\citep{mitchell2022memory, wang2024wise}.

\paragraph{Scope and Open Questions.}
We view our results as a stress test of systematic propagation in a deliberately strict, first-order setting.
We impose direction-agnostic equivariance as a strong requirement, yielding sharp geometric constraints and an impossibility result.
Our results are local by design: they characterize what first-order geometry can guarantee at a reference state $\theta_0$.
For multi-step optimization, a natural extension is the NTK or lazy-training regime, where tangent features remain approximately constant along training and gradient descent is governed by the same linearized geometry over time~\citep{jacot2018neural,lee2019wide,chizat2019lazy}.
In this regime, our obstruction applies stepwise rather than only to a single isolated update.
By contrast, for genuinely non-local optimization where higher-order effects substantially change the tangent features, the obstruction in \cref{thm:impossibility} need not directly apply.
Thus, an important direction is to understand which conclusions persist under weaker, approximate notions of equivariance, and how architectures or objectives can encourage the required structure over repeated updates.
Taken together, our results motivate the design of ``logical'' inductive biases, and suggest a concrete program for testing whether enforcing such structure can support systematic propagation.
\section{Conclusion}
\label{sec:conclusion}
In this work, we analyzed the geometric limits of the Linear Propagation Assumption (LPA).
We showed that demanding systematic propagation imposes strict structural constraints: negation and converse require a blockwise tensor-product decomposition (\cref{thm:ctx-rel-factorization,thm:sym-alt-alignment}), while conjunction is fundamentally incompatible with negation under linearity (\cref{thm:impossibility}).
These constraints suggest that failures of local linear updates are not merely optimization artifacts, but may reflect a mismatch between the algebraic structure of logic and first-order update geometry.
More broadly, failures in knowledge editing, the reversal curse, and multi-hop reasoning may share a common geometric origin in the limits of first-order propagation under the LPA.
Ultimately, our findings reinforce the view that \textbf{dynamics reveals structure}: logical coherence is constrained not only by what a model can represent, but by how it can be updated.

\section*{Impact Statement}
This paper presents a theoretical analysis of the geometric limits of first-order model adaptation in language models.
Our goal is to clarify what local, approximately linear update mechanisms can and cannot guarantee about preserving logical coherence.
By characterizing structural constraints required for systematic propagation under such updates, our findings caution against naive reliance on single-step local interventions in settings where compositional consistency matters.
This perspective is relevant not only to knowledge editing, but also to continual learning and unlearning, where updates are expected to modify specific behaviors without introducing hard-to-detect logical inconsistencies.
We hope this contributes to safer deployment by motivating update mechanisms and inductive biases that explicitly support structure-preserving adaptation.

\section*{Acknowledgements}
The authors thank Jaewon Oh, Hanseul Cho, Jaden Park, and Jinwoo Heo for comments and discussions.

This work was supported by Institute for Information \& communications Technology Planning \& Evaluation(IITP) grant funded by the Korea government (MSIT) (RS-2019-II190075, Artificial Intelligence Graduate School Program(KAIST)).
Hoyeon Chang acknowledges support by Institute of Information \& communications Technology Planning \& Evaluation (IITP) grant funded by the Korea government (MSIT) (No.RS-2019-II190075 Artificial Intelligence Graduate School Program (KAIST), 10\%; No.RS-2021-II212068, Artificial Intelligence Innovation Hub, 10\%; RS-2024-00398115, Research on the reliability and coherence of outcomes produced by Generative AI, 20\%; No.2022-0-00113, Developing a Sustainable Collaborative Multi-modal Lifelong Learning Framework, 20\%; No.RS-2022-II220264, Comprehensive Video Understanding and Generation with Knowledge-based Deep Logic Neural Network, 20\%; RS-2024-00397966, Development of a Cybersecurity Specialized RAG-based sLLM Model for Suppressing Gen-AI Malfunctions and Construction of a Publicly Demonstration Platform) and the InnoCORE program of the Ministry of Science and ICT(N10250156).

\bibliography{main}

@article{meng2022locating,
  title={Locating and editing factual associations in gpt},
  author={Meng, Kevin and Bau, David and Andonian, Alex and Belinkov, Yonatan},
  journal={Advances in neural information processing systems},
  volume={35},
  pages={17359--17372},
  year={2022}
}

@article{meng2022mass,
  title={Mass-editing memory in a transformer},
  author={Meng, Kevin and Sharma, Arnab Sen and Andonian, Alex and Belinkov, Yonatan and Bau, David},
  journal={arXiv preprint arXiv:2210.07229},
  year={2022}
}

@article{cohen2024evaluating,
  title={Evaluating the ripple effects of knowledge editing in language models},
  author={Cohen, Roi and Biran, Eden and Yoran, Ori and Globerson, Amir and Geva, Mor},
  journal={Transactions of the Association for Computational Linguistics},
  volume={12},
  pages={283--298},
  year={2024},
  publisher={MIT Press One Broadway, 12th Floor, Cambridge, Massachusetts 02142, USA~…}
}

@article{berglund2023reversal,
  title={The Reversal Curse: LLMs trained on" A is B" fail to learn" B is A"},
  author={Berglund, Lukas and Tong, Meg and Kaufmann, Max and Balesni, Mikita and Stickland, Asa Cooper and Korbak, Tomasz and Evans, Owain},
  journal={arXiv preprint arXiv:2309.12288},
  year={2023}
}

@article{hase2024fundamental,
  title={Fundamental Problems With Model Editing: How Should Rational Belief Revision Work in LLMs?},
  author={Hase, Peter and Hofweber, Thomas and Zhou, Xiang and Stengel-Eskin, Elias and Bansal, Mohit},
  journal={arXiv preprint arXiv:2406.19354},
  year={2024}
}

@article{hu2025knowledge, title={Knowledge in Superposition: Unveiling the Failures of Lifelong Knowledge Editing for Large Language Models}, volume={39}, url={https://ojs.aaai.org/index.php/AAAI/article/view/34583}, DOI={10.1609/aaai.v39i22.34583}, abstractNote={Knowledge editing aims to update outdated or incorrect knowledge in large language models (LLMs). However, current knowledge editing methods have limited scalability for lifelong editing. This study explores the fundamental reason why knowledge editing fails in lifelong editing. We begin with the closed-form solution derived from linear associative memory, which underpins state-of-the-art knowledge editing methods. We extend the solution from single editing to lifelong editing, and through rigorous mathematical derivation, identify an interference term in the final solution, suggesting that editing knowledge may impact irrelevant knowledge. Further analysis of the interference term reveals a close relationship with superposition between knowledge representations. When knowledge superposition does not exist in language models, the interference term vanishes, allowing for lossless knowledge editing. Experiments across numerous language models reveal that knowledge superposition is universal, exhibiting high kurtosis, zero mean, and heavy-tailed distributions with clear scaling laws. Ultimately, by combining theory and experiments, we demonstrate that knowledge superposition is the fundamental reason for the failure of lifelong editing. Moreover, this is the first study to investigate knowledge editing from the perspective of superposition and provides a comprehensive observation of superposition across numerous real-world language models.}, number={22}, journal={Proceedings of the AAAI Conference on Artificial Intelligence}, author={Hu, Chenhui and Cao, Pengfei and Chen, Yubo and Liu, Kang and Zhao, Jun}, year={2025}, month={Apr.}, pages={24086–24094} }

@incollection{sher2008tarski,
    author = {Sher, Gila},
    isbn = {9780199296309},
    title = {Tarski's Thesis},
    booktitle = {New Essays on Tarski and Philosophy},
    publisher = {Oxford University Press},
    year = {2008},
    month = {09},
    abstract = {This chapter investigates the thesis that Invariance Under Isomorphisms is a criterion of logicality (logical operators) and the roots of this criterion in Tarski. It discusses the philosophical significance of this criterion, offers a philosophical justification for it, examines its ramifications for the relation between logic and mathematics, responds to Feferman's criticisms of this criterion, and critically examines Feferman's alternative criterion. The chapter also investigates methodological issues involved in a providing criterion of logicality (e.g. foundationalism vs. holism), logic's role in our system of knowledge, the relation between logicality and truth, and the idea of logicality as formality.},
    doi = {10.1093/acprof:oso/9780199296309.003.0012},
    url = {https://doi.org/10.1093/acprof:oso/9780199296309.003.0012},
    eprint = {https://academic.oup.com/book/0/chapter/270324395/chapter-ag-pdf/44538630/book_32548_section_270324395.ag.pdf},
}

@article{bronstein2021geometric,
  title={Geometric deep learning: Grids, groups, graphs, geodesics, and gauges},
  author={Bronstein, Michael M and Bruna, Joan and Cohen, Taco and Veli{\v{c}}kovi{\'c}, Petar},
  journal={arXiv preprint arXiv:2104.13478},
  year={2021}
}

@article{smolensky1990tensor,
  title={Tensor product variable binding and the representation of symbolic structures in connectionist systems},
  author={Smolensky, Paul},
  journal={Artificial intelligence},
  volume={46},
  number={1-2},
  pages={159--216},
  year={1990},
  publisher={Elsevier}
}

@article{fodor1988connectionism,
  title={Connectionism and cognitive architecture: A critical analysis},
  author={Fodor, Jerry A and Pylyshyn, Zenon W},
  journal={Cognition},
  volume={28},
  number={1-2},
  pages={3--71},
  year={1988},
  publisher={Elsevier}
}

@article{givant2006calculus,
  title={The calculus of relations as a foundation for mathematics},
  author={Givant, Steven},
  journal={Journal of Automated Reasoning},
  volume={37},
  number={4},
  pages={277--322},
  year={2006},
  publisher={Springer}
}

@article{tarski1986logical,
  author = {Alfred Tarski},
  doi = {10.1080/01445348608837096},
  journal = {History and Philosophy of Logic},
  number = {2},
  pages = {143--154},
  publisher = {Taylor \& Francis},
  title = {What Are Logical Notions?},
  volume = {7},
  year = {1986}
}

@inproceedings{lake2018generalization,
  title={Generalization without systematicity: On the compositional skills of sequence-to-sequence recurrent networks},
  author={Lake, Brenden and Baroni, Marco},
  booktitle={International conference on machine learning},
  pages={2873--2882},
  year={2018},
  organization={PMLR}
}

@book{serre1977linear,
  title={Linear representations of finite groups},
  author={Serre, Jean-Pierre and others},
  volume={42},
  year={1977},
  publisher={Springer}
}

@inproceedings{
dziri2023faith,
title={Faith and Fate: Limits of Transformers on Compositionality},
author={Nouha Dziri and Ximing Lu and Melanie Sclar and Xiang Lorraine Li and Liwei Jiang and Bill Yuchen Lin and Sean Welleck and Peter West and Chandra Bhagavatula and Ronan Le Bras and Jena D. Hwang and Soumya Sanyal and Xiang Ren and Allyson Ettinger and Zaid Harchaoui and Yejin Choi},
booktitle={Thirty-seventh Conference on Neural Information Processing Systems},
year={2023},
url={https://openreview.net/forum?id=Fkckkr3ya8}
}

@InProceedings{pmlr-v48-cohenc16,
  title = 	 {Group Equivariant Convolutional Networks},
  author = 	 {Cohen, Taco and Welling, Max},
  booktitle = 	 {Proceedings of The 33rd International Conference on Machine Learning},
  pages = 	 {2990--2999},
  year = 	 {2016},
  editor = 	 {Balcan, Maria Florina and Weinberger, Kilian Q.},
  volume = 	 {48},
  series = 	 {Proceedings of Machine Learning Research},
  address = 	 {New York, New York, USA},
  month = 	 {20--22 Jun},
  publisher =    {PMLR},
  pdf = 	 {http://proceedings.mlr.press/v48/cohenc16.pdf},
  url = 	 {https://proceedings.mlr.press/v48/cohenc16.html},
  abstract = 	 {We introduce Group equivariant Convolutional Neural Networks (G-CNNs), a natural generalization of convolutional neural networks that reduces sample complexity by exploiting symmetries. G-CNNs use G-convolutions, a new type of layer that enjoys a substantially higher degree of weight sharing than regular convolution layers. G-convolutions increase the expressive capacity of the network without increasing the number of parameters. Group convolution layers are easy to use and can be implemented with negligible computational overhead for discrete groups generated by translations, reflections and rotations. G-CNNs achieve state of the art results on CIFAR10 and rotated MNIST.}
}

@misc{chang2025characterizing,
      title={Characterizing Pattern Matching and Its Limits on Compositional Task Structures}, 
      author={Hoyeon Chang and Jinho Park and Hanseul Cho and Sohee Yang and Miyoung Ko and Hyeonbin Hwang and Seungpil Won and Dohaeng Lee and Youbin Ahn and Minjoon Seo},
      year={2025},
      eprint={2505.20278},
      archivePrefix={arXiv},
      primaryClass={cs.LG},
      url={https://arxiv.org/abs/2505.20278}, 
}

@misc{liu2025modeleditingbuiltsand,
      title={Is Model Editing Built on Sand? Revealing Its Illusory Success and Fragile Foundation}, 
      author={Wei Liu and Haomei Xu and Bingqing Liu and Zhiying Deng and Haozhao Wang and Jun Wang and Ruixuan Li and Yee Whye Teh and Wee Sun Lee},
      year={2025},
      eprint={2510.00625},
      archivePrefix={arXiv},
      primaryClass={cs.AI},
      url={https://arxiv.org/abs/2510.00625}, 
}

@inproceedings{petroni-etal-2019-language,
    title = "Language Models as Knowledge Bases?",
    author = {Petroni, Fabio  and
      Rockt{\"a}schel, Tim  and
      Riedel, Sebastian  and
      Lewis, Patrick  and
      Bakhtin, Anton  and
      Wu, Yuxiang  and
      Miller, Alexander},
    editor = "Inui, Kentaro  and
      Jiang, Jing  and
      Ng, Vincent  and
      Wan, Xiaojun",
    booktitle = "Proceedings of the 2019 Conference on Empirical Methods in Natural Language Processing and the 9th International Joint Conference on Natural Language Processing (EMNLP-IJCNLP)",
    month = nov,
    year = "2019",
    address = "Hong Kong, China",
    publisher = "Association for Computational Linguistics",
    url = "https://aclanthology.org/D19-1250/",
    doi = "10.18653/v1/D19-1250",
    pages = "2463--2473",
    abstract = "Recent progress in pretraining language models on large textual corpora led to a surge of improvements for downstream NLP tasks. Whilst learning linguistic knowledge, these models may also be storing relational knowledge present in the training data, and may be able to answer queries structured as ``fill-in-the-blank'' cloze statements. Language models have many advantages over structured knowledge bases: they require no schema engineering, allow practitioners to query about an open class of relations, are easy to extend to more data, and require no human supervision to train. We present an in-depth analysis of the relational knowledge already present (without fine-tuning) in a wide range of state-of-the-art pretrained language models. We find that (i) without fine-tuning, BERT contains relational knowledge competitive with traditional NLP methods that have some access to oracle knowledge, (ii) BERT also does remarkably well on open-domain question answering against a supervised baseline, and (iii) certain types of factual knowledge are learned much more readily than others by standard language model pretraining approaches. The surprisingly strong ability of these models to recall factual knowledge without any fine-tuning demonstrates their potential as unsupervised open-domain QA systems. The code to reproduce our analysis is available at \url{https://github.com/facebookresearch/LAMA}."
}

@InProceedings{mitchell2022memory,
  title = 	 {Memory-Based Model Editing at Scale},
  author =       {Mitchell, Eric and Lin, Charles and Bosselut, Antoine and Manning, Christopher D and Finn, Chelsea},
  booktitle = 	 {Proceedings of the 39th International Conference on Machine Learning},
  pages = 	 {15817--15831},
  year = 	 {2022},
  editor = 	 {Chaudhuri, Kamalika and Jegelka, Stefanie and Song, Le and Szepesvari, Csaba and Niu, Gang and Sabato, Sivan},
  volume = 	 {162},
  series = 	 {Proceedings of Machine Learning Research},
  month = 	 {17--23 Jul},
  publisher =    {PMLR},
  pdf = 	 {https://proceedings.mlr.press/v162/mitchell22a/mitchell22a.pdf},
  url = 	 {https://proceedings.mlr.press/v162/mitchell22a.html},
  abstract = 	 {Even the largest neural networks make errors, and once-correct predictions can become invalid as the world changes. Model editors make local updates to the behavior of base (pre-trained) models to inject updated knowledge or correct undesirable behaviors. Existing model editors have shown promise, but also suffer from insufficient expressiveness: they struggle to accurately model an edit’s intended scope (examples affected by the edit), leading to inaccurate predictions for test inputs loosely related to the edit, and they often fail altogether after many edits. As a higher-capacity alternative, we propose Semi-Parametric Editing with a Retrieval-Augmented Counterfactual Model (SERAC), which stores edits in an explicit memory and learns to reason over them to modulate the base model’s predictions as needed. To enable more rigorous evaluation of model editors, we introduce three challenging language model editing problems based on question answering, fact-checking, and dialogue generation. We find that only SERAC achieves high performance on all three problems, consistently outperforming existing approaches to model editing by a significant margin. Code, data, and additional project information will be made available at https://sites.google.com/view/serac-editing.}
}

@inproceedings{jacot2018neural,
 author = {Jacot, Arthur and Gabriel, Franck and Hongler, Clement},
 booktitle = {Advances in Neural Information Processing Systems},
 editor = {S. Bengio and H. Wallach and H. Larochelle and K. Grauman and N. Cesa-Bianchi and R. Garnett},
 pages = {},
 publisher = {Curran Associates, Inc.},
 title = {Neural Tangent Kernel: Convergence and Generalization in Neural Networks},
 url = {https://proceedings.neurips.cc/paper_files/paper/2018/file/5a4be1fa34e62bb8a6ec6b91d2462f5a-Paper.pdf},
 volume = {31},
 year = {2018}
}

@inproceedings{
park2023linear,
title={The Linear Representation Hypothesis and the Geometry of Large Language Models},
author={Kiho Park and Yo Joong Choe and Victor Veitch},
booktitle={Causal Representation Learning Workshop at NeurIPS 2023},
year={2023},
url={https://openreview.net/forum?id=T0PoOJg8cK}
}

@inproceedings{
kojima2022large,
title={Large Language Models are Zero-Shot Reasoners},
author={Takeshi Kojima and Shixiang Shane Gu and Machel Reid and Yutaka Matsuo and Yusuke Iwasawa},
booktitle={ICML 2022 Workshop on Knowledge Retrieval and Language Models},
year={2022},
url={https://openreview.net/forum?id=6p3AuaHAFiN}
}

@article{achiam2023gpt,
  title={Gpt-4 technical report},
  author={Achiam, Josh and Adler, Steven and Agarwal, Sandhini and Ahmad, Lama and Akkaya, Ilge and Aleman, Florencia Leoni and Almeida, Diogo and Altenschmidt, Janko and Altman, Sam and Anadkat, Shyamal and others},
  journal={arXiv preprint arXiv:2303.08774},
  year={2023}
}

@article{greff2020binding,
  title={On the binding problem in artificial neural networks},
  author={Greff, Klaus and Van Steenkiste, Sjoerd and Schmidhuber, J{\"u}rgen},
  journal={arXiv preprint arXiv:2012.05208},
  year={2020}
}

@inproceedings{
wang2024grokking,
title={Grokking of Implicit Reasoning in Transformers: A Mechanistic Journey to the Edge of Generalization},
author={Boshi Wang and Xiang Yue and Yu Su and Huan Sun},
booktitle={The Thirty-eighth Annual Conference on Neural Information Processing Systems},
year={2024},
url={https://openreview.net/forum?id=D4QgSWxiOb}
}

@inproceedings{
chang2024how,
title={How Do Large Language Models Acquire Factual Knowledge During Pretraining?},
author={Hoyeon Chang and Jinho Park and Seonghyeon Ye and Sohee Yang and Youngkyung Seo and Du-Seong Chang and Minjoon Seo},
booktitle={The Thirty-eighth Annual Conference on Neural Information Processing Systems},
year={2024},
url={https://openreview.net/forum?id=TYdzj1EvBP}
}

@inproceedings{geva-etal-2021-transformer,
    title = "Transformer Feed-Forward Layers Are Key-Value Memories",
    author = "Geva, Mor  and
      Schuster, Roei  and
      Berant, Jonathan  and
      Levy, Omer",
    editor = "Moens, Marie-Francine  and
      Huang, Xuanjing  and
      Specia, Lucia  and
      Yih, Scott Wen-tau",
    booktitle = "Proceedings of the 2021 Conference on Empirical Methods in Natural Language Processing",
    month = nov,
    year = "2021",
    address = "Online and Punta Cana, Dominican Republic",
    publisher = "Association for Computational Linguistics",
    url = "https://aclanthology.org/2021.emnlp-main.446/",
    doi = "10.18653/v1/2021.emnlp-main.446",
    pages = "5484--5495",
    abstract = "Feed-forward layers constitute two-thirds of a transformer model{'}s parameters, yet their role in the network remains under-explored. We show that feed-forward layers in transformer-based language models operate as key-value memories, where each key correlates with textual patterns in the training examples, and each value induces a distribution over the output vocabulary. Our experiments show that the learned patterns are human-interpretable, and that lower layers tend to capture shallow patterns, while upper layers learn more semantic ones. The values complement the keys' input patterns by inducing output distributions that concentrate probability mass on tokens likely to appear immediately after each pattern, particularly in the upper layers. Finally, we demonstrate that the output of a feed-forward layer is a composition of its memories, which is subsequently refined throughout the model{'}s layers via residual connections to produce the final output distribution."
}

@inproceedings{zhong-etal-2023-mquake,
    title = "{MQ}u{AKE}: Assessing Knowledge Editing in Language Models via Multi-Hop Questions",
    author = "Zhong, Zexuan  and
      Wu, Zhengxuan  and
      Manning, Christopher  and
      Potts, Christopher  and
      Chen, Danqi",
    editor = "Bouamor, Houda  and
      Pino, Juan  and
      Bali, Kalika",
    booktitle = "Proceedings of the 2023 Conference on Empirical Methods in Natural Language Processing",
    month = dec,
    year = "2023",
    address = "Singapore",
    publisher = "Association for Computational Linguistics",
    url = "https://aclanthology.org/2023.emnlp-main.971/",
    doi = "10.18653/v1/2023.emnlp-main.971",
    pages = "15686--15702",
    abstract = "The information stored in large language models (LLMs) falls out of date quickly, and retraining from scratch is often not an option. This has recently given rise to a range of techniques for injecting new facts through updating model weights. Current evaluation paradigms are extremely limited, mainly validating the recall of edited facts, but changing one fact should cause rippling changes to the model{'}s related beliefs. If we edit the UK Prime Minister to now be Rishi Sunak, then we should get a different answer to Who is married to the British Prime Minister? In this work, we present a benchmark MQuAKE (Multi-hop Question Answering for Knowledge Editing) comprising multi-hop questions that assess whether edited models correctly answer questions where the answer should change as an entailed consequence of edited facts. While we find that current knowledge-editing approaches can recall edited facts accurately, they fail catastrophically on the constructed multi-hop questions. We thus propose a simple memory-based approach, MeLLo, which stores all edited facts externally while prompting the language model iteratively to generate answers that are consistent with the edited facts. While MQuAKE remains challenging, we show that MeLLo scales well with LLMs (up to 175B) and outperforms previous model editors by a large margin."
}

@inproceedings{jang-etal-2023-knowledge,
    title = "Knowledge Unlearning for Mitigating Privacy Risks in Language Models",
    author = "Jang, Joel  and
      Yoon, Dongkeun  and
      Yang, Sohee  and
      Cha, Sungmin  and
      Lee, Moontae  and
      Logeswaran, Lajanugen  and
      Seo, Minjoon",
    editor = "Rogers, Anna  and
      Boyd-Graber, Jordan  and
      Okazaki, Naoaki",
    booktitle = "Proceedings of the 61st Annual Meeting of the Association for Computational Linguistics (Volume 1: Long Papers)",
    month = jul,
    year = "2023",
    address = "Toronto, Canada",
    publisher = "Association for Computational Linguistics",
    url = "https://aclanthology.org/2023.acl-long.805/",
    doi = "10.18653/v1/2023.acl-long.805",
    pages = "14389--14408",
    abstract = "Pretrained Language Models (LMs) memorize a vast amount of knowledge during initial pretraining, including information that may violate the privacy of personal lives and identities. Previous work addressing privacy issues for LMs has mostly focused on data preprocessing and differential privacy methods, both requiring re-training the underlying LM. We propose knowledge unlearning as an alternative method to reduce privacy risks for LMs post hoc. We show that simply performing gradient ascent on target token sequences is effective at forgetting them with little to no degradation of general language modeling performances for larger-sized LMs. We also find that sequential unlearning is better than trying to unlearn all the data at once and that unlearning is highly dependent on which kind of data (domain) is forgotten. By showing comparisons with previous methods known to mitigate privacy risks for LMs, we show that our approach can give a stronger empirical privacy guarantee in scenarios where the data vulnerable to extraction attacks are known a priori while being much more efficient and robust."
}

@inproceedings{
wang2024wise,
title={{WISE}: Rethinking the Knowledge Memory for Lifelong Model Editing of Large Language Models},
author={Peng Wang and Zexi Li and Ningyu Zhang and Ziwen Xu and Yunzhi Yao and Yong Jiang and Pengjun Xie and Fei Huang and Huajun Chen},
booktitle={The Thirty-eighth Annual Conference on Neural Information Processing Systems},
year={2024},
url={https://openreview.net/forum?id=VJMYOfJVC2}
}

@misc{wang2025reversal,
      title={Is the Reversal Curse a Binding Problem? Uncovering Limitations of Transformers from a Basic Generalization Failure}, 
      author={Boshi Wang and Huan Sun},
      year={2025},
      eprint={2504.01928},
      archivePrefix={arXiv},
      primaryClass={cs.CL},
      url={https://arxiv.org/abs/2504.01928}, 
}

@inproceedings{gururangan-etal-2020-dont,
    title = "Don{'}t Stop Pretraining: Adapt Language Models to Domains and Tasks",
    author = "Gururangan, Suchin  and
      Marasovi{\'c}, Ana  and
      Swayamdipta, Swabha  and
      Lo, Kyle  and
      Beltagy, Iz  and
      Downey, Doug  and
      Smith, Noah A.",
    editor = "Jurafsky, Dan  and
      Chai, Joyce  and
      Schluter, Natalie  and
      Tetreault, Joel",
    booktitle = "Proceedings of the 58th Annual Meeting of the Association for Computational Linguistics",
    month = jul,
    year = "2020",
    address = "Online",
    publisher = "Association for Computational Linguistics",
    url = "https://aclanthology.org/2020.acl-main.740/",
    doi = "10.18653/v1/2020.acl-main.740",
    pages = "8342--8360",
    abstract = "Language models pretrained on text from a wide variety of sources form the foundation of today{'}s NLP. In light of the success of these broad-coverage models, we investigate whether it is still helpful to tailor a pretrained model to the domain of a target task. We present a study across four domains (biomedical and computer science publications, news, and reviews) and eight classification tasks, showing that a second phase of pretraining in-domain (domain-adaptive pretraining) leads to performance gains, under both high- and low-resource settings. Moreover, adapting to the task{'}s unlabeled data (task-adaptive pretraining) improves performance even after domain-adaptive pretraining. Finally, we show that adapting to a task corpus augmented using simple data selection strategies is an effective alternative, especially when resources for domain-adaptive pretraining might be unavailable. Overall, we consistently find that multi-phase adaptive pretraining offers large gains in task performance."
}

@misc{wu2024continual,
      title={Continual Learning for Large Language Models: A Survey}, 
      author={Tongtong Wu and Linhao Luo and Yuan-Fang Li and Shirui Pan and Thuy-Trang Vu and Gholamreza Haffari},
      year={2024},
      eprint={2402.01364},
      archivePrefix={arXiv},
      primaryClass={cs.CL},
      url={https://arxiv.org/abs/2402.01364}, 
}

@book{dummit2004abstract,
  title     = "Abstract Algebra",
  author    = "Dummit, David S and Foote, Richard M",
  publisher = "John Wiley \& Sons",
  edition   =  3,
  month     =  jun,
  year      =  2003,
  address   = "Nashville, TN",
  language  = "en"
}

@inproceedings{kassner-schutze-2020-negated,
    title = "Negated and Misprimed Probes for Pretrained Language Models: Birds Can Talk, But Cannot Fly",
    author = {Kassner, Nora  and
      Sch{\"u}tze, Hinrich},
    booktitle = "Proceedings of the 58th Annual Meeting of the Association for Computational Linguistics",
    month = jul,
    year = "2020",
    address = "Online",
    publisher = "Association for Computational Linguistics",
    url = "https://www.aclweb.org/anthology/2020.acl-main.698"}

@article{yang2025qwen3,
  title={Qwen3 technical report},
  author={Yang, An and Li, Anfeng and Yang, Baosong and Zhang, Beichen and Hui, Binyuan and Zheng, Bo and Yu, Bowen and Gao, Chang and Huang, Chengen and Lv, Chenxu and others},
  journal={arXiv preprint arXiv:2505.09388},
  year={2025}
}

@inproceedings{
Sinitsin2020Editable,
title={Editable Neural Networks},
author={Anton Sinitsin and Vsevolod Plokhotnyuk and Dmitry Pyrkin and Sergei Popov and Artem Babenko},
booktitle={International Conference on Learning Representations},
year={2020},
url={https://openreview.net/forum?id=HJedXaEtvS}
}

@misc{olmo2025olmo3,
      title={Olmo 3}, 
      author={Team Olmo and : and Allyson Ettinger and Amanda Bertsch and Bailey Kuehl and David Graham and David Heineman and Dirk Groeneveld and Faeze Brahman and Finbarr Timbers and Hamish Ivison and Jacob Morrison and Jake Poznanski and Kyle Lo and Luca Soldaini and Matt Jordan and Mayee Chen and Michael Noukhovitch and Nathan Lambert and Pete Walsh and Pradeep Dasigi and Robert Berry and Saumya Malik and Saurabh Shah and Scott Geng and Shane Arora and Shashank Gupta and Taira Anderson and Teng Xiao and Tyler Murray and Tyler Romero and Victoria Graf and Akari Asai and Akshita Bhagia and Alexander Wettig and Alisa Liu and Aman Rangapur and Chloe Anastasiades and Costa Huang and Dustin Schwenk and Harsh Trivedi and Ian Magnusson and Jaron Lochner and Jiacheng Liu and Lester James V. Miranda and Maarten Sap and Malia Morgan and Michael Schmitz and Michal Guerquin and Michael Wilson and Regan Huff and Ronan Le Bras and Rui Xin and Rulin Shao and Sam Skjonsberg and Shannon Zejiang Shen and Shuyue Stella Li and Tucker Wilde and Valentina Pyatkin and Will Merrill and Yapei Chang and Yuling Gu and Zhiyuan Zeng and Ashish Sabharwal and Luke Zettlemoyer and Pang Wei Koh and Ali Farhadi and Noah A. Smith and Hannaneh Hajishirzi},
      year={2025},
      eprint={2512.13961},
      archivePrefix={arXiv},
      primaryClass={cs.CL},
      url={https://arxiv.org/abs/2512.13961}, 
}

@article{elhage2022toy,
  title={Toy models of superposition},
  author={Elhage, Nelson and Hume, Tristan and Olsson, Catherine and Schiefer, Nicholas and Henighan, Tom and Kravec, Shauna and Hatfield-Dodds, Zac and Lasenby, Robert and Drain, Dawn and Chen, Carol and others},
  journal={arXiv preprint arXiv:2209.10652},
  year={2022}
}

@inproceedings{bau2020rewriting,
 title={Rewriting a Deep Generative Model},
 author={Bau, David and Liu, Steven and Wang, Tongzhou and Zhu,
 Jun-Yan and Torralba, Antonio},
 booktitle={Proceedings of the European Conference on Computer Vision (ECCV)},
 year={2020}
}

@article{lopez2017gradient,
  title={Gradient episodic memory for continual learning},
  author={Lopez-Paz, David and Ranzato, Marc'Aurelio},
  journal={Advances in neural information processing systems},
  volume={30},
  year={2017}
}

@article{lee2019wide,
  title={Wide neural networks of any depth evolve as linear models under gradient descent},
  author={Lee, Jaehoon and Xiao, Lechao and Schoenholz, Samuel and Bahri, Yasaman and Novak, Roman and Sohl-Dickstein, Jascha and Pennington, Jeffrey},
  journal={Advances in neural information processing systems},
  volume={32},
  year={2019}
}

@article{Barez2025OpenPI,
  title={Open Problems in Machine Unlearning for AI Safety},
  author={Fazl Barez and Tingchen Fu and Ameya Prabhu and Stephen Casper and Amartya Sanyal and Adel Bibi and Aidan O'Gara and Robert Kirk and Ben Bucknall and Tim Fist and Luke Ong and Philip H. S. Torr and Kwok-Yan Lam and Robert Trager and David Krueger and S{\"o}ren Mindermann and Jos{\'e} Hern{\'a}ndez-Orallo and Mor Geva and Yarin Gal},
  journal={ArXiv},
  year={2025},
  volume={abs/2501.04952},
  url={https://api.semanticscholar.org/CorpusID:275405338}
}

@inproceedings{chizat2019lazy,
 author = {Chizat, L\'{e}na\"{\i}c and Oyallon, Edouard and Bach, Francis},
 booktitle = {Advances in Neural Information Processing Systems},
 editor = {H. Wallach and H. Larochelle and A. Beygelzimer and F. d\textquotesingle Alch\'{e}-Buc and E. Fox and R. Garnett},
 pages = {},
 publisher = {Curran Associates, Inc.},
 title = {On Lazy Training in Differentiable Programming},
 url = {https://proceedings.neurips.cc/paper_files/paper/2019/file/ae614c557843b1df326cb29c57225459-Paper.pdf},
 volume = {32},
 year = {2019}
}
\bibliographystyle{icml2026}

\newpage
\appendix
\crefalias{section}{appendix}
\onecolumn
\section{Experimental Details and Additional Results}
\label{app:experimental-setup}

\subsection{Gradient Alignment Experiment}
To empirically investigate whether recent LLMs respect negation equivariance~\cref{def:logical-equivariance}, we measure the geometric alignment between the gradients of factual queries and their negated counterparts.

\paragraph{Models.}
We evaluated our hypothesis across models of varying sizes and architectures to ensure the generality of our findings. Specifically, we used the instruction-tuned versions of the Qwen series~\citep{yang2025qwen3}: \texttt{Qwen3-4B-Instruct-2507} and \texttt{Qwen3-30B-A3B-Instruct-2507}. Additionally, to verify cross-architecture consistency, we included \texttt{OLMo-3-7B-Instruct}~\citep{olmo2025olmo3}.

\paragraph{Dataset.}
We utilized the \textsc{TREx} subset of the \textsc{Negated LAMA} benchmark~\citep{kassner-schutze-2020-negated}.
This dataset provides pairs of cloze-style queries: a factual query $p$ (e.g., ``The capital of France is [MASK]'') and its negated form $\neg p$ (e.g., ``The capital of France is not [MASK]'').
We sampled 10 instances for each of 41 templates, making 410 datapoints in total.

\paragraph{Gradient Computation.}
We define the score function $s_\theta(q)$ as the log-probability of the correct target span given the prompt.
We compute the gradient of the sequence-level log-probability for the entire target entity span.
Formally, for a target span $y = (y_1, \dots, y_L)$ and prompt $x$, the score is:
\begin{equation}
    s_\theta(q) = \log P_\theta(y | x) = \sum_{t=1}^{L} \log P_\theta(y_t | x, y_{<t})
\end{equation}
We computed the gradients $\phi_p = \nabla_\theta s_\theta(p)$ and $\phi_{\neg p} = \nabla_\theta s_\theta(\neg p)$. 
To analyze the geometry of the learned representations rather than the entire parameter space, we restricted the gradient computation to the parameters of the last transformer block and the language modeling head.
This restriction is motivated by two key factors: (1) computational feasibility, as calculating full-parameter gradients for every sample is prohibitively expensive, and (2) semantic relevance, as high-level semantic concepts are known to be predominantly encoded in the later layers of the network~\citep{meng2022locating}.

\paragraph{Metric.}
We measured the cosine similarity between the two gradient vectors:
\begin{equation}
    \text{Sim}(p, \neg p) = \frac{\langle \phi_p, \phi_{\neg p} \rangle}{\|\phi_p\| \|\phi_{\neg p}\|}
\end{equation}

\paragraph{Implementation Details.}
All experiments were conducted using the PyTorch framework on a node equipped with 4 NVIDIA A100 (80GB) GPUs. The total runtime for the full evaluation pipeline across all models and relations was approximately 10 hours.

\paragraph{Additional Results on Scale and Architecture.}

In the main text (\cref{fig:qwen-4b}), we report that \texttt{Qwen3-4B} exhibits a strong positive alignment (Mean: $0.85$) between factual and negated gradients, contradicting the theoretical requirement for systematic propagation.
Here, we demonstrate that this phenomenon is not specific to a single model size or architecture.
Specifically, we extended our analysis to \texttt{Qwen3-30B} (\cref{fig:qwen30b}) and \texttt{OLMo-3-7B} (\cref{fig:olmo7b}). As shown in the histograms, both models display a similar distribution heavily skewed towards positive cosine similarity.
Specifically, for Qwen3-30B, \cref{fig:qwen30b} shows a high mean ($0.86$) similar to the 4B model, suggesting that simply scaling up the model may not resolve this tendency.
Moreover, \cref{fig:olmo7b} shows a slightly lower but still positive mean ($0.62$).
Overall, these results suggest that negation equivariance does not hold in current LLMs, regardless of model scale or architecture.

\begin{figure}[t]
    \centering
    \begin{subfigure}[b]{0.48\textwidth}
        \centering
        \includegraphics[width=\linewidth]{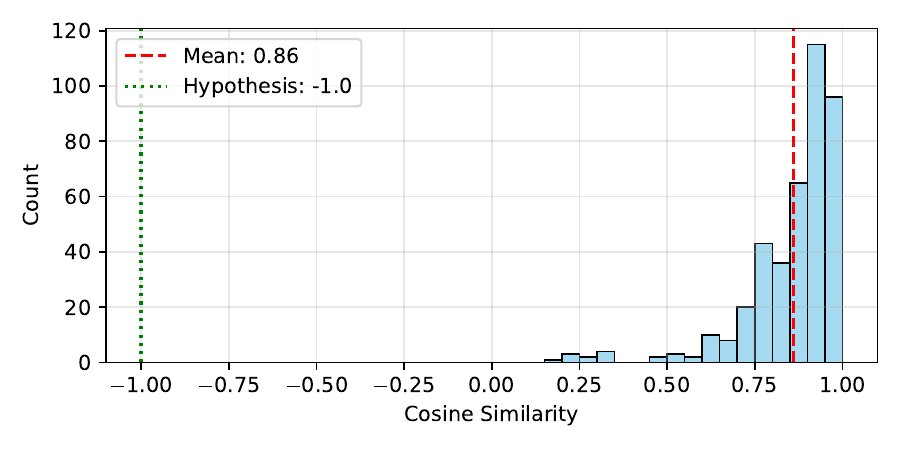}
        \caption{\textbf{Qwen3-30B}}
        \label{fig:qwen30b}
    \end{subfigure}
    \hfill
    \begin{subfigure}[b]{0.48\textwidth}
        \centering
        \includegraphics[width=\linewidth]{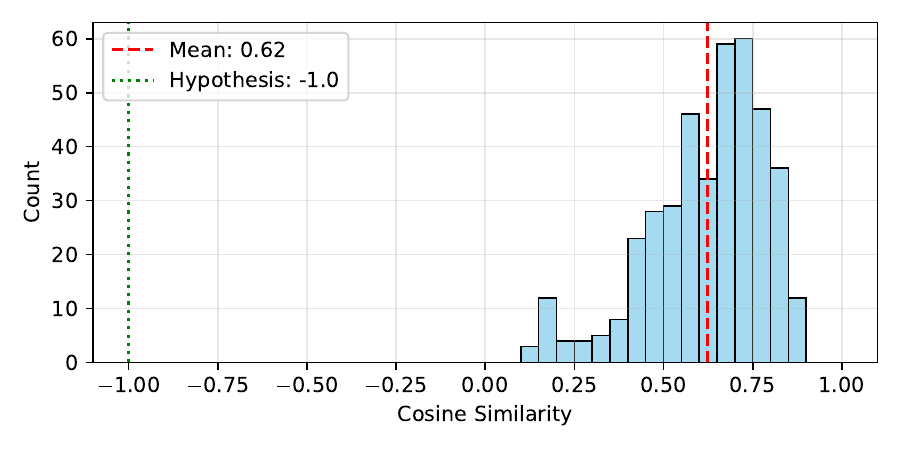}
        \caption{\textbf{OLMo-3-7B}}
        \label{fig:olmo7b}
    \end{subfigure}
    
    \caption{\textbf{Gradient alignment across different scales and architectures.} The positive alignment phenomenon persists in (a) Qwen3-30B and (b) OLMo-3-7B. Both distributions are heavily skewed towards positive cosine similarity, demonstrating that the geometric mismatch for linear negation propagation is consistent across model scale and architecture.}
    \label{fig:scale_arch_analysis}
\end{figure}

\subsection{Validity of the Linearized Update Approximation}
\label{app:linearized-update}

To support the practical relevance of the first-order regime, we evaluate how well actual model updates are predicted by the linearized score approximation.
We use Qwen3-4B-Instruct and sample 100 instances from the \textsc{TREx} subset of \textsc{Negated LAMA}.
For each instance, following common local editing setups that target MLP layers~\citep{meng2022locating,meng2022mass}, we compute the target-logit gradient $g$ with respect to the parameters of layer 10 and update the parameters along the normalized gradient direction $g/\|g\|$ with step size $\eta$.
The first-order prediction for the target-logit change is then $\eta\|g\|$.
We compare this prediction with the actual target-logit change after the update by fitting a linear regression across instances.

\begin{table}[t]
\centering
\caption{Validity of the linearized update approximation. Actual target-logit changes are well predicted by the first-order approximation for small to moderate step sizes, while the approximation breaks down for very large updates.}
\label{tab:linearized-update}
\begin{tabular}{ccc}
\toprule
Step size $\eta$ & Slope & $R^2$ \\
\midrule
$10^{-5}$ & 0.97 & $>0.99$ \\
$10^{-4}$ & 1.00 & $>0.99$ \\
$10^{-3}$ & 0.99 & $>0.99$ \\
$10^{-2}$ & 0.89 & 0.98 \\
$10^{-1}$ & 0.21 & 0.19 \\
\bottomrule
\end{tabular}
\end{table}

As shown in \cref{tab:linearized-update}, the actual updates are closely tracked by the first-order prediction up to $\eta=10^{-2}$, with slopes near one and high $R^2$ values.
The approximation breaks down at $\eta=10^{-1}$, indicating that the update is leaving the local linear regime.
These results support the practical relevance of analyzing logical propagation in the first-order geometry for local LLM updates.

\section{Primer on Finite Group Representations}
\label{app:primer}

This section collects basic definitions from group theory~\citep{dummit2004abstract} and group representation theory~\citep{serre1977linear} used in our proofs.
We work over real vector spaces unless otherwise stated, and all groups considered in this paper are finite.

\subsection{Groups}
\label{app:primer-group}

A \emph{group} is a set $G$ equipped with a binary operation $(g,h)\mapsto gh$ that encodes the structure of symmetries.
Formally, it must satisfy three axioms:
\begin{itemize}
    \item \textbf{Associativity:} $(gh)k = g(hk)$ for all $g,h,k \in G$.
    \item \textbf{Identity:} There exists an element $e \in G$ such that $eg = ge = g$ for all $g$.
    \item \textbf{Inverses:} For every $g \in G$, there exists an element $g^{-1} \in G$ such that $gg^{-1} = g^{-1}g = e$.
\end{itemize}

We introduce two standard families of groups:
\begin{itemize}
    \item \textbf{Symmetric group ($S_n$):} The set of all permutations (bijections) of a set of $n$ elements forms a group under function composition.
    \item \textbf{Cyclic group ($\mathbb{Z}_n$):} A group generated by a single element $g$ such that $g^n = e$ (and $g^k \neq e$ for $k < n$). We specifically use the cyclic group of order 2, $\mathbb{Z}_2 \cong (\{+1,-1\},\times)$, to model logical negation, where the non-identity element corresponds to the negation operator ($(-1)^2 = 1$).
\end{itemize}

\subsection{Group Actions}
\label{app:primer-group-action}

A (left) \emph{action} of a group $G$ on a set $X$ formalizes the idea of transforming elements of $X$ using the symmetries in $G$.
It is a map $G\times X\to X$, denoted $(g,x)\mapsto g\cdot x$, that obeys two consistency rules:
\begin{itemize}
    \item \textbf{Identity preservation:} $e\cdot x = x$ for all $x \in X$ (the identity transformation leaves everything unchanged).
    \item \textbf{Compatibility:} $g\cdot(h\cdot x) = (gh)\cdot x$ for all $g,h \in G$ and $x \in X$ (transforming by $h$ then $g$ is equivalent to transforming by the combined element $gh$).
\end{itemize}
Given an element $x\in X$, its \emph{orbit} is the set of all points reachable from $x$ under the group action: $G\cdot x:=\{g\cdot x: g\in G\}$.

\subsection{Product Groups}
\label{app:primer-product-groups}
Let $G$ and $H$ be finite groups. The \emph{direct product} $G \times H$ is the group of pairs $(g, h)$ defined by the component-wise operation:
\[
  (g, h)(g', h') \;=\; (gg', hh').
\]
This structure allows us to compose symmetries acting on different domains.
Specifically, if $G$ acts on a set $X$ and $H$ acts on a set $Y$, then the product group $G \times H$ naturally acts on the product set $X \times Y$ via
\[
  (g, h) \cdot (x, y) \;:=\; (g \cdot x, h \cdot y).
\]

\subsection{Representations and Equivariant Maps}
\label{app:primer-rep-equivariant}

Let $W$ be a real vector space.
A (finite-dimensional) \emph{representation} of $G$ on $W$ is a group homomorphism
\[
  \rho: G \to GL(W),
\]
where $GL(W)$ denotes the group of invertible linear maps $W\to W$.
This definition allows us to view $G$ as \emph{acting linearly} on the vector space $W$.
We denote this action by $g\cdot w := \rho(g)w$.
Since each $\rho(g)$ lies in $GL(W)$, this action inherently preserves the linear structure:
$g\cdot(aw + bv) = a(g\cdot w) + b(g\cdot v)$ for all scalars $a,b$ and vectors $w,v$.

A linear map $T:W\to W'$ between two $G$-representations $(W,\rho)$ and $(W',\rho')$
is called \emph{$G$-equivariant} if it commutes with the group action:
\[
  T(g \cdot w) \;=\; g \cdot T(w)
  \qquad\text{for all } g\in G,\ w\in W.
\]
In terms of the homomorphism $\rho$, this is equivalent to $T(\rho(g)w) = \rho'(g)T(w)$.
The space of all such $G$-equivariant linear maps is denoted $\mathrm{Hom}_G(W,W')$.

\subsection{Invariant Subspaces and Irreducibility}
\label{app:primer-irrep}

A subspace $U\subseteq W$ is called \emph{$G$-invariant} if it is closed under the group action:
\[
  \rho(g)u \in U \quad \text{for all } g\in G, u\in U.
\]
Intuitively, if a feature vector lies in a $G$-invariant subspace, applying any symmetry from $G$ will keep the vector within that same subspace.
In this case, the restriction $\rho|_U: G \to GL(U)$ defines a valid representation on $U$, called a \emph{subrepresentation}.

A representation is \emph{irreducible} if it contains no proper nontrivial invariant subspaces.
That is, the only $G$-invariant subspaces are $\{0\}$ and $W$ itself.
A representation is \emph{reducible} if it has a nontrivial invariant subspace.

A representation $W$ is \emph{completely reducible} if it can be decomposed entirely into these atomic blocks.
Formally, this means $W$ is isomorphic to a direct sum of irreducible subrepresentations:
\[
  W \;\cong\; \bigoplus_{i=1}^\ell W_i,
\]
where each $W_i$ is irreducible.

\subsection{Maschke's Theorem}
\label{app:primer-maschke}

A key structural fact we use is that finite-dimensional representations of finite groups over $\mathbb R$ are completely reducible (Maschke’s theorem).
While the theorem holds for more general fields, we focus on the real field relevant to our work.

\begin{theorem}[Maschke]
\label{thm:maschke}
Let $G$ be a finite group and let $W$ be a finite-dimensional representation of $G$ over $\mathbb{R}$.
Then $W$ is completely reducible: every $G$-invariant subspace $U \subseteq W$ admits a $G$-invariant complement $U'$ such that $W = U \oplus U'$.
\end{theorem}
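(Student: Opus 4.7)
The plan is to produce the required $G$-invariant complement $U'$ via the classical averaging trick: first construct a $G$-invariant inner product on $W$, then take $U'$ to be the orthogonal complement of $U$ with respect to this form. The finiteness of $G$ and the real ground field are precisely what make this approach work, since we need to divide by $|G|$ and we want a canonical source of positive-definite forms.

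First I would fix any inner product $\langle \cdot,\cdot\rangle_0$ on $W$; this exists since $W$ is finite-dimensional over $\mathbb{R}$. Then I would define the averaged form
\[
  \langle v,w\rangle_G \;:=\; \frac{1}{|G|}\sum_{g\in G}\langle \rho(g)v,\,\rho(g)w\rangle_0,
\]
and check the three properties of an inner product. Symmetry and bilinearity are immediate term-by-term. For positive-definiteness, note that each summand is nonnegative and the $g=e$ summand alone is strictly positive whenever $v\neq 0$. Next, by reindexing $g\mapsto gh^{-1}$, one verifies $\langle \rho(h)v,\rho(h)w\rangle_G=\langle v,w\rangle_G$ for every $h\in G$, so $\langle\cdot,\cdot\rangle_G$ is $G$-invariant.

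Next I would set $U':=U^{\perp}$ with respect to $\langle\cdot,\cdot\rangle_G$. Standard finite-dimensional linear algebra immediately gives $W=U\oplus U'$ as vector spaces. To see that $U'$ is $G$-invariant, take $u'\in U'$, $h\in G$, and an arbitrary $u\in U$. By $G$-invariance of the form,
\[
  \langle \rho(h)u',\,u\rangle_G \;=\; \langle u',\,\rho(h^{-1})u\rangle_G.
\]
Because $U$ is $G$-invariant, $\rho(h^{-1})u\in U$, and since $u'\in U^\perp$ the right-hand side vanishes. Hence $\rho(h)u'\perp U$, i.e.\ $\rho(h)u'\in U'$, as required. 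Complete reducibility of $W$ then follows by an obvious induction on $\dim W$: if $W$ is not already irreducible, pick a proper nontrivial invariant $U\subseteq W$, decompose $W=U\oplus U'$ via the above, and recurse on both summands.

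There is no deep obstacle here, only a subtle one worth flagging: the entire argument hinges on $|G|$ being invertible in the ground field and on the existence of a positive-definite form. Over fields of positive characteristic dividing $|G|$ the averaging step fails and Maschke's theorem genuinely breaks down, but both hypotheses are automatic in our setting ($G$ finite, field $\mathbb{R}$), so the remaining work is essentially bookkeeping.
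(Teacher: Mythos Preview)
Your argument is correct and is the standard averaging (Weyl unitarian) proof of Maschke's theorem over $\mathbb{R}$. The paper does not actually prove this statement: \cref{thm:maschke} appears in the primer appendix as a quoted background fact, with only the remark that recursively splitting off invariant complements yields a full decomposition into irreducibles. Your write-up therefore supplies strictly more than the paper does, and the only minor point worth tightening is that in the positive-definiteness check every summand $\langle \rho(g)v,\rho(g)v\rangle_0$ is already strictly positive for $v\neq 0$ (since each $\rho(g)$ is invertible), so singling out the $g=e$ term is unnecessary though not wrong.
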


We note that by recursively applying this property, i.e., finding an invariant subspace, splitting it off, and repeating the process on the complement, one can guarantee that any finite-dimensional representation $W$ can be fully decomposed into a direct sum of irreducible subrepresentations.

\section{Proof of \cref{thm:ctx-rel-factorization}}
\label{app:proof-thm2}

In this section, we provide the complete proof of \cref{thm:ctx-rel-factorization}.
We proceed in four logical steps:
\begin{enumerate}
    \item \textbf{Geometric Constraints:} First, we analyze the linear dependencies in the feature space, showing that the kernel of the feature map decomposes according to logical families under SLP (\cref{lem:family-kernel}).
    \item \textbf{Group Action Construction:} Building on this kernel structure, we translate the symbolic symmetries of entity renaming and negation to a well-defined linear representation of the group $H$ on the feature space (\cref{lem:lifting,lem:H-equivariant}).
    \item \textbf{Representation Theoretic Decomposition:} We invoke standard results from representation theory to decompose the feature space into a direct sum of tensor products (\cref{thm:H-decomp}).
    \item \textbf{Derivation of Factorization:} Finally, we utilize the isomorphism of equivariant maps on tensor products (\cref{lem:hom-isomorphism}) to derive the explicit context-relation factorization of the feature map (\cref{thm:ctx-rel-factorization}).
\end{enumerate}

\subsection{Logical Families and Controllability}
Let $V$ be the free real vector space with basis $\{e_q : q \in Q\}$, where $e_q$ is a basis vector corresponding to the query $q$.
Define a linear map
\[
  \Phi : V \longrightarrow W,
  \qquad
  \Phi(e_q) := \phi_q.
\]
By construction, $\ker \Phi$ is the space of all linear dependencies among the feature vectors $\{\phi_q\}$.
Moreover, recall from~\cref{sec:SLP} that the two logical operations in the unary relation algebra generate a group $G := \{\,\mathrm{id},\ \neg,\ \mathrm{rev},\ \neg\circ\mathrm{rev}\,\}$ acting on $Q$, and logical family is defined as:
\[
  G\cdot q := \{\, g(q) : g \in G \,\}.
\]

Intuitively, the second condition of SLP (\cref{def:slp}) states that no unintended collapse happens \emph{across} different logical families.
In other words, linear dependencies arise if and only if they are semantically forced \emph{within} a family.
The following lemma states this formally.

\begin{lemma}[Family-wise kernel decomposition]
\label{lem:family-kernel}
Let $F$ be a logical family and assume SLP (\cref{def:slp}).
Let $V(F) := \mathrm{span}\{e_q : q \in F\} \subseteq V$ and let $\Phi|_{V(F)}$ denote the restriction of $\Phi$ to $V(F)$.
Then any linear relation among features decomposes family-wise, i.e.,
\[
  \ker \Phi \;=\; \bigoplus_{F} \ker \Phi|_{V(F)}.
\]
\end{lemma}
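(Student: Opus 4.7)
The plan is to prove the two inclusions separately. The inclusion $\supseteq$ is immediate: since $V = \bigoplus_F V(F)$ by construction (families partition $Q$), any element of $\bigoplus_F \ker\Phi|_{V(F)}$ lies in $\ker\Phi$ by linearity.

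For the reverse inclusion $\subseteq$, I would take $v \in \ker\Phi$, decompose it uniquely as $v = \sum_F v_F$ with $v_F \in V(F)$, and aim to show each $\Phi(v_F) = 0$ separately. Setting $w_F := \Phi(v_F) \in W_F := \Phi(V(F)) = \mathrm{span}\{\phi_q : q \in F\}$, the hypothesis becomes $\sum_F w_F = 0$. The argument then reduces to showing that the subspaces $\{W_F\}_F$ are linearly independent in $W$. The key observation is that logical equivariance (SLP condition 1) collapses each $W_F$ to a single line: for any $q \in F$, writing $q = g(q_F)$ for some $g \in G = \{\mathrm{id},\neg,\mathrm{rev},\neg\circ\mathrm{rev}\}$ and a chosen representative $q_F$, equivariance forces $\phi_q \in \{\pm \phi_{q_F}\}$, so $W_F = \mathrm{span}\{\phi_{q_F}\}$ is at most one-dimensional. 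Combined with SLP condition 2, which guarantees a choice of representatives whose features are linearly independent in $W$, the subspaces $\{W_F\}_F$ sit in direct sum inside $W$. Hence $w_F = 0$ for every $F$, so $v_F \in \ker\Phi|_{V(F)}$, as required.

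The only potential obstacle I foresee is handling degenerate orbits where $G$ fails to act freely on $Q$ (for instance, $\mathrm{rev}(q) = q$ for a self-converse relation, or coincidental collapses that shrink the family to fewer than four elements). In such cases the equivariance relations remain consistent on the smaller orbit, and the argument that $W_F$ is at most one-dimensional goes through unchanged. A subtler point worth double-checking is that the conclusion does not depend on which representative is picked; this is fine because equivariance canonicalizes each $W_F$ as the unique line $\mathbb{R}\cdot \phi_{q_F}$, independent of the specific choice of $q_F$ within the family, so independence of the chosen representatives translates directly into independence of the subspaces themselves.
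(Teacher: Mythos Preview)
Your proposal is correct and follows essentially the same approach as the paper: both arguments use logical equivariance to collapse each family's feature span $W_F$ to a single line $\mathbb{R}\cdot\phi_{q_F}$, then invoke SLP condition~2 to conclude that these lines are linearly independent, forcing each family component of a kernel element to vanish separately. Your phrasing in terms of the subspaces $\{W_F\}$ sitting in direct sum is slightly more abstract than the paper's explicit coefficient computation $\Phi(v_F)=\beta_F\phi_{q_F^\ast}$, but the content is identical.
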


\begin{proof}
The logical families $\{F\}$ form a partition of $Q$, so the basis $\{e_q : q \in Q\}$ of $V$ splits accordingly into disjoint subsets $\{e_q : q \in F\}$.
Hence
\[
  V \;=\; \bigoplus_F V(F),
\]
and every $v \in V$ can be written uniquely as $v = \sum_F v_F$ with $v_F \in V(F)$.

By logical equivariance (\cref{def:logical-equivariance}), for each family $F$, we can choose a representative $q_F^\ast \in F$ such that, for every $q \in F$, the feature vector $\phi_q$ is either $\phi_{q_F^\ast}$ or $-\phi_{q_F^\ast}$.
Equivalently, for each $q \in F$ there exists a scalar $\lambda_{q, q^\ast} \in \{+1,-1\}$ with
\[
  \phi_q \;=\; \lambda_{q, q^\ast} \,\phi_{q_F^\ast}.
\]
By the second condition of SLP (\cref{def:slp}), the chosen representatives can be taken so that the set $\{\phi_{q_F^\ast} : F \text{ is a logical family}\}$ is linearly independent in $W$.

Now take any $v \in \ker \Phi$ and decompose it as $v = \sum_F v_F$ with $v_F \in V(F)$.
Write each component as
\[
  v_F \;=\; \sum_{q \in F} \alpha_q \, e_q.
\]
Then
\[
  \Phi(v_F)
  \;=\;
  \sum_{q \in F} \alpha_q \,\phi_q
  \;=\;
  \sum_{q \in F} \alpha_q \,\lambda_{q, q^\ast} \,\phi_{q_F^\ast}
  \;=\;
  \beta_F \,\phi_{q_F^\ast},
\]
where we define
\[
  \beta_F := \sum_{q \in F} \alpha_q \,\lambda_{q, q^\ast}.
\]

Since $v \in \ker \Phi$, we have
\[
  0
  \;=\;
  \Phi(v)
  \;=\;
  \sum_F \Phi(v_F)
  \;=\;
  \sum_F \beta_F \,\phi_{q_F^\ast}.
\]
The vectors $\{\phi_{q_F^\ast}\}_F$ are linearly independent, so each coefficient must vanish:
\[
  \beta_F = 0
  \quad \text{for all } F.
\]
Hence $\Phi(v_F) = \beta_F \phi_{q_F^\ast} = 0$ for every family $F$, i.e.\ $v_F \in \ker \Phi|_{V(F)}$.
Since $v = \sum_F v_F$ and the decomposition $V = \bigoplus_F V(F)$ is unique, this shows that every element of $\ker\Phi$ decomposes uniquely as a sum of elements from the subspaces $\ker\Phi|_{V(F)}$, and therefore
\[
  \ker \Phi \;=\; \bigoplus_F \ker \Phi|_{V(F)}.
\]
\end{proof}

\subsection{Diagonal Renaming of Entities}

We now formalize the idea that logical notions should be insensitive to the names of entities.

\begin{definition}[Entity renaming]
\label{def:GE-action}
Let $G_E := \mathrm{Sym}(E)$ be the permutation group of $E$.
We let $G_E$ act on $Q$ by renaming entities uniformly in both argument slots:
\[
  \sigma \cdot (h, r, t)
  \;:=\;
  (\sigma(h),\, r,\, \sigma(t))
  \quad
  (\sigma \in G_E,\ (h,r,t)\in Q).
\]
\end{definition}

An important observation is that, since entity renaming permutes only entities and logical operations act only on relations, \textbf{the two operations commute}.

\begin{lemma}[Symbolic renaming invariance]
\label{lem:symbolic-renaming}
For any $\sigma \in G_E$ and any $q \in Q$ we have
\[
  \neg(\sigma \cdot q)
  \;=\;
  \sigma \cdot \neg(q),
  \qquad
  \mathrm{rev}(\sigma \cdot q)
  \;=\;
  \sigma \cdot \mathrm{rev}(q).
\]
\end{lemma}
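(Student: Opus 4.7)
The plan is to prove both identities by direct unfolding of the three definitions involved: the $G_E$-action on $Q$ from \cref{def:GE-action}, the negation action $\neg(h,r,t)=(h,\neg r,t)$, and the converse action $\mathrm{rev}(h,r,t)=(t,r^{\smile},h)$. The key structural observation is that these operations act on disjoint ``slots'' of the triple: $\sigma$ relabels only the entity coordinates, $\neg$ touches only the relation slot, and $\mathrm{rev}$ transposes the entity slots while rewriting the relation by $(\cdot)^{\smile}$. Crucially, none of the latter two operations depend on the particular identities of the entities, so renaming entities cannot interfere with them.

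For the negation identity, I would fix $q=(h,r,t)$ and compute both sides. On the one hand, $\neg(\sigma\cdot q)=\neg(\sigma(h),r,\sigma(t))=(\sigma(h),\neg r,\sigma(t))$. On the other hand, $\sigma\cdot\neg(q)=\sigma\cdot(h,\neg r,t)=(\sigma(h),\neg r,\sigma(t))$. The two triples coincide. For the converse identity the calculation is analogous: $\mathrm{rev}(\sigma\cdot q)=\mathrm{rev}(\sigma(h),r,\sigma(t))=(\sigma(t),r^{\smile},\sigma(h))$, while $\sigma\cdot\mathrm{rev}(q)=\sigma\cdot(t,r^{\smile},h)=(\sigma(t),r^{\smile},\sigma(h))$. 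Here the only subtlety worth highlighting is that $\mathrm{rev}$ both transposes the entity slots and substitutes $r\mapsto r^{\smile}$, and the diagonal nature of the $G_E$-action (applying the same $\sigma$ in both entity slots) is exactly what makes the two compositions commute; a non-diagonal renaming would fail this step.

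Since both assertions reduce to one-line definition chases, there is no substantive obstacle and no appeal to deeper structure such as the set-theoretic definitions of $\neg r$ or $r^{\smile}$ as subsets of $U$ is required at the query level. The value of the lemma lies not in its difficulty but in its role: it is the symbolic input that later licenses assembling $G_E$ and $\mathbb{Z}_2$ into the product group $H=G_E\times\mathbb{Z}_2$ acting on $Q$, and therefore underpins the passage from the symbolic action to the linear representation on $W$ used in the proof of \cref{thm:ctx-rel-factorization}.
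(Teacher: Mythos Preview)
Your proposal is correct and follows essentially the same approach as the paper, which simply states that the lemma is a direct consequence of the definitions of the logical operations and of \cref{def:GE-action}. Your version merely spells out the one-line definition chase that the paper leaves implicit.
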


\begin{proof}
This is the direct consequence of the definition of logical operations in the tiny relation algebra and \cref{def:GE-action}.
\end{proof}

In other words, if two queries are in the same logical family, then their renamings are in the same family.

Our goal is to show that this renaming symmetry \emph{translates} to a linear symmetry of $W$.
To this end, we first show that the kernel structure is preserved upon entity renaming.
For each $\sigma \in G_E$ define a linear map $P_\sigma : V \to V$ on basis vectors by
\[
  P_\sigma e_q := e_{\sigma \cdot q}.
\]

\begin{lemma}[Kernel invariance under renaming]
\label{lem:kernel-invariance}
For every $\sigma \in G_E$, if $v \in \ker \Phi$, then $P_\sigma v \in \ker \Phi$.
\end{lemma}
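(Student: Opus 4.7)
\smallskip
\noindent\textbf{Proof proposal.} The plan is to reduce the claim to a family-by-family statement using \cref{lem:family-kernel}, and then to track, within each family, how the sign pattern predicted by logical equivariance transforms under renaming. Concretely, by \cref{lem:family-kernel} we may decompose any $v\in\ker\Phi$ as $v=\sum_F v_F$ with $v_F\in \ker\Phi|_{V(F)}$. Since $P_\sigma$ sends the basis vector $e_q$ to $e_{\sigma\cdot q}$, it maps $V(F)$ linearly into $V(\sigma\cdot F)$; by \cref{lem:symbolic-renaming}, the image $\sigma\cdot F$ is itself a logical family, so it suffices to show $P_\sigma\bigl(\ker\Phi|_{V(F)}\bigr)\subseteq \ker\Phi|_{V(\sigma\cdot F)}$ for each family $F$.

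To establish the family-local containment, I would exploit the characterization of $\ker\Phi|_{V(F)}$ in terms of the signs $\lambda_{q,q_F^\ast}\in\{+1,-1\}$ introduced in the proof of \cref{lem:family-kernel}. Writing an arbitrary $v_F=\sum_{q\in F}\alpha_q e_q$, the condition $v_F\in\ker\Phi|_{V(F)}$ is equivalent to $\sum_{q\in F}\alpha_q\lambda_{q,q_F^\ast}=0$ (after factoring $\phi_{q_F^\ast}$). The key step is to show the same linear identity holds after renaming. Applying $P_\sigma$ yields $\Phi(P_\sigma v_F)=\sum_{q\in F}\alpha_q\phi_{\sigma\cdot q}$, and I would then rewrite each $\phi_{\sigma\cdot q}$ in terms of $\phi_{q_{\sigma\cdot F}^\ast}$ using two scalars: the renaming-transported sign $\lambda_{\sigma\cdot q,\sigma\cdot q_F^\ast}$, and a single family-level correction $\mu_\sigma:=\lambda_{\sigma\cdot q_F^\ast,\, q_{\sigma\cdot F}^\ast}$ accounting for the possibly different chosen representative of the image family.

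The only non-trivial point, which I would regard as the main obstacle, is establishing the renaming-invariance of the intra-family signs, $\lambda_{\sigma\cdot q,\sigma\cdot q_F^\ast}=\lambda_{q,q_F^\ast}$. This will follow from \cref{lem:symbolic-renaming}: every $q\in F$ can be written as $g(q_F^\ast)$ for some $g\in G=\{\mathrm{id},\neg,\mathrm{rev},\neg\circ\mathrm{rev}\}$, and since renaming commutes with $\neg$ and $\mathrm{rev}$, we also have $\sigma\cdot q=g(\sigma\cdot q_F^\ast)$. Logical equivariance (\cref{def:logical-equivariance}) then forces the sign to depend only on whether $g$ contains a $\neg$-factor, which is intrinsic to $g$ and independent of $\sigma$. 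Combining these steps gives
\[
\Phi(P_\sigma v_F)=\mu_\sigma\Bigl(\sum_{q\in F}\alpha_q\lambda_{q,q_F^\ast}\Bigr)\phi_{q_{\sigma\cdot F}^\ast}=0,
\]
so $P_\sigma v_F\in\ker\Phi|_{V(\sigma\cdot F)}$. Summing over families yields $P_\sigma v\in\ker\Phi$, as required.
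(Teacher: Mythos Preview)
Your proposal is correct and follows essentially the same route as the paper: reduce to a single family via \cref{lem:family-kernel}, use the scalar kernel characterization $\sum_{q\in F}\alpha_q\lambda_{q,q_F^\ast}=0$, and show sign preservation under renaming by writing $q=g(q_F^\ast)$ and invoking \cref{lem:symbolic-renaming} so that the sign depends only on $g$ (equivalently, the paper's sign homomorphism $\chi:G\to\{\pm 1\}$). The only cosmetic difference is that the paper takes $\sigma\cdot q_F^\ast$ itself as the representative of the image family, which eliminates your correction factor $\mu_\sigma$ and slightly streamlines the final computation.
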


\begin{proof}
Let $v \in \ker\Phi$. By \cref{lem:family-kernel}, it suffices to show the invariance for a component $v_F \in \ker\Phi|_{V(F)}$ entirely contained in a single logical family $F$.
By the logical equivariance assumption, the feature vectors within a family $F$ are all collinear.
Specifically, fix a representative $q^\ast \in F$. Then for any $q \in F$, there exists $\lambda_{q, q^\ast} \in \{+1, -1\}$ such that
\[
    \phi_q \;=\; \lambda_{q, q^\ast} \, \phi_{q^\ast}.
\]
Consequently, a vector $v_F = \sum_{q \in F} \alpha_q e_q$ lies in $\ker\Phi$ if and only if
\[
    \Phi(v_F) \;=\; \sum_{q \in F} \alpha_q \phi_q \;=\; \left( \sum_{q \in F} \alpha_q \lambda_{q, q^\ast} \right) \phi_{q^\ast} \;=\; 0.
\]
Since $\phi_{q^\ast} \neq 0$ (by the second condition of SLP~\cref{def:slp}), the condition for membership in the kernel is purely scalar:
\begin{equation}
    \label{eq:kernel-condition}
    \sum_{q \in F} \alpha_q \, \lambda_{q, q^\ast} \;=\; 0.
\end{equation}

Now consider the transformed vector $P_\sigma v_F = \sum_{q \in F} \alpha_q e_{\sigma \cdot q}$.
This vector is supported on the permuted family $\sigma(F)$.
Let $p^\ast := \sigma \cdot q^\ast$ be the representative for $\sigma(F)$.
For any $q \in F$, let $p := \sigma \cdot q$.
By definition of a logical family (orbit under the logical-operation group $G$ generated by $\neg$ and $\mathrm{rev}$),
for any $q\in F$ there exists $g\in G$ such that
\[
q = g\cdot q^\ast.
\]
We claim that entity renaming commutes with every $g\in G$:
\begin{equation}
\label{eq:sigma-commutes-with-g}
\sigma\cdot(g\cdot x) \;=\; g\cdot(\sigma\cdot x)
\qquad \forall\,\sigma\in G_E,\ \forall\,g\in G,\ \forall\,x\in Q.
\end{equation}
Indeed, \cref{lem:symbolic-renaming} gives this commutation for the generators $\neg$ and $\mathrm{rev}$, and
\cref{eq:sigma-commutes-with-g} follows for arbitrary $g$ by closure under composition in $G$.

Applying \cref{eq:sigma-commutes-with-g} to $x=q^\ast$ yields
\[
p:=\sigma\cdot q
=\sigma\cdot(g\cdot q^\ast)
=g\cdot(\sigma\cdot q^\ast)
=:g\cdot p^\ast,
\]
so the same $g$ relates $p$ to $p^\ast$.

Next, by the logical equivariance condition, there is a sign homomorphism $\chi: G \to \{+1, -1\}$ such that for all $x \in Q$,
\begin{equation}
\label{eq:sign-action-of-g}
\phi_{g\cdot x} \;=\; \chi(g)\,\phi_x.
\end{equation}

Using \cref{eq:sign-action-of-g} with $x=q^\ast$ and $x=p^\ast$, we relate the scalar coefficients to $\chi$:
\[
\phi_q = \phi_{g\cdot q^\ast} = \chi(g)\phi_{q^\ast} \quad \implies \quad \lambda_{q, q^\ast} = \chi(g),
\]
\[
\phi_p = \phi_{g\cdot p^\ast} = \chi(g)\phi_{p^\ast} \quad \implies \quad \lambda_{p, p^\ast} = \chi(g).
\]
Therefore, the relative signs are preserved:
\[
\lambda_{p, p^\ast} \;=\; \chi(g) \;=\; \lambda_{q, q^\ast}.
\]

Using the equality above, we finally show that $P_\sigma v_F \in \ker \Phi$:
\begin{align*}
    \Phi(P_\sigma v_F)
    &\;=\; \sum_{q \in F} \alpha_q \phi_{\sigma \cdot q} \\
    &\;=\; \sum_{q \in F} \alpha_q \, \lambda_{\sigma \cdot q, \sigma \cdot q^\ast} \, \phi_{p^\ast} \\
    &\;=\; \left( \sum_{q \in F} \alpha_q \, \lambda_{q, q^\ast} \right) \phi_{p^\ast}.
\end{align*}
By equation~\cref{eq:kernel-condition}, the coefficient sum is zero.
Thus $\Phi(P_\sigma v_F) = 0$, proving that $P_\sigma v_F \in \ker\Phi$.
\end{proof}

Using \cref{lem:kernel-invariance}, we can translate the entity renaming to a linear symmetry on the feature space.

\begin{lemma}
\label{lem:lifting}
Suppose that for each $\sigma \in G_E$ we have $P_\sigma(\ker \Phi) \subseteq \ker \Phi$.
Then there exists a unique linear map $\rho_E(\sigma) : W \to W$ such that
\[
  \phi_{\sigma \cdot q}
  \;=\;
  \rho_E(\sigma)\,\phi_q
  \quad \forall q \in Q.
\]
Moreover, the assignment $\sigma \mapsto \rho_E(\sigma)$ defines a group representation $\rho_E : G_E \to GL(W)$.
\end{lemma}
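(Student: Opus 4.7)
The plan is to construct $\rho_E(\sigma)$ by descending $P_\sigma$ through the quotient $V/\ker\Phi$, using that the first isomorphism theorem identifies this quotient with $W = \Img\Phi$. Concretely, I would define
\[
  \rho_E(\sigma)\bigl(\Phi(v)\bigr) \;:=\; \Phi(P_\sigma v)
  \qquad (v \in V),
\]
and the first task is to check that this is well-defined on $W$. If $\Phi(v)=\Phi(v')$ then $v-v' \in \ker\Phi$, and the hypothesis $P_\sigma(\ker\Phi)\subseteq\ker\Phi$ gives $P_\sigma(v-v')\in\ker\Phi$, hence $\Phi(P_\sigma v)=\Phi(P_\sigma v')$. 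Linearity of $\rho_E(\sigma)$ then follows immediately from linearity of both $P_\sigma$ and $\Phi$.

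Next I would verify the defining identity by evaluating on the basis elements $e_q$: since $\Phi(e_q)=\phi_q$ and $P_\sigma e_q = e_{\sigma\cdot q}$, the definition yields
\[
  \rho_E(\sigma)\,\phi_q \;=\; \Phi(e_{\sigma\cdot q}) \;=\; \phi_{\sigma\cdot q},
\]
which is exactly the desired equivariance. Uniqueness is immediate: any linear map on $W$ satisfying this identity is determined by its values on the spanning set $\{\phi_q : q \in Q\}$, and the identity prescribes those values.

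For the representation property, I would compute on basis vectors. Because the action is associative, $P_{\sigma\tau}e_q = e_{(\sigma\tau)\cdot q} = e_{\sigma\cdot(\tau\cdot q)} = P_\sigma P_\tau e_q$, so $P_{\sigma\tau}=P_\sigma P_\tau$ on $V$. Descending through $\Phi$, this yields $\rho_E(\sigma\tau)=\rho_E(\sigma)\rho_E(\tau)$ on $W$. Similarly $P_{\mathrm{id}}=\Id_V$ descends to $\rho_E(\mathrm{id})=\Id_W$. Invertibility of $\rho_E(\sigma)$ then follows automatically: $\rho_E(\sigma)\rho_E(\sigma^{-1})=\rho_E(\mathrm{id})=\Id_W$, so $\rho_E(\sigma)\in GL(W)$ and $\sigma\mapsto\rho_E(\sigma)$ is a group homomorphism $G_E \to GL(W)$.

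The proof is essentially a standard quotient-descent argument, so I do not anticipate a serious obstacle; the only substantive content is the well-definedness check, which is precisely what the kernel-invariance hypothesis $P_\sigma(\ker\Phi)\subseteq\ker\Phi$ was designed to guarantee. Everything else (linearity, the equivariance identity, uniqueness, the homomorphism property, and invertibility) follows by unwinding definitions on the basis $\{e_q\}$.
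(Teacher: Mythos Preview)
Your proof is correct and follows essentially the same quotient-descent argument as the paper: define $\rho_E(\sigma)\Phi(v):=\Phi(P_\sigma v)$, use the kernel-invariance hypothesis for well-definedness, verify the equivariance identity on basis vectors, and deduce uniqueness and the homomorphism property from the spanning set $\{\phi_q\}$. The only minor addition you make is explicitly noting $\rho_E(\mathrm{id})=\Id_W$ and deducing invertibility, which the paper leaves implicit.
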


\begin{proof}
Recall that $\Phi : V \to W$ is surjective by definition of $W = \mathrm{span}\{\phi_q: q \in Q\}$.
For a fixed $\sigma \in G_E$, we define $\rho_E(\sigma)$ as follows:
given any $w \in W$, choose $v \in V$ with $\Phi(v) = w$ and set
\[
  \rho_E(\sigma)\,w \;:=\; \Phi\bigl(P_\sigma v\bigr).
\]

We first check that this does not depend on the choice of $v$.
Suppose $v, v' \in V$ satisfy $\Phi(v) = \Phi(v')$, i.e.\ $\Phi(v - v') = 0$, so $v - v' \in \ker \Phi$.
By assumption, $P_\sigma(\ker \Phi) \subseteq \ker \Phi$, hence
\[
  \Phi\bigl(P_\sigma(v - v')\bigr) = 0,
\]
which implies
\[
  \Phi\bigl(P_\sigma v\bigr) = \Phi\bigl(P_\sigma v'\bigr).
\]
Thus, $\rho_E(\sigma)\,w$ is well-defined.
Linearity of $\rho_E(\sigma)$ follows immediately from the linearity of $\Phi$ and $P_\sigma$.

Now, we apply this definition to the basis elements.
For $q \in Q$, we have $e_q \in V$ and $\Phi(e_q) = \phi_q$, so
\[
  \rho_E(\sigma)\,\phi_q
  \;=\;
  \rho_E(\sigma)\,\Phi(e_q)
  \;=\;
  \Phi\bigl(P_\sigma e_q\bigr)
  \;=\;
  \Phi(e_{\sigma \cdot q})
  \;=\;
  \phi_{\sigma \cdot q}.
\]

To check its uniqueness, observe that the vectors $\{\phi_q : q \in Q\}$ span $W$, so any linear map $T : W \to W$ satisfying $T\,\phi_q = \phi_{\sigma \cdot q}$ for all $q$ must agree with $\rho_E(\sigma)$ on a spanning set, hence must be equal to $\rho_E(\sigma)$.

Finally, we verify the group property.
For any $\sigma_1, \sigma_2 \in G_E$ and any, $w = \Phi(v)$ we have
\[
  \rho_E(\sigma_1)\,\rho_E(\sigma_2)\,w
  \;=\;
  \rho_E(\sigma_1)\,\Phi\bigl(P_{\sigma_2} v\bigr)
  \;=\;
  \Phi\bigl(P_{\sigma_1} P_{\sigma_2} v\bigr)
  \;=\;
  \Phi\bigl(P_{\sigma_1 \sigma_2} v\bigr)
  \;=\;
  \rho_E(\sigma_1 \sigma_2)\,w.
\]
Thus, $\rho_E(\sigma_1)\rho_E(\sigma_2) = \rho_E(\sigma_1 \sigma_2)$ on all of $W$, and $\rho_E : G_E \to GL(W)$ is a representation.
\end{proof}

Hence, SLP implies that the linearized features carry a compatible entity-renaming symmetry.

\subsection{Combining Renaming and Negation}

We now combine entity renaming with relation negation into a single symmetry structure.
Recall from \cref{def:logical-equivariance} that logical equivariance of negation says
\[
  \phi_{\neg q} = -\,\phi_q \quad \forall q \in Q,
\]
i.e., relation-level negation always flips the feature vector by a global sign $-1$, regardless of which entities appear.

Entity renaming acts only on the head and tail slots, while negation acts only on the relation slot.
We want to treat these two operations together as a single family of joint transformations.

\begin{definition}[Combined symmetry group]
\label{def:H}
Let $\mathbb{Z}_2 := \{+1,-1\}$ be the two-element group with multiplication.
We define
\[
  H := G_E \times \mathbb{Z}_2.
\]
An element $(\sigma,\epsilon) \in H$ acts on a query by
\[
  (\sigma,\epsilon)\cdot(h,r,t) := \bigl(\sigma(h),\, \epsilon \cdot r,\, \sigma(t)\bigr),
\]
where $\epsilon = +1$ means “keep the relation as it is” and $\epsilon = -1$ means “replace $r$ by its negation $\neg r$”.
  
On the feature space $W$ we let $(\sigma,\epsilon)$ act linearly by
\[
  \rho(\sigma,+1) := \rho_E(\sigma), \qquad \rho(\sigma,-1) := -\,\rho_E(\sigma),
\]
so that $\rho(\sigma,\epsilon)$ first applies the renaming operator $\rho_E(\sigma)$ and then, if $\epsilon=-1$, flips the sign of the feature vector.
In other words, for any $w \in W$,
\[
  \rho(\sigma,\epsilon)\,w :=
  \begin{cases}
    \rho_E(\sigma)\,w & \text{if }\epsilon = +1,\\[0.3em]
    -\,\rho_E(\sigma)\,w & \text{if }\epsilon = -1.
  \end{cases}
\]
\end{definition}

Because $\rho_E$ respects composition of renamings and $(-1)^2 = +1$, the maps $\rho(\sigma,\epsilon)$ also respect composition:
applying $(\sigma_1,\epsilon_1)$ and then $(\sigma_2,\epsilon_2)$ has the same effect on features as applying $(\sigma_1\sigma_2,\epsilon_1\epsilon_2)$ once.
This means $H$ acts consistently and linearly on $W$.
Hence, logical equivariance and renaming equivariance can now be combined into a single statement as below.

\begin{lemma}[Equivariant feature map]
\label{lem:H-equivariant}
The map $\phi : Q \to W$ is compatible with the combined symmetry action of $H$ in the sense that, for all $(\sigma,\epsilon)\in H$ and all $q\in Q$,
\[
  \phi\bigl((\sigma,\epsilon)\cdot q\bigr) = \rho(\sigma,\epsilon)\,\phi_q.
\]
\end{lemma}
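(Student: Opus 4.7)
The plan is to verify the identity $\phi((\sigma,\epsilon)\cdot q)=\rho(\sigma,\epsilon)\phi_q$ by case analysis on $\epsilon\in\{+1,-1\}$, reducing each case to one of the two building blocks already in place: Lemma~\ref{lem:lifting} for the entity-renaming factor, and the defining relation $\phi_{\neg q}=-\phi_q$ from Definition~\ref{def:logical-equivariance} for the sign factor. First, I would note that Lemma~\ref{lem:lifting} genuinely applies: its hypothesis $P_\sigma(\ker\Phi)\subseteq\ker\Phi$ has been verified in Lemma~\ref{lem:kernel-invariance} as a consequence of SLP, so we already have a well-defined linear map $\rho_E(\sigma):W\to W$ with $\phi_{\sigma\cdot q}=\rho_E(\sigma)\phi_q$ for every $q\in Q$.

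The case $\epsilon=+1$ is then immediate: by Definition~\ref{def:H}, $(\sigma,+1)\cdot q=\sigma\cdot q$ and $\rho(\sigma,+1)=\rho_E(\sigma)$, so
\[
\phi((\sigma,+1)\cdot q)=\phi_{\sigma\cdot q}=\rho_E(\sigma)\phi_q=\rho(\sigma,+1)\phi_q.
\]
For $\epsilon=-1$, Definition~\ref{def:H} unpacks $(\sigma,-1)\cdot(h,r,t)=(\sigma(h),\neg r,\sigma(t))$, which, by Definition~\ref{def:GE-action} applied to the query $\neg q=(h,\neg r,t)$, coincides with $\sigma\cdot\neg q$. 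Applying Lemma~\ref{lem:lifting} with input $\neg q$ then yields $\phi(\sigma\cdot\neg q)=\rho_E(\sigma)\phi_{\neg q}$, and logical equivariance rewrites $\phi_{\neg q}=-\phi_q$. Combining,
\[
\phi((\sigma,-1)\cdot q)=\rho_E(\sigma)(-\phi_q)=-\rho_E(\sigma)\phi_q=\rho(\sigma,-1)\phi_q,
\]
matching the definition $\rho(\sigma,-1):=-\rho_E(\sigma)$.

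Since Lemma~\ref{lem:lifting} already establishes that $\rho_E$ is a homomorphism and $(-1)^2=1$ in $\mathbb{Z}_2$, the factored maps $\rho(\sigma,\epsilon)$ inherit the group law of $H=G_E\times\mathbb{Z}_2$, so $\rho$ is a bona fide representation and the claimed equivariance is established on a spanning set of $W$. There is no real obstacle here; the lemma is essentially bookkeeping that packages the independently proved renaming equivariance and negation sign-flip into a single $H$-equivariance statement. The only subtlety worth flagging is ensuring that the symbolic action of $(\sigma,-1)$ is unambiguous: Lemma~\ref{lem:symbolic-renaming} guarantees $\sigma\cdot\neg q=\neg(\sigma\cdot q)$, so whether we read $(\sigma,-1)$ as ``rename then negate'' or ``negate then rename'' the resulting query, and hence the resulting feature, are identical.
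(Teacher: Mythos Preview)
Your proof is correct and follows essentially the same approach as the paper: a case split on $\epsilon\in\{+1,-1\}$, invoking Lemma~\ref{lem:lifting} for the renaming factor and logical equivariance for the sign flip, with Lemma~\ref{lem:symbolic-renaming} handling the commutation $\sigma\cdot\neg q=\neg(\sigma\cdot q)$. Your additional remarks (that Lemma~\ref{lem:kernel-invariance} supplies the hypothesis of Lemma~\ref{lem:lifting}, and that $\rho$ inherits the group law) are accurate and simply make explicit what the paper leaves implicit.
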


\begin{proof}
When $\epsilon = +1$, we have
\[
  (\sigma,+1)\cdot(h,r,t) = (\sigma(h),r,\sigma(t)),
\]
and this is represented on features by $\rho_E(\sigma)$, i.e.\ $\phi(\sigma\cdot q) = \rho_E(\sigma)\,\phi_q = \rho(\sigma,+1)\,\phi_q$.

When $\epsilon = -1$, we have
\[
  (\sigma,-1)\cdot(h,r,t) = (\sigma(h),\neg r,\sigma(t)).
\]
By logical equivariance of negation, $\phi_{\neg q} = -\phi_q$ for every $q$, and by \cref{lem:symbolic-renaming}, negation commutes with renaming on the symbolic side.
Combining these facts, we obtain
\[
  \phi\bigl((\sigma,-1)\cdot q\bigr)
  = \phi\bigl(\sigma\cdot(\neg q)\bigr)
  = \rho_E(\sigma)\,\phi_{\neg q}
  = \rho_E(\sigma)\,(-\phi_q)
  = -\,\rho_E(\sigma)\,\phi_q
  = \rho(\sigma,-1)\,\phi_q,
\]
as claimed.
\end{proof}

In words, performing a renaming and optional negation on the symbolic query side has the same effect as applying the corresponding linear operator $\rho(\sigma,\epsilon)$ on the feature side.

\subsection{Decomposition of $H$-representations}
Using the Lemmas above, we prove~\cref{thm:H-decomp}, which will play a crucial role in proving~\cref{thm:ctx-rel-factorization}.

\begin{restatable}[Decomposition of $H$-representations]{lemma}{ThmHDecomp}
\label{thm:H-decomp}
Let $W$ be a finite-dimensional real representation of $H = G_E \times \mathbb{Z}_2$.
Then $W$ decomposes into a direct sum of tensor-product blocks:
\[
  W \;\cong\; \bigoplus_{i=1}^m \bigl(C_i \otimes R_i\bigr),
\]
where each $C_i$ is an irreducible representation of the entity group $G_E$, and each $R_i$ is an irreducible representation of the logical group $\mathbb{Z}_2$.
\end{restatable}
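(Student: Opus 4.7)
The plan is to exploit the product structure of $H = G_E \times \mathbb{Z}_2$ by decomposing first under $\mathbb{Z}_2$ and then under $G_E$. The key observation is that the two factors commute, so the eigenspace decomposition from one factor is preserved by the action of the other. Since $H$ is a finite group and $W$ is a finite-dimensional real representation, Maschke's theorem (\cref{thm:maschke}) is available throughout.

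First, I would consider the single operator $\tau := \rho(\mathrm{id}_{G_E}, -1) \in GL(W)$. Because $(\mathrm{id}, -1)$ has order two in $H$, we have $\tau^2 = \mathrm{Id}_W$. Hence $\tau$ is diagonalizable over $\mathbb{R}$ with eigenvalues in $\{+1, -1\}$, yielding the splitting
\[
  W \;=\; W^+ \oplus W^-, \qquad W^{\pm} := \{\, w \in W : \tau w = \pm w \,\}.
\]
Next I would verify that each $W^{\pm}$ is $G_E$-invariant: since $(\sigma, +1)$ and $(\mathrm{id}, -1)$ commute in $H$, their images under $\rho$ commute in $GL(W)$, so $\rho_E(\sigma)$ preserves the eigenspaces of $\tau$. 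Hence $W^+$ and $W^-$ are genuine subrepresentations for the full group $H$, with $\mathbb{Z}_2$ acting trivially on $W^+$ and by the sign character on $W^-$.

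Then I would apply Maschke's theorem to $W^+$ and $W^-$ \emph{as representations of} $G_E$, giving irreducible decompositions
\[
  W^+ \;\cong\; \bigoplus_{i \in I^+} C_i, \qquad W^- \;\cong\; \bigoplus_{j \in I^-} C_j,
\]
where each $C_i, C_j$ is an irreducible real representation of $G_E$. Letting $R_{\mathrm{triv}}$ and $R_{\mathrm{sgn}}$ denote the two one-dimensional irreducibles of $\mathbb{Z}_2$, every irreducible $G_E$-summand of $W^+$ can be canonically identified with $C_i \otimes R_{\mathrm{triv}}$ as an $H$-representation, and every summand of $W^-$ with $C_j \otimes R_{\mathrm{sgn}}$. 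Assembling these blocks yields the claimed isomorphism $W \cong \bigoplus_i (C_i \otimes R_i)$ of $H$-representations.

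The only subtle point, which I expect to be the main thing to pin down carefully, is that the tensor-product identifications are indeed $H$-equivariant rather than merely $G_E$-equivariant. This amounts to checking that the $\mathbb{Z}_2$-action on a summand of $W^\pm$ coincides with the induced action on $C_i \otimes R_i$ under the identification, which is immediate from the construction because $\mathbb{Z}_2$ acts by a global scalar $\pm 1$ on each eigenspace and $G_E$ acts trivially on the one-dimensional factor $R_i$. The rest of the argument is a direct application of the primer material in \cref{app:primer-maschke,app:primer-product-groups}.
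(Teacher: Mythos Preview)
Your proposal is correct and follows essentially the same approach as the paper: decompose $W$ into the $\pm 1$ eigenspaces of the central involution $\rho(\mathrm{id},-1)$, observe these are $H$-invariant, apply Maschke's theorem to each as a $G_E$-representation, and then identify the summands with $C_i\otimes\mathbf{1}$ or $C_i\otimes\mathrm{sgn}$. The paper's proof differs only in notation (it names the central element $z$ rather than $\tau$) and in making the verification of $H$-equivariance slightly more explicit, but the structure and all key steps are the same.
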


\begin{proof}
We explicitly construct the decomposition using the structure of $\mathbb{Z}_2 = \{1, -1\}$.
Let $z := (\mathrm{id}, -1) \in H$.
Since $z$ commutes with every element in $H$ (i.e., $zh=hz$ for all $h$), the linear map $\rho(z)$ must commute with the group action:
\[
  \rho(z)\rho(h) \;=\; \rho(zh) \;=\; \rho(hz) \;=\; \rho(h)\rho(z)
  \quad \text{for all } h \in H.
\]
Also, since $z^2 = (\mathrm{id}, 1) = e$, we have $\rho(z)^2 = I$.
Thus, $\rho(z)$ is an involution and $W$ can be decomposed into eigenspaces corresponding to eigenvalues $+1$ and $-1$:
\[
  W \;=\; W^+ \oplus W^-,
  \quad \text{where } W^\pm := \{ w \in W : \rho(z)w = \pm w \}.
\]
Crucially, these eigenspaces are $H$-invariant.
To see this, let $w \in W^\pm$ and $h \in H$. Using the commutativity shown above:
\[
  \rho(z) \bigl( \rho(h)w \bigr)
  \;=\; \rho(h) \bigl( \rho(z)w \bigr)
  \;=\; \rho(h) (\pm w)
  \;=\; \pm \bigl( \rho(h)w \bigr).
\]
This shows that if $w \in W^\pm$, then the transformed vector $\rho(h)w$ also satisfies the condition to be in $W^\pm$.
Thus, $W^+$ and $W^-$ are subrepresentations of $H$.

We now determine the action of a general element $(\sigma, \epsilon) \in H$ on these subspaces.
Note that, any element factors as $(\sigma, \epsilon) = (\sigma, 1)(\mathrm{id}, \epsilon)$.
Then, the logical factor $(\mathrm{id}, \epsilon)$ acts on the subspaces as:
\[
  \rho(\mathrm{id}, \epsilon)\big|_{W^+} = I,
  \qquad
  \rho(\mathrm{id}, \epsilon)\big|_{W^-} = \epsilon I.
\]
Consequently, the full action is:
\[
  \rho(\sigma, \epsilon)w \;=\; \rho(\sigma, 1)\rho(\mathrm{id}, \epsilon)w \;=\;
  \begin{cases}
    \rho(\sigma, 1)w & \text{if } w \in W^+, \\
    \epsilon \cdot \rho(\sigma, 1)w & \text{if } w \in W^-.
  \end{cases}
\]
Since $W$ is finite-dimensional, we can apply Maschke's Theorem (\cref{thm:maschke}) to decompose $W^+$ and $W^-$ into irreducible representations of $G_E$ (via the map $\sigma \mapsto \rho(\sigma, 1)$):
\[
  W^+ \cong \bigoplus_{j \in J_+} C_j,
  \qquad
  W^- \cong \bigoplus_{j \in J_-} C_j.
\]

Finally, we map this structure to the tensor product form claimed in the theorem.
Let $\mathbf{1}$ and $\mathrm{sgn}$ denote the two irreducible real representations of $\mathbb{Z}_2$, respectively.
Each irreducible $G_E$-summand $C \subseteq W^+$ is an $H$-subrepresentation on which $(\sigma,\epsilon)$ acts by $\rho(\sigma,1)$ alone, hence it is $H$-isomorphic to $C \otimes \mathbf{1}$ (where $(\sigma,\epsilon)\cdot (c\otimes 1)=(\sigma c)\otimes 1$).
Likewise, each irreducible $G_E$-summand $C \subseteq W^-$ is an $H$-subrepresentation on which $(\sigma,\epsilon)$ acts by $\epsilon\,\rho(\sigma,1)$, hence it is $H$-isomorphic to $C \otimes \mathrm{sgn}$ (where $(\sigma,\epsilon)\cdot (c\otimes 1)=(\sigma c)\otimes \epsilon$).
Therefore,
\[
  W \;\cong\; \left(\bigoplus_{j \in J_+} C_j \otimes \mathbf{1}\right) \;\oplus\; \left(\bigoplus_{j \in J_-} C_j \otimes \mathrm{sgn}\right).
\]
Letting $R_i$ represent either $\mathbf{1}$ or $\mathrm{sgn}$ as appropriate for each block, we obtain the claimed form $W \cong \bigoplus_i (C_i \otimes R_i)$.
\end{proof}

\subsection{Equivariant Maps on Tensor Products}
\label{app:hom-isomorphism}
To prove the main theorem, we need to characterize equivariant maps between tensor product representations.

\begin{lemma}[Equivariant maps on tensor products]
\label{lem:hom-isomorphism}
Let $G$ and $K$ be groups.
Let $U, A$ be finite-dimensional real representations of $G$, and let $V, B$ be finite-dimensional real representations of $K$.
We regard the tensor products $U\otimes V$ and $A\otimes B$ as representations of the product group $G\times K$ via the component-wise action:
\[
(g,k)\cdot(u\otimes v) = (g\cdot u)\otimes(k\cdot v), \quad
(g,k)\cdot(a\otimes b) = (g\cdot a)\otimes(k\cdot b).
\]
Then there is a natural linear isomorphism
\[
\mathrm{Hom}_G(U,A) \otimes \mathrm{Hom}_K(V,B)
\;\cong\;
\mathrm{Hom}_{G\times K}(U\otimes V,\ A\otimes B).
\]
\end{lemma}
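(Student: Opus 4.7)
The plan is to reduce the statement to a general identity about invariants of tensor products of representations, without ever invoking a group-averaging operator. The backbone is the classical finite-dimensional isomorphism
$$\Xi : \mathrm{Hom}(U,A) \otimes \mathrm{Hom}(V,B) \xrightarrow{\sim} \mathrm{Hom}(U \otimes V,\ A \otimes B),\quad \Xi(f \otimes g)(u \otimes v) = f(u) \otimes g(v),$$
which holds for any finite-dimensional vector spaces. Endowing $\mathrm{Hom}(U,A)$ with the conjugation action $(g \cdot f)(u) := g \cdot f(g^{-1} \cdot u)$, and similarly for $\mathrm{Hom}(V,B)$, makes $\Xi$ equivariant for the natural componentwise $G \times K$-actions on both sides, and one has the standard identifications $\mathrm{Hom}_G(U,A) = \mathrm{Hom}(U,A)^G$ and $\mathrm{Hom}_K(V,B) = \mathrm{Hom}(V,B)^K$.

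The natural candidate map $\Psi$ from the left-hand side of the lemma to the right-hand side is then defined as the composition of the inclusion $\mathrm{Hom}_G(U,A) \otimes \mathrm{Hom}_K(V,B) \hookrightarrow \mathrm{Hom}(U,A) \otimes \mathrm{Hom}(V,B)$ with $\Xi$; a direct check on simple tensors confirms that $\Psi(f \otimes g)$ is indeed $G \times K$-equivariant, so the codomain is as claimed. Injectivity is immediate: the inclusion is injective since tensor products of subspaces of vector spaces embed (vector spaces are flat), and $\Xi$ itself is an isomorphism.

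Surjectivity is the crux. Because $\Xi$ is a $G \times K$-equivariant isomorphism, it restricts to an isomorphism between the spaces of $G \times K$-invariants on both sides, so surjectivity of $\Psi$ reduces to establishing
$$\bigl(\mathrm{Hom}(U,A) \otimes \mathrm{Hom}(V,B)\bigr)^{G \times K} = \mathrm{Hom}(U,A)^G \otimes \mathrm{Hom}(V,B)^K,$$
which I will derive from the more general fact that whenever $M$ is a finite-dimensional $G$-representation and $N$ a finite-dimensional $K$-representation with $G \times K$ acting componentwise on $M \otimes N$, one has $(M \otimes N)^{G \times K} = M^G \otimes N^K$.

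The main obstacle is proving this invariant-tensor identity without the Reynolds averaging that is only available for finite or compact groups. I will avoid averaging via a basis argument: picking any basis $\{n_1, \ldots, n_d\}$ of $N$, every $x \in M \otimes N$ has a unique decomposition $x = \sum_i m_i \otimes n_i$ with $m_i \in M$; imposing invariance under $G \times \{e\}$ and using linear independence of the $n_i$ forces $g \cdot m_i = m_i$ for every $g \in G$ and every $i$, i.e., $m_i \in M^G$. Choosing a basis $\{\mu_1, \ldots, \mu_r\}$ of $M^G$ and rewriting $x = \sum_j \mu_j \otimes \tilde n_j$ with uniquely determined $\tilde n_j \in N$, invariance under $\{e\} \times K$ symmetrically forces $\tilde n_j \in N^K$, so $x \in M^G \otimes N^K$; the reverse inclusion is immediate. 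This argument uses only finite-dimensionality of $M$ and $N$ and no structure on $G$ or $K$, which is precisely what allows the lemma to hold for \emph{arbitrary} groups.
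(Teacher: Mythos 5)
Your proposal is correct and follows essentially the same route as the paper's proof: establish the unrestricted hom-tensor isomorphism, endow the hom-spaces with conjugation actions so that equivariant maps are the fixed points, check the isomorphism is $G\times K$-equivariant, and reduce everything to the identity $(M\otimes N)^{G\times K}=M^G\otimes N^K$, proved by a basis/linear-independence argument valid for arbitrary groups. Your variant of that last step (expanding against a full basis of $N$ first, then a basis of $M^G$) is a slightly cleaner packaging of the paper's ``regroup so the $y_i$ are linearly independent'' step, but the substance is identical.
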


\begin{proof}
Consider first the full vector spaces of linear maps without equivariance constraints.
For any linear maps $f \in \mathrm{Hom}(U, A)$ and $h \in \mathrm{Hom}(V, B)$, we define a map
$f \boxtimes h : U \otimes V \to A \otimes B$ by its action on simple tensors:
\[
(f \boxtimes h)(u \otimes v) := f(u) \otimes h(v) \quad \text{for all } u \in U, v \in V.
\]
This construction induces a natural linear map
\[
\Psi: \mathrm{Hom}(U,A) \otimes \mathrm{Hom}(V,B) \to \mathrm{Hom}(U\otimes V, A\otimes B),
\qquad
\Psi(f \otimes h) = f \boxtimes h.
\]

We first claim that $\Psi$ is an isomorphism. To see this, choose bases for the source and target vector spaces:
\begin{align*}
    (u_i)_{i=1}^m &\text{ of } U, & (a_j)_{j=1}^p &\text{ of } A, \\
    (v_r)_{r=1}^n &\text{ of } V, & (b_s)_{s=1}^q &\text{ of } B.
\end{align*}
We define the elementary linear maps that form the bases for the hom-spaces. For each pair $(j,i)$, let $E_{j,i} \in \mathrm{Hom}(U,A)$ be the unique linear map defined by:
\[
    E_{j,i}(u_{k}) = \begin{cases} a_j & \text{if } k=i, \\ 0 & \text{if } k \neq i. \end{cases}
\]
The set $\{E_{j,i}\}_{j,i}$ forms a basis of $\mathrm{Hom}(U,A)$.

Similarly, for each $(s,r)$, let $F_{s,r} \in \mathrm{Hom}(V,B)$ be defined by:
\[
    F_{s,r}(v_{k}) = \begin{cases} b_s & \text{if } k=r, \\ 0 & \text{if } k \neq r. \end{cases}
\]
The set $\{F_{s,r}\}_{s,r}$ forms a basis of $\mathrm{Hom}(V,B)$.
Consequently, the set of tensor products forms a basis for the domain of $\Psi$:
\[
    \{ E_{j,i} \otimes F_{s,r} \}_{j,i,s,r} \quad \subset \quad \mathrm{Hom}(U,A) \otimes \mathrm{Hom}(V,B).
\]

Now, consider the image of these basis vectors under $\Psi$.
Let $M_{j,i,s,r} := \Psi(E_{j,i} \otimes F_{s,r})$.
By the definition of $\Psi$ (action on simple tensors), $M_{j,i,s,r}$ is the unique linear map $U \otimes V \to A \otimes B$ satisfying:
\[
    M_{j,i,s,r}(u_{i'} \otimes v_{r'}) = E_{j,i}(u_{i'}) \otimes F_{s,r}(v_{r'}) =
    \begin{cases}
        a_j \otimes b_s & \text{if } (i',r') = (i,r), \\
        0 & \text{otherwise}.
    \end{cases}
\]
The collection of maps $\{M_{j,i,s,r}\}_{j,i,s,r}$ is precisely the standard basis of the codomain space $\mathrm{Hom}(U \otimes V, A \otimes B)$.
Since $\Psi$ maps a basis of the domain bijectively onto a basis of the codomain, it is a linear isomorphism.

Our strategy is to identify equivariant maps as the fixed points of a group action on the function space.
Specifically, we equip $\mathrm{Hom}(U, A)$ with a natural $G$-action via conjugation: for $g \in G$ and $f \in \mathrm{Hom}(U, A)$, the transformed map $g \cdot f$ is defined by
\[
(g \cdot f)(u) := g \cdot f(g^{-1} \cdot u).
\]
A map $f$ is $G$-equivariant if and only if it is invariant under this action (i.e., $g \cdot f = f$ for all $g$), hence
\[
\mathrm{Hom}_G(U, A) = \mathrm{Hom}(U, A)^G,
\]
where the superscript $G$ denotes the subspace of fixed points.
Analogously, we define the $K$-action on $\mathrm{Hom}(V, B)$, and the $(G \times K)$-action on $\mathrm{Hom}(U \otimes V, A \otimes B)$: for any $T \in \mathrm{Hom}(U \otimes V, A \otimes B)$, the action is defined by
\[
\bigl((g,k)\cdot T\bigr)(x) := (g,k)\cdot T\bigl((g,k)^{-1}\cdot x\bigr).
\]

We now verify that the isomorphism $\Psi$ respects these group structures.
We equip the domain, $\mathrm{Hom}(U,A) \otimes \mathrm{Hom}(V,B)$, with the component-wise action of $G \times K$:
\[
(g, k) \cdot (f \otimes h) := (g \cdot f) \otimes (k \cdot h).
\]
For any $(g, k) \in G \times K$, a direct computation on simple tensors $u\otimes v$ shows:
\[
\begin{aligned}
\bigl((g, k) \cdot (f \boxtimes h)\bigr)(u \otimes v)
&= (g,k)\cdot (f \boxtimes h)\bigl((g,k)^{-1}\cdot (u\otimes v)\bigr) \\
&= (g,k)\cdot (f \boxtimes h)(g^{-1}u \otimes k^{-1}v) \\
&= (g,k)\cdot \bigl( f(g^{-1}u) \otimes h(k^{-1}v) \bigr) \\
&= (g \cdot f(g^{-1}u)) \otimes (k \cdot h(k^{-1}v)) \\
&= \bigl((g \cdot f) \boxtimes (k \cdot h)\bigr)(u \otimes v).
\end{aligned}
\]
Since this equality holds for all simple tensors $u\otimes v$, which span $U\otimes V$, the linear maps are identical.
Rewriting this using $\Psi$, we see that:
\[
(g, k) \cdot \Psi(f \otimes h) = \Psi\bigl((g, k) \cdot (f \otimes h)\bigr).
\]
Therefore, $\Psi$ restricts to an isomorphism between the respective fixed-point subspaces:
\[
\Psi : \left( \mathrm{Hom}(U, A) \otimes \mathrm{Hom}(V, B) \right)^{G \times K} \xrightarrow{\sim}
\mathrm{Hom}(U \otimes V, A \otimes B)^{G \times K}.
\]

Finally, we identify the invariants of the tensor product.
Let $X$ be any $G$-representation and $Y$ any $K$-representation, and consider $X\otimes Y$ as a $(G\times K)$-representation via
$(g,k)\cdot(x\otimes y)=(g\cdot x)\otimes(k\cdot y)$.
Then $(X\otimes Y)^{G\times K}=X^G\otimes Y^K$.
Indeed, if $w\in (X\otimes Y)^{G\times K}$, then in particular $w\in (X\otimes Y)^{\{e\}\times K}$, so $w\in X\otimes Y^K$.
Write $w=\sum_{i} x_i\otimes y_i$ with $y_i\in Y^K$.
Now, invariance under $G\times\{e\}$ implies
\[
\sum_i (g\cdot x_i)\otimes y_i = \sum_i x_i\otimes y_i \quad \text{for all } g\in G,
\]
and since the $y_i$ lie in $Y^K$, we may choose them to be linearly independent by regrouping terms.
It follows that $g\cdot x_i=x_i$ for each $i$, hence $x_i\in X^G$ and therefore $w\in X^G\otimes Y^K$.
The reverse inclusion $X^G\otimes Y^K\subseteq (X\otimes Y)^{G\times K}$ is immediate, proving the claim.

Applying this with $X=\mathrm{Hom}(U,A)$ and $Y=\mathrm{Hom}(V,B)$ gives
\[
\left( \mathrm{Hom}(U, A) \otimes \mathrm{Hom}(V, B) \right)^{G \times K}
\;=\;
\mathrm{Hom}(U, A)^G \otimes \mathrm{Hom}(V, B)^K
\;=\;
\mathrm{Hom}_G(U,A)\otimes \mathrm{Hom}_K(V,B).
\]
Combining this with the fact that $\Psi$ restricts to an isomorphism on the invariant subspaces, we conclude:
\[
\mathrm{Hom}_G(U, A) \otimes \mathrm{Hom}_K(V, B)
\;\cong\;
\mathrm{Hom}_{G \times K}(U \otimes V, A \otimes B).
\]
\end{proof}

\subsection{Proof of \cref{thm:ctx-rel-factorization}}

\Cref{thm:H-decomp} implies that, with a proper choice of basis, any feature vector $\phi_q$ in a model satisfying SLP can be expressed as a direct sum of components, where each component is a tensor product of an entity-dependent factor (from $C_i$) and a relation-dependent factor (from $R_i$).
We now translate this general decomposition into specific constraints on the query feature map $\phi(h,r,t)$, to prove~\cref{thm:ctx-rel-factorization}.

\begin{reptheorem}{thm:ctx-rel-factorization}[Context-Relation Factorization]
    
\end{reptheorem}

\begin{proof}
Let $V$ with the free vector space with basis $\{e_q : q \in Q\}$.
The group action of $H$ on the set $Q$ naturally induces a linear representation on $V$, defined by permuting the basis vectors:
\[
  (\sigma,\epsilon) \cdot e_q \;:=\; e_{(\sigma,\epsilon)\cdot q}.
\]
Let $\Phi : V \to W$ be the unique linear extension of $\phi$, defined by $\Phi(e_q) := \phi_q$.
We identify $V$ with the tensor product
\[
  V \;\cong\; V_{\mathrm{ctx}} \otimes V_{\mathrm{rel}}
\]
via the basis mapping $e_{(h,r,t)} \mapsto e_{(h,t)} \otimes e_r$.
Under this identification, the $H$-action on $V$ acts component-wise:
\[
  (\sigma,\epsilon)\cdot(e_{(h,t)}\otimes e_r)
  \;:=\; e_{(\sigma(h),\sigma(t))}\otimes e_{\epsilon\cdot r}.
\]

By~\cref{lem:H-equivariant}, $\phi:Q\to W$ is $H$-equivariant with respect to the $H$-action on queries.
Since $\Phi$ is defined linearly on the basis $Q$, this equivariance extends to the entire space $V$, making $\Phi$ an $H$-equivariant map.
Moreover, by~\cref{thm:H-decomp}, one can fix an $H$-equivariant isomorphism $W \cong \bigoplus_i (C_i \otimes R_i)$, and let $\pi_i: W \to C_i \otimes R_i$ be the corresponding projection.
Define the component map $\Phi_i := \pi_i \circ \Phi$.
Since both $\Phi$ and $\pi_i$ are equivariant, $\Phi_i$ belongs to the space
\[
  \mathrm{Hom}_{H}(V_{\mathrm{ctx}}\otimes V_{\mathrm{rel}}, C_i\otimes R_i)
  \;\cong\;
  \mathrm{Hom}_{G_E}(V_{\mathrm{ctx}}, C_i) \otimes \mathrm{Hom}_{\mathbb{Z}_2}(V_{\mathrm{rel}}, R_i),
\]
where the isomorphism is established by~\cref{lem:hom-isomorphism}.
Thus, $\Phi_i$ is an element of a tensor product space, implying it can be written as a finite sum of pure tensors:
\[
  \Phi_i \;=\; \sum_{k=1}^{m_i} U_{i,k}\otimes S_{i,k},
\]
where $U_{i,k}:V_{\mathrm{ctx}}\to C_i$ is $G_E$-equivariant and $S_{i,k}:V_{\mathrm{rel}}\to R_i$ is $\mathbb{Z}_2$-equivariant.
Defining the embeddings $u_{i,k}(h,t) := U_{i,k}(e_{(h,t)})$ and $v_{i,k}(r) := S_{i,k}(e_r)$, the overall feature map decomposes as:
\[
  \phi(h,r,t)
  \;=\;
  \bigoplus_i \left( \sum_{k=1}^{m_i} u_{i,k}(h,t) \otimes v_{i,k}(r) \right).
\]

It remains to show that negation acts by sign on the relation embeddings.
Let $\epsilon \in \mathbb{Z}_2$ be the negation element.
The $H$-action on the feature space decomposes block-wise. On the $i$-th block $C_i \otimes R_i$, the action is component-wise:
\[
  (\mathrm{id}, \epsilon) \cdot (u \otimes v) \;=\; (\mathrm{id} \cdot u) \otimes (\epsilon \cdot v).
\]
Since $R_i$ is a real irreducible representation of $\mathbb{Z}_2$, it is one-dimensional, so $\epsilon$ acts on $R_i$ as a scalar $\eta_i \in \{+1, -1\}$.
Crucially, because the first component of the group element is the identity, the embedding $u_{i,k}$ remains unchanged.
Thus, the action on the sum is:
\begin{align*}
  \phi_i(h,\neg r, t)
  &\;=\; (\mathrm{id}, \epsilon) \cdot \sum_{k=1}^{m_i} \bigl( u_{i,k}(h,t) \otimes v_{i,k}(r) \bigr) \\
  &\;=\; \eta_i \sum_{k=1}^{m_i} \bigl( u_{i,k}(h,t) \otimes v_{i,k}(r) \bigr).
\end{align*}
The SLP condition requires $\phi(h,\neg r, t) = -\phi(h,r,t)$ for all queries.
Comparing this with the equation above, we see that for any block $i$ where the feature map is not identically zero, we must have $\eta_i = -1$.
(If $\phi_i \equiv 0$, the SLP constraint is satisfied on this block for any $\eta_i$, hence the sign choice is immaterial.)
Consequently, in all cases, the relation embeddings must satisfy the sign-flip property:
\[
  v_{i,k}(\neg r) \;=\; -v_{i,k}(r).
\]
Thus, every contributing relation component transforms under negation by the sign representation.
\end{proof}

\section{Proof of \cref{thm:sym-alt-alignment}}
\label{app:proof-thm3}

In this section, we provide the proof of \cref{thm:sym-alt-alignment}.
We first prove the lemma below, which will play a crucial role in proving~\cref{thm:sym-alt-alignment}.

\begin{lemma}[Converse alignment in Hom-spaces]
\label{lem:lifting-rev}
Fix $i$ and an $H$-equivariant projection $\pi_i:W\to W_i\cong C_i\otimes R_i$.
Let $\Phi_i:=\pi_i\circ\Phi\in \mathrm{Hom}_H(V_{\mathrm{ctx}}\otimes V_{\mathrm{rel}},\,C_i\otimes R_i)$, and
identify this space with $\mathcal A_i\otimes\mathcal B_i$ via~\cref{lem:hom-isomorphism}, where
$\mathcal A_i=\mathrm{Hom}_{G_E}(V_{\mathrm{ctx}},C_i)$ and $\mathcal B_i=\mathrm{Hom}_{\mathbb Z_2}(V_{\mathrm{rel}},R_i)$.
Let $P_{\mathrm{pair}}: V_{\mathrm{ctx}} \to V_{\mathrm{ctx}}$ and $P_{\mathrm{rel}}: V_{\mathrm{rel}} \to V_{\mathrm{rel}}$ be the linear maps defined by their action on the basis vectors:
\[
  P_{\mathrm{pair}}(e_{(h,t)}) := e_{(t,h)}
  \qquad \text{and} \qquad
  P_{\mathrm{rel}}(e_r) := e_{r^\smile}.
\]
Note that the converse operation commutes with negation (i.e., $(\neg r)^\smile = \neg(r^\smile)$), so $P_{\mathrm{rel}}$ is $\mathbb Z_2$-equivariant.
Define involutions
$\mathcal P_{\mathrm{pair}}(U)=U\circ P_{\mathrm{pair}}$ on $\mathcal A_i$ and
$\mathcal P_{\mathrm{rel}}(S)=S\circ P_{\mathrm{rel}}$ on $\mathcal B_i$.
If $\phi(t,r^\smile,h)=\phi(h,r,t)$ for all $(h,r,t) \in Q$, then
\[
(\mathcal P_{\mathrm{pair}}\otimes \mathcal P_{\mathrm{rel}})(\Phi_i)=\Phi_i.
\]
\end{lemma}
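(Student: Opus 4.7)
The plan is to translate the symbolic converse symmetry into an algebraic identity on the tensor-factor side, using the isomorphism $\Psi$ constructed in the proof of \cref{lem:hom-isomorphism}, which (when restricted to equivariant maps) realizes $\mathcal A_i\otimes\mathcal B_i\cong\mathrm{Hom}_H(V_{\mathrm{ctx}}\otimes V_{\mathrm{rel}}, C_i\otimes R_i)$.

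First I assemble the joint converse operator $P := P_{\mathrm{pair}}\otimes P_{\mathrm{rel}}$ on $V_{\mathrm{ctx}}\otimes V_{\mathrm{rel}}$, which acts on basis vectors by $P(e_{(h,t)}\otimes e_r)= e_{(t,h)}\otimes e_{r^\smile}$. This is exactly the linearization of $\mathrm{rev}(h,r,t)=(t,r^\smile,h)$, so the converse-invariance hypothesis $\phi(t,r^\smile,h)=\phi(h,r,t)$ rewrites as $\Phi\circ P=\Phi$ on basis elements, hence on the whole space by linearity. Composing with the $H$-equivariant projection $\pi_i$ yields the working identity $\Phi_i\circ P=\Phi_i$.

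Next I verify that $P$ respects the equivariance structure on each factor. The map $P_{\mathrm{pair}}$ is $G_E$-equivariant because $G_E$ acts diagonally on entity pairs, and $P_{\mathrm{rel}}$ is $\mathbb Z_2$-equivariant by the compatibility $(\neg r)^\smile=\neg(r^\smile)$ noted in the statement. Consequently precomposition with $P_{\mathrm{pair}}$ preserves $\mathcal A_i$ and precomposition with $P_{\mathrm{rel}}$ preserves $\mathcal B_i$, so $\mathcal P_{\mathrm{pair}}$ and $\mathcal P_{\mathrm{rel}}$ descend to well-defined involutions on the respective equivariant Hom-spaces (they are involutions because $P_{\mathrm{pair}}^2=\mathrm{id}$ and $P_{\mathrm{rel}}^2=\mathrm{id}$).

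Finally I transport the identity through $\Psi$. On a pure tensor $U\otimes S\in\mathcal A_i\otimes\mathcal B_i$, evaluating on a simple element $u\otimes v$ gives
\[
\Psi(U\otimes S)\circ P(u\otimes v) \;=\; U(P_{\mathrm{pair}} u)\otimes S(P_{\mathrm{rel}} v) \;=\; \Psi\bigl(\mathcal P_{\mathrm{pair}}(U)\otimes \mathcal P_{\mathrm{rel}}(S)\bigr)(u\otimes v),
\]
and extending by bilinearity yields the intertwining $\Psi\circ(\mathcal P_{\mathrm{pair}}\otimes \mathcal P_{\mathrm{rel}}) = \bigl((\,\cdot\,)\circ P\bigr)\circ \Psi$ on all of $\mathcal A_i\otimes\mathcal B_i$. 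Applying $\Psi^{-1}$ to both sides of $\Phi_i\circ P=\Phi_i$ then delivers the claim $(\mathcal P_{\mathrm{pair}}\otimes \mathcal P_{\mathrm{rel}})(\Phi_i)=\Phi_i$ under the identification $\Phi_i\leftrightarrow\Psi^{-1}(\Phi_i)$. The main obstacle is this intertwining step: one must verify that precomposition with a tensor-product endomorphism on the source corresponds under $\Psi$ to the tensor of precompositions on the factor side. On pure tensors this is a direct unpacking of $\Psi$, but the linear extension only makes sense after the previous step has confirmed that $\mathcal P_{\mathrm{pair}}\otimes\mathcal P_{\mathrm{rel}}$ preserves the equivariant subspace $\mathcal A_i\otimes\mathcal B_i$.
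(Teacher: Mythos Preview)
Your proposal is correct and follows essentially the same approach as the paper: both arguments reduce converse invariance to $\Phi_i\circ(P_{\mathrm{pair}}\otimes P_{\mathrm{rel}})=\Phi_i$, verify that precomposition by $P_{\mathrm{pair}}$ and $P_{\mathrm{rel}}$ preserves the equivariant Hom-spaces, and then check on pure tensors that precomposition by $P_{\mathrm{pair}}\otimes P_{\mathrm{rel}}$ corresponds under the isomorphism $\Psi$ of \cref{lem:hom-isomorphism} to $\mathcal P_{\mathrm{pair}}\otimes\mathcal P_{\mathrm{rel}}$. Your explicit use of $\Psi^{-1}$ to transport the identity is just a slightly more notation-heavy phrasing of the paper's ``under the identification'' step.
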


\begin{proof}
First, we verify that $\mathcal{P}_{\mathrm{pair}}$ and $\mathcal{P}_{\mathrm{rel}}$ are well-defined.
For any $U \in \mathcal{A}_i$ and $g \in G_E$, since $P_{\mathrm{pair}}$ commutes with the $G_E$-action,
\[
  (U \circ P_{\mathrm{pair}})(g \cdot x)
  = U(P_{\mathrm{pair}}(g \cdot x))
  = U(g \cdot P_{\mathrm{pair}}(x))
  = g \cdot (U \circ P_{\mathrm{pair}})(x),
\]
so $\mathcal{P}_{\mathrm{pair}}(U)\in \mathcal{A}_i$. Similarly, $\mathbb{Z}_2$-equivariance of $P_{\mathrm{rel}}$
ensures $\mathcal{P}_{\mathrm{rel}}$ is well-defined on $\mathcal{B}_i$.

Next, since $\phi(t,r^\smile,h)=\phi(h,r,t)$ for all $(h,r,t) \in Q$, $\Phi \circ (P_{\mathrm{pair}} \otimes P_{\mathrm{rel}}) = \Phi$.
Applying $\pi_i$ gives
\[
  \Phi_i \circ (P_{\mathrm{pair}} \otimes P_{\mathrm{rel}})
  = (\pi_i \circ \Phi) \circ (P_{\mathrm{pair}} \otimes P_{\mathrm{rel}})
  = \pi_i \circ \Phi
  = \Phi_i.
\]
Under the identification $\mathcal{A}_i \otimes \mathcal{B}_i \cong \mathrm{Hom}_{H}(V_{\mathrm{ctx}}\otimes V_{\mathrm{rel}}, C_i\otimes R_i)$,
a pure tensor $U \otimes S$ corresponds to the map $x \otimes y \mapsto U(x) \otimes S(y)$.
Precomposing with $P_{\mathrm{pair}}\otimes P_{\mathrm{rel}}$ yields
\[
(x \otimes y) \mapsto U(P_{\mathrm{pair}}(x)) \otimes S(P_{\mathrm{rel}}(y))
= (U\circ P_{\mathrm{pair}})(x)\otimes(S\circ P_{\mathrm{rel}})(y),
\]
which is exactly the map corresponding to $(\mathcal{P}_{\mathrm{pair}}U)\otimes(\mathcal{P}_{\mathrm{rel}}S)$.
Extending linearly shows that precomposition by $P_{\mathrm{pair}}\otimes P_{\mathrm{rel}}$ corresponds to $\mathcal{P}_{\mathrm{pair}} \otimes \mathcal{P}_{\mathrm{rel}}$, hence $(\mathcal{P}_{\mathrm{pair}} \otimes \mathcal{P}_{\mathrm{rel}})(\Phi_i) = \Phi_i$.
\end{proof}

To prove~\cref{thm:sym-alt-alignment}, we restate its general form.

\begin{reptheorem}{thm:sym-alt-alignment}[Symmetric-Antisymmetric Alignment (Formal statement)]
Assume the context-relation factorization from \cref{thm:ctx-rel-factorization} and 
$\phi(t,r^{\smile},h)=\phi(h,r,t)$ for all $(h,r,t)\in Q.$
Then, for each block $i$ (as in \cref{thm:ctx-rel-factorization}), the projected feature component
$\phi_i := \pi_i \circ \phi : Q \to W_i \cong C_i\otimes R_i$ admits a decomposition
\[
  \phi_i(h,r,t)=\phi_i^+(h,r,t)+\phi_i^-(h,r,t),
\]
where each part can be written as a finite sum of context--relation pure tensors
\[
  \phi_i^\pm(h,r,t)=\sum_{k\in I_i^\pm} u_{i,k}^\pm(h,t)\otimes v_{i,k}^\pm(r),
\]
such that every summand satisfies the aligned symmetry relations
\[
  u_{i,k}^\pm(t,h)=\pm\,u_{i,k}^\pm(h,t),
  \qquad
  v_{i,k}^\pm(r^\smile)=\pm\,v_{i,k}^\pm(r).
\]
Consequently,
\[
  \phi(h,r,t)=\bigoplus_i\bigl(\phi_i^+(h,r,t)+\phi_i^-(h,r,t)\bigr)
\]
with the same componentwise symmetry properties on each block.
\end{reptheorem}

\begin{proof}
Fix a block index $i$ and an $H$-equivariant projection $\pi_i:W\to W_i\cong C_i\otimes R_i$,
and write $\Phi_i := \pi_i\circ \Phi \in \mathrm{Hom}_H(V_{\mathrm{ctx}}\otimes V_{\mathrm{rel}},\,C_i\otimes R_i)$.
By \cref{lem:hom-isomorphism}, we identify
\[
  \mathrm{Hom}_H(V_{\mathrm{ctx}}\otimes V_{\mathrm{rel}},\,C_i\otimes R_i)\;\cong\;\mathcal A_i\otimes\mathcal B_i,
\]
where $\mathcal A_i:=\mathrm{Hom}_{G_E}(V_{\mathrm{ctx}},C_i)$ and
$\mathcal B_i:=\mathrm{Hom}_{\mathbb Z_2}(V_{\mathrm{rel}},R_i)$.

Let $\mathcal P_{\mathrm{pair}}$ and $\mathcal P_{\mathrm{rel}}$ be the involutions on $\mathcal A_i$ and $\mathcal B_i$ defined in \cref{lem:lifting-rev}.
Since they are involutions, their eigenvalues are either $+1$ or $-1$.
We write their eigenspace decompositions as
\[
  \mathcal A_i=\mathcal A_i^+\oplus\mathcal A_i^-
  \qquad \text{and} \qquad
  \mathcal B_i=\mathcal B_i^+\oplus\mathcal B_i^-,
\]
where the superscripts $\pm$ denote the eigenspaces corresponding to eigenvalues $\pm 1$, respectively.

Recall from \cref{lem:lifting-rev} that $(\mathcal P_{\mathrm{pair}}\otimes \mathcal P_{\mathrm{rel}})(\Phi_i)=\Phi_i$.
Note that the operator $\mathcal P_{\mathrm{pair}}\otimes \mathcal P_{\mathrm{rel}}$ acts on a pure tensor $U \otimes S \in \mathcal A_i^\sigma \otimes \mathcal B_i^\tau$ (where $\sigma, \tau \in \{+, -\}$) by scalar multiplication with $\sigma \cdot \tau$.
Since $\Phi_i$ is invariant (eigenvalue $+1$), it must lie in the subspace where this product is positive:
\[
  \Phi_i \in (\mathcal A_i^+\otimes\mathcal B_i^+)\oplus(\mathcal A_i^-\otimes\mathcal B_i^-).
\]
Consequently, $\Phi_i$ uniquely decomposes as $\Phi_i = \Phi_i^+ + \Phi_i^-$ with terms in these respective subspaces.

We now expand each $\Phi_i^\pm$ as a finite sum of pure tensors with factors in the corresponding eigenspaces.
Since $V_{\mathrm{ctx}}$ and $V_{\mathrm{rel}}$ are finite-dimensional (in particular, $|E|,|R|<\infty$),
the spaces $\mathcal A_i^\pm$ and $\mathcal B_i^\pm$ are finite-dimensional.
Let $d_+ := \dim \mathcal A_i^+$ and choose a basis $\{U_{i,1}^+,\dots,U_{i,d_+}^+\}$ of $\mathcal A_i^+$.
Consider the linear map
\[
  T^+ : (\mathcal B_i^+)^{d_+} \to \mathcal A_i^+\otimes \mathcal B_i^+,
  \qquad
  T^+(S_1,\dots,S_{d_+}) := \sum_{k=1}^{d_+} U_{i,k}^+\otimes S_k.
\]
This map is surjective. Indeed, any element in the tensor product is a sum of pure tensors $U \otimes S$. Since $\{U_{i,k}^+\}$ is a basis, any $U \in \mathcal A_i^+$ can be written as $\sum_k c_k U_{i,k}^+$. Substituting this expansion into the pure tensors and regrouping terms by $U_{i,k}^+$ shows that any element takes the form $\sum_k U_{i,k}^+ \otimes S_k'$ for some $S_k' \in \mathcal B_i^+$.

Since the dimensions of the domain and codomain coincide:
\[
  \dim (\mathcal B_i^+)^{d_+} = d_+\cdot \dim \mathcal B_i^+ = (\dim \mathcal A_i^+) \cdot (\dim \mathcal B_i^+) = \dim(\mathcal A_i^+\otimes\mathcal B_i^+),
\]
the surjective linear map $T^+$ is an isomorphism.
Therefore, there exist unique maps $S_{i,1}^+,\dots,S_{i,d_+}^+\in \mathcal B_i^+$ such that
\[
  \Phi_i^+ = \sum_{k=1}^{d_+} U_{i,k}^+\otimes S_{i,k}^+.
\]
Set $I_i^+ := \{1,\dots,d_+\}$.
Analogously, we obtain unique maps $S_{i,1}^-,\dots,S_{i,d_-}^-\in \mathcal B_i^-$ such that
\[
  \Phi_i^- = \sum_{k=1}^{d_-} U_{i,k}^-\otimes S_{i,k}^-.
\]
Set $I_i^- := \{1,\dots,d_-\}$.

Now, define the projected feature maps by evaluation on basis tensors:
\[
  \phi_i^\pm(h,r,t):=\Phi_i^\pm\bigl(e_{(h,t)}\otimes e_r\bigr),
  \qquad
  u_{i,k}^\pm(h,t):=U_{i,k}^\pm(e_{(h,t)}),
  \qquad
  v_{i,k}^\pm(r):=S_{i,k}^\pm(e_r).
\]
Under the identification of \cref{lem:hom-isomorphism}, each pure tensor $U_{i,k}^\pm\otimes S_{i,k}^\pm$
acts by $(x\otimes y)\mapsto U_{i,k}^\pm(x)\otimes S_{i,k}^\pm(y)$, hence
\[
  \phi_i^\pm(h,r,t)=\sum_{k\in I_i^\pm} u_{i,k}^\pm(h,t)\otimes v_{i,k}^\pm(r),
\]
and $\phi_i=\phi_i^+ + \phi_i^-$ since $\Phi_i=\Phi_i^+ + \Phi_i^-$.
Finally, since $U_{i,k}^\pm\in \mathcal A_i^\pm$ means $U_{i,k}^\pm\circ P_{\mathrm{pair}}=\pm U_{i,k}^\pm$,
evaluating at $e_{(h,t)}$ gives $u_{i,k}^\pm(t,h)=\pm u_{i,k}^\pm(h,t)$.
Likewise $S_{i,k}^\pm\in \mathcal B_i^\pm$ means $S_{i,k}^\pm\circ P_{\mathrm{rel}}=\pm S_{i,k}^\pm$,
and evaluating at $e_r$ gives $v_{i,k}^\pm(r^\smile)=\pm v_{i,k}^\pm(r)$.
Summing over $i$ yields the global decomposition.
\end{proof}

\section{Proof of~\cref{lem:bilinearity}}
\label{app:bilinearity}

In this section, we provide the proof of~\cref{lem:bilinearity}.

\begin{replemma}{lem:bilinearity}[Kernel Stability Yields Bilinearity]
    
\end{replemma}

\begin{proof}
Fix $q\in Q^\wedge$.
Let $V^\wedge$ be the free real vector space with basis $\{e_p : p\in Q^\wedge\}$, and let
$\Phi:V^\wedge\to W^\wedge$ be the linear map defined on basis vectors by $\Phi(e_p)=\phi_p$.
Define a linear map $P_{\wedge,q}:V^\wedge\to V^\wedge$ by its action on basis:
\[
  P_{\wedge,q}(e_p):= e_{p\wedge q}
  \qquad (p\in Q^\wedge).
\]

First, we claim that $P_{\wedge,q}$ induces a well-defined linear operator
$L_{\wedge}(q)\in\mathrm{End}(W^\wedge)$ satisfying
\[
  L_{\wedge}(q)\,\phi_p = \phi_{p\wedge q}
  \qquad \forall\,p\in Q^\wedge.
\]
For any $w\in W^\wedge$, since
$W^\wedge=\mathrm{span}\{\phi_p:p\in Q^\wedge\}$, we may choose $x\in V^\wedge$ with $\Phi(x)=w$, and define
\[
  L_{\wedge}(q)\,w := \Phi\bigl(P_{\wedge,q}x\bigr).
\]
It remains to check that this does not depend on the choice of $x$.
If $x'$ is another preimage of $w$, then $x-x'\in\ker\Phi$.
By \cref{ass:kernel-stability-conj}, we have
$P_{\wedge,q}(\ker\Phi)\subseteq \ker\Phi$, hence
$\Phi(P_{\wedge,q}(x-x'))=0$, i.e.,
$\Phi(P_{\wedge,q}x)=\Phi(P_{\wedge,q}x')$.
Thus $L_{\wedge}(q)$ is well-defined.
Linearity follows from linearity of $\Phi$ and $P_{\wedge,q}$.
Applying the definition to $w=\phi_p=\Phi(e_p)$ gives
\[
  L_{\wedge}(q)\,\phi_p
  =
  \Phi\bigl(P_{\wedge,q}e_p\bigr)
  =
  \Phi(e_{p\wedge q})
  =
  \phi_{p\wedge q},
\]
as claimed.

We now construct the map $\tilde{F}: W^\wedge \times W^\wedge \to W^\wedge$.
Since $\{\phi_p : p \in Q^\wedge\}$ spans $W^\wedge$, any vector $v \in W^\wedge$ can be decomposed as a finite linear combination $v = \sum_i c_i \phi_{q_i}$.
For any $u \in W^\wedge$, we define $\tilde{F}$ linear in the second argument by using the operators $L_{\wedge}(q_i)$:
\[
  \tilde{F}(u, v) \;:=\; \sum_i c_i L_{\wedge}(q_i)\,u.
\]
To ensure the well-definedness of $ \tilde{F}(u, v)$, we must check that this definition is independent of the decomposition of $v$.
Suppose $\sum_i c_i \phi_{q_i} = 0$, i.e., the vector $y = \sum_i c_i e_{q_i}$ lies in $\ker \Phi$.
We check the value of the map for any basis vector $u=\phi_p$ ($p \in Q^\wedge$):
\[
  \sum_i c_i L_{\wedge}(q_i)\,\phi_p \;=\; \sum_i c_i \phi_{p \wedge q_i}.
\]
By commutativity of conjunction on realized inputs (equivalently, by the assumed symmetry of $F$),
we have $\phi_{p\wedge q_i}=\phi_{q_i\wedge p}$ for all $i$.
Thus,
\[
  \sum_i c_i \phi_{p \wedge q_i}
  \;=\; \sum_i c_i \phi_{q_i \wedge p}
  \;=\; \Phi\left(\sum_i c_i e_{q_i \wedge p}\right)
  \;=\; \Phi\bigl(P_{\wedge, p}(y)\bigr).
\]
Since $y \in \ker \Phi$, by \cref{ass:kernel-stability-conj}, we have $P_{\wedge, p}(y) \in \ker \Phi$, so $\Phi(P_{\wedge, p}(y)) = 0$.
Thus, $\sum_i c_i L_{\wedge}(q_i)\phi_p = 0$ for all $p \in Q^\wedge$.
Since $\{\phi_p : p \in Q^\wedge\}$ spans $W^\wedge$, this implies $\sum_i c_i L_{\wedge}(q_i)u = 0$ for all $u \in W^\wedge$.
Therefore, if $v=\sum_i c_i\phi_{q_i}=\sum_j d_j\phi_{r_j}$ are two decompositions,
then $\sum_i c_i\phi_{q_i}-\sum_j d_j\phi_{r_j}=0$ implies
\[
  \sum_i c_iL_{\wedge}(q_i)u \;=\; \sum_j d_jL_{\wedge}(r_j)u
  \qquad \forall\,u\in W^\wedge,
\]
so $\tilde{F}(u,v)$ is independent of the chosen decomposition of $v$.

By construction, $\tilde{F}$ is linear in the second argument.
Also, since each $L_{\wedge}(q_i)$ is linear, $\tilde{F}$ is linear in the first argument.
Finally, for realized inputs, $\tilde{F}(\phi_p, \phi_q) = L_{\wedge}(q)\phi_p = \phi_{p \wedge q}$.
Thus, $\tilde{F}$ is the unique bilinear extension.
\end{proof}

\section{Robustness to Approximate Negation}
\label{app:approx-negation}

In Theorem 3, negation equivariance was assumed exactly, i.e., $\phi_{\neg p}=-\phi_p$.
In this section, we demonstrate a relaxed version showing that the obstruction does not disappear under a small violation of exact negation equivariance.

Let $u:=\phi_p$ and suppose that negation is $\varepsilon$-approximately represented by a sign flip:
\[
    \phi_{\neg p}=-u+\delta,
    \qquad
    \|\delta\|\le \varepsilon \|u\|.
\]
Assume that $\tilde F:W^\wedge\times W^\wedge\to W^\wedge$ is the bilinear conjunction operator from \cref{lem:bilinearity}, and let
\[
    M := \|\tilde F\|_{\mathrm{op}}
    =
    \sup_{\|a\|=\|b\|=1}\|\tilde F(a,b)\|.
\]
For simplicity, assume $\|u\|=1$.
By idempotence, $\tilde F(u,u)=u$ and
\[
    \tilde F(\phi_{\neg p},\phi_{\neg p})=\phi_{\neg p}=-u+\delta.
\]
On the other hand, by bilinearity,
\[
\begin{aligned}
    \tilde F(\phi_{\neg p},\phi_{\neg p})
    &= \tilde F(-u+\delta,-u+\delta) \\
    &= \tilde F(u,u)-\tilde F(u,\delta)-\tilde F(\delta,u)+\tilde F(\delta,\delta) \\
    &= u-\tilde F(u,\delta)-\tilde F(\delta,u)+\tilde F(\delta,\delta).
\end{aligned}
\]
Equating the two expressions gives
\[
    2u
    =
    \delta+\tilde F(u,\delta)+\tilde F(\delta,u)-\tilde F(\delta,\delta).
\]
Taking norms and using the definition of $M$ yields
\[
\begin{aligned}
    2
    &\le
    \|\delta\|+\|\tilde F(u,\delta)\|+\|\tilde F(\delta,u)\|+\|\tilde F(\delta,\delta)\| \\
    &\le
    \varepsilon + 2M\varepsilon + M\varepsilon^2
    =
    \varepsilon(1+2M+M\varepsilon).
\end{aligned}
\]
Thus, approximate negation can coexist with bilinear idempotent conjunction only if
\[
    2 \le \varepsilon(1+2M+M\varepsilon).
\]
In particular, when the bilinear conjunction is normalized so that $M=1$, this requires
\[
    \varepsilon \ge \frac{\sqrt{17}-3}{2} \approx 0.56.
\]
Therefore, the exact collapse in \cref{thm:impossibility} is robust in the sense that small perturbations of negation equivariance are still incompatible with a bounded bilinear idempotent conjunction.


\end{document}